%% file: neurips/neurips_2026.tex
\documentclass{article}

\usepackage[preprint]{neurips/neurips_2026}

\usepackage[utf8]{inputenc} 
\usepackage[T1]{fontenc}    
\usepackage{hyperref}       
\usepackage{url}            
\usepackage{booktabs}       
\usepackage{amsfonts}       
\usepackage{nicefrac}       
\usepackage[patch=none]{microtype}
\usepackage{xcolor}         

\usepackage{amsmath}
\usepackage{amssymb}
\usepackage{mathtools}
\usepackage{amsthm}

\usepackage{multirow}

\usepackage[capitalize]{cleveref}

\usepackage{subcaption}
\input{neurips/macros}

\theoremstyle{plain}
\newtheorem{theorem}{Theorem}[section]
\usepackage[textsize=tiny]{todonotes}
\ifdefined\showcomments
\newcommand{\amirg}[1]{\textcolor{red}{AG:{#1}}}
\newcommand{\ym}[1]{\textcolor{blue}{YM:{#1}}}
\newcommand{\cm}[1]{\textcolor{red}{CM:{#1}}}
\newcommand{\hk}[1]{\textcolor{red}{HK: {#1}}}
\newcommand{\tk}[1]{\textcolor{magenta}{TK: {#1}}}
\else
  \newcommand{\amirg}[1]{}
  \newcommand{\cm}[1]{}
  \newcommand{\ym}[1]{}
  \newcommand{\hk}[1]{}
  \newcommand{\tk}[1]{}
\fi

\hypersetup{
  breaklinks   = true, 
  colorlinks   = true, 
  urlcolor     = blue, 
  linkcolor    = blue, 
  citecolor    = blue 
}

\title{Cost-Aware Learning}

\author{%
  Clara Mohri\thanks{Work done while Clara Mohri was a Student Researcher at Google Research.  Correspondence to: \texttt{cmohri@harvard.edu}.}\\
  Kempner Institute \\
  Harvard University \\
  \And
  Amir Globerson \\
  Google Research \\
  Tel Aviv University \\
  \And
  Haim Kaplan \\
  Google Research \\
  Tel Aviv University \\
  \AND
  Tomer Koren \\
  Google Research \\
  Tel Aviv University \\
  \And
  Yishay Mansour \\
  Google Research \\
  Tel Aviv University \\
}

\begin{document}

\maketitle

\begin{abstract}
We consider the problem of Cost-Aware Learning, where sampling different components of a finite-sum objective incurs different costs. The objective is to reach a target error while minimizing the total cost. We propose Cost-Aware SGD, which uses a distribution based on gradient norms and costs to sample components. We provide a thorough analysis of this algorithm, including cost-improvement bounds over baselines, a characterization of distribution proxy sub-optimality, and a  lower bound. We apply our theoretical insights to reinforcement learning with language models, where the computational cost of sequence-level policy gradients varies with length. We find that the advantage magnitude serves as a high-fidelity proxy for gradient norms, and use this to introduce Cost-Aware GRPO. Empirical results on 1.5B, 4B, and 8B LLMs demonstrate that this algorithm significantly reduces the tokens used in policy optimization while matching or exceeding baseline accuracy.
\end{abstract}

\input{neurips/sections/intro}
\input{neurips/sections/related}
\input{neurips/sections/theory}

\input{neurips/sections/experiments}

\newpage
\bibliography{neurips/bibliography}
\bibliographystyle{plainnat}


\appendix
\onecolumn
\clearpage
\onecolumn

\makeatletter
\def\addcontentsline#1#2#3{%
  \addtocontents{#1}{\protect\contentsline{#2}{#3}{\thepage}{page.\thepage}}%
}

\let\old@sect\@sect
\def\@sect#1#2#3#4#5#6[#7]#8{%
  \phantomsection 
  \old@sect{#1}{#2}{#3}{#4}{#5}{#6}[#7]{#8}%
}

\let\old@ssect\@ssect
\def\@ssect#1#2#3#4#5{%
  \phantomsection
  \old@ssect{#1}{#2}{#3}{#4}{#5}%
}
\makeatother

\setcounter{tocdepth}{2}

\begingroup
\setlength{\parskip}{0pt}
\setlength{\baselineskip}{0pt}
\renewcommand{\baselinestretch}{1.0}
\tableofcontents
\endgroup
\newpage

\input{neurips/appendix/related_work}
\input{neurips/appendix/algorithms}
\input{neurips/appendix/subset_selection}
\input{neurips/appendix/proofs}

\input{neurips/appendix/lower_bound}
\input{neurips/appendix/theory_vs_practice}

\input{neurips/appendix/experiment_details}



\end{document}

%% file: neurips/macros.tex
\usepackage{amssymb}
\usepackage{amsmath}
\usepackage{latexsym}
\usepackage{graphicx}
\usepackage{color}
\usepackage{graphics}
\usepackage{algorithm}
\usepackage{algorithmic}
\usepackage{natbib}
\usepackage{bbm}
\usepackage{cancel}
\usepackage{setspace}
\usepackage{makeidx}
\usepackage[mathscr]{euscript}
\usepackage{mathtools}

\usepackage[vvarbb]{newtxmath}

\DeclarePairedDelimiter{\abs}{\lvert}{\rvert}

\DeclarePairedDelimiter{\paren}{(}{)}
\DeclarePairedDelimiter{\norm}{\|}{\|}

\DeclareMathOperator*{\E}{\mathbb{E}}

\DeclareMathOperator{\Var}{Var}
\DeclareMathOperator{\Cov}{Cov}

\providecommand{\abs}[1]{\lvert#1\rvert}

\providecommand{\norm}[1]{\lVert#1\rVert}

\newcommand{\cD}{\mathcal{D}}

\newcommand{\cK}{\mathcal{K}}

\newcommand{\e}{\epsilon}

\newcommand{\ignore}[1]{}

\newenvironment{proof*}{\trivlist\item[\hskip\labelsep{\it Proof}{.}]}

\newcommand{\sfp}{{\mathsf p}}

\newcommand{\sfq}{{\mathsf q}}

%% file: neurips/sections/intro.tex
\section{Introduction}

Stochastic gradient descent (SGD)~\citep{nemirovsky1983wiley} theory typically measures convergence in terms of the number of gradient steps required to reach a target error. This implicitly   assumes that gradient evaluation on any data point incurs an identical computational cost. However, in modern language model training, the assumption of uniform cost is not always true. For example, in reinforcement learning for language models~\citep{ouyang2022training, shao2024deepseekmath}, the FLOPs required to compute a policy gradient scale linearly with sequence length, meaning a long reasoning trace can cost an order of magnitude more than a concise response.

Motivated by this computational heterogeneity, we study the problem of \emph{Cost-Aware Learning}: given a finite-sum objective where each component has a known evaluation cost $c_i$, how should one sample gradients to reach $\e$-error at a minimum total cost?  We propose Cost-Aware SGD, which leverages importance sampling \citep{kloek1978bayesian, beygelzimer2009importance} to derive the optimal sampling distribution $p_i^* \propto G_i/\sqrt{c_i}$, where $G_i$ is the Lipschitz constant (gradient norm bound) of component $i$. 

We analyze the cost-improvement over uniform and variance-reduction baselines, and validate this in a synthetic setting. Since exact gradient norms are often unavailable in practice, we characterize the sub-optimality of using a proxy distribution. We establish an information-theoretic lower bound for cost-aware learning, which motivates a subset selection algorithm. Finally, we show that the derived sampling strategy extends beyond convex objectives. 


We apply these insights to the policy optimization phase of GRPO \citep{shao2024deepseekmath}, where sequence length provides a natural cost. A key empirical finding, motivated by the form of the policy gradient, is that the advantage magnitude $\abs{A_i}$ serves as a high-fidelity proxy for $G_i$ based on our sub-optimality metrics. Equipped with this proxy, we introduce Cost-Aware GRPO, which samples model-generated sequences according to $p^*$ with $\abs{A_i}$ in place of $G_i$ during policy optimization. Cost-Aware GRPO significantly reduces the FLOPs used in policy optimization while matching or exceeding baseline performance. Through extensive ablations, we demonstrate robustness to proxy noise, hyperparameter choices, and show that cost-aware sampling generalizes beyond GRPO to the CISPO objective~\citep{chen2025minimax}.

The main contributions of this paper are as follows:
\begin{enumerate}
    \item \textbf{Cost-Aware SGD (\S\ref{sec:importance_sampling}). }We derive the optimal cost-aware sampling distribution for convex and strongly convex finite-sum optimization, analyze its cost-improvement over baselines, characterize the sub-optimality of using a proxy distribution, and establish an information-theoretic lower bound. 
    \item \textbf{Cost-Aware GRPO (\S\ref{sec:ca_grpo}). }We identify $\abs{A_i}$ as a high-fidelity proxy for gradient norms and apply cost-aware sampling to GRPO. Experiments with Qwen2.5-Math-1.5B-Instruct~\citep{yang2024qwen2}, Qwen3-4B Base and Qwen3-8B Base~\citep{yang2025qwen3} demonstrate consistent improvements in token efficiency  without compromising on performance. 
\end{enumerate}

\begin{figure}
    \centering
    \includegraphics[width=0.85\linewidth]{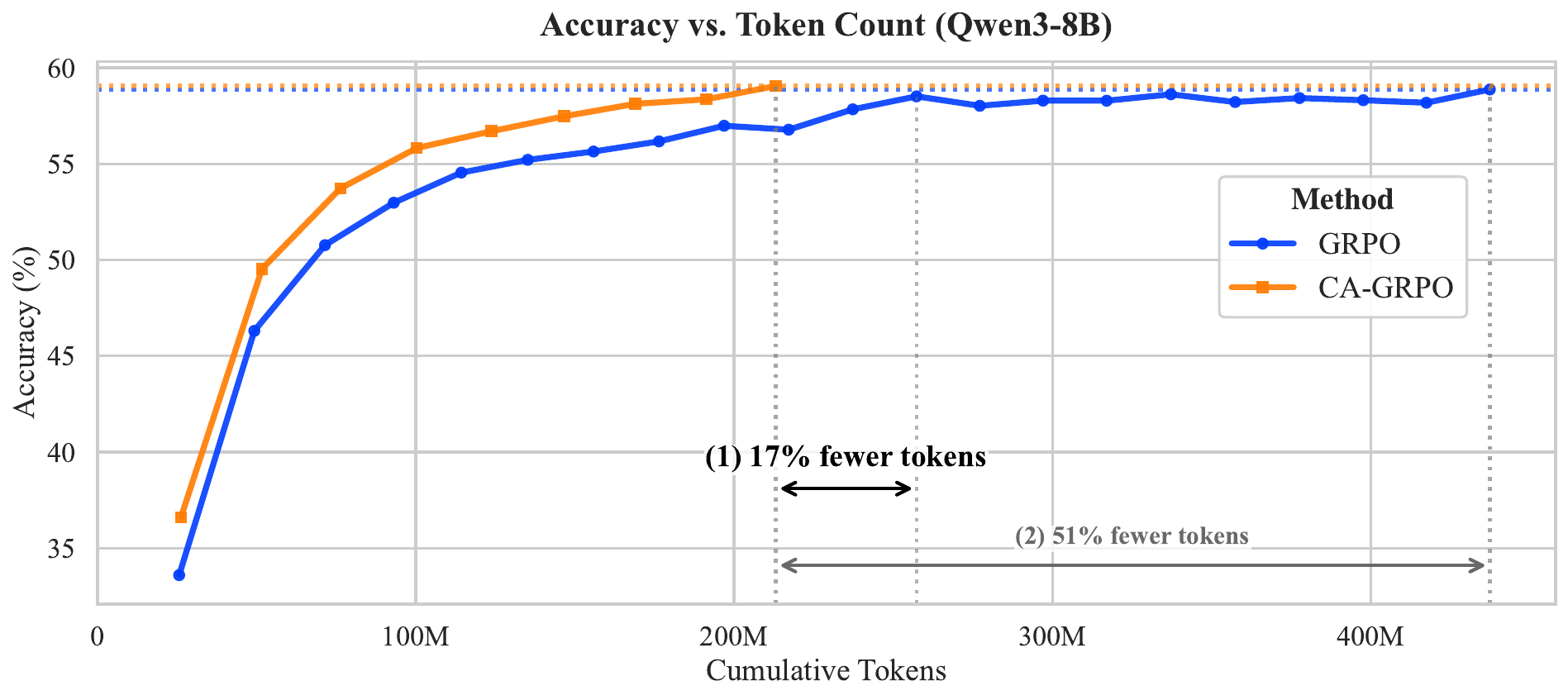}
    \caption{\textbf{Qwen3-8B training with GRPO and Cost-Aware GRPO (CA-GRPO).} We evaluate on AIME1983-2024 (\texttt{pass@1/mean@32}) and report cumulative tokens used in policy gradient computation. We plot every $20$ training steps until peak accuracy is attained for both. CA-GRPO requires: (1) 17\% fewer tokens to reach its absolute peak than GRPO requires to come within 2\% of that performance, (2)  51\% fewer tokens to match or exceed the peak accuracy of GRPO. }
    \label{fig:qwen3-8b-results}
\end{figure}

%% file: neurips/sections/related.tex
\section{Related Work}

We describe the primary related work, and defer the discussion of additional works to Appendix~\ref{app:related_work}.

\paragraph{Importance sampling for SGD.} A significant body of work focuses on accelerating SGD convergence via importance sampling, primarily through variance reduction~\citep{zhao2015stochastic, needell2014stochastic}. A foundational result by \citet{alain2015variance} established that the optimal sampling distribution for variance reduction assigns probabilities proportional to the per-sample gradient norms. In the context of deep learning, where exact gradient norms are expensive to compute, methods have been proposed to approximate these norms using upper bounds or loss values~\citep{katharopoulos2018not, johnson2018training}. However, these approaches operate under a standard optimization framework that treats iterations as the unit of resource consumption. They aim to maximize the information gain \emph{per step}, implicitly assuming uniform computational cost across samples. Our work generalizes this by considering the \emph{cost-aware} setting where the aim is to minimize training cost, leading to a sampling distribution that balances gradient magnitude against sample cost.

In particular, the work of \citet{pmlr-v206-chen23i} shares similarities with our work by also studying a finite-sum setting with varying sample costs. However, our contributions differ significantly in analysis and scope: while their work focuses on asymptotic behavior, we provide finite-time guarantees complemented by a lower bound. Furthermore, we implement this cost-aware framework with language models via Cost-Aware GRPO.

\paragraph{RL for Language Models.} 
While policy gradient methods \citep{NIPS1999_464d828b, kakade2001natural, schulman2015trust} have been used in many settings, the works of \citet{ouyang2022training, dai2023safe} have garnered significant attention for their application to LLM post-training. Various methods such as DPO \citep{rafailov2023direct}, PPO \citep{schulman2017proximal, zheng2023secrets}, and GRPO \citep{shao2024deepseekmath} have been shown to be extremely effective in improving accuracy on standard benchmarks. However, these methods incur substantial computational and memory costs. Several methods aim to improve data efficiency: \citep{wang2025reinforcement, fatemi2025concise, li2025limr, yu2025dapo, xu2025not, zheng2025act}.
\citet{yu2025dapo} introduces, among other contributions, a mechanism to continue resampling from a model until a sufficient amount of non-zero advantage responses have been generated, showing that this data helps the model achieve better performance. 
\citet{xu2025not} attempts to subsample the generations of a model in an effort to train on less data. \citet{zheng2025act} keeps track of the history of various prompts in an effort to remove low-signal prompts. Our work contrasts with these by using importance reweighting and taking into account the computational cost of sequences when making decisions.

%% file: neurips/sections/theory.tex
\section{Problem Formulation} 
\label{sec:problem_formulation}

We consider minimizing a convex function $f$ in the finite-sum setting where $f(x) = \frac{1}{n} \sum_{i=1}^n f_i(x)$. We assume that $f$ is minimized at $x^*$, and that each component $f_i$ is a convex and $G_i$-Lipschitz function. In the cost-aware setting, each component $i$ has an evaluation cost $c_i \geq 0$ for querying its gradient oracle.
Namely, upon querying a particular sample $i$ at a point $x$, the algorithm obtains the exact gradient $\nabla f_i(x)$ and incurs cost $c_i$. 
We assume that the learning algorithm has access to $G_i$ and $c_i$ for all samples $i$. 

For a given $\e > 0$, the objective of a (possibly randomized) cost-aware learning algorithm is to incur the minimum possible cost to obtain $\hat{x}$ such that $\E[f(\hat{x}) - f(x^*)] \leq \e$.

\section{Cost-Aware SGD}
\label{sec:importance_sampling}

In this section, we address the problem presented in Section~\ref{sec:problem_formulation}. We propose Cost-Aware SGD,  a simple variant of standard SGD, which differs only in the use of a non-uniform sampling distribution and importance weighting to account for bias. Pseudocode
is provided in 
Appendix~\ref{alg:cost_weighted_sgd}. While the analysis in this section assumes convex loss functions in order to provide meaningful insights, in Appendix~\ref{app:extensions} we show that the optimal sampling distribution derived does not strictly rely on convexity.

\subsection{Cost Analysis}
\label{subsec:cost_analysis}
Consider a fixed sampling distribution $\sfp \in \Delta_n$ used in the Cost-Aware SGD algorithm. The total cost of the algorithm over $T$ iterations is the random variable $\mathcal{K}_T = \sum_{t=1}^T c_{i_t}$. By the linearity of expectation, the expected total cost is:
\begin{equation}
  \E[\mathcal{K}_T] = \sum_{t=1}^T \E_{i_t \sim \sfp}[c_{i_t}] = T \sum_{i=1}^n p_i c_i = T \cdot C(\sfp),
\end{equation}
where $C(\sfp) = \sum_{i=1}^n p_i c_i$ represents the expected cost per iteration. Our goal is to minimize the total expected cost required to reach a target error $\e$. Let $T(\e, \sfp)$ denote the number of iterations required to achieve expected error $\e$ using distribution $\sfp$. We seek to find $p$ which minimizes:
\begin{equation}
    \cK(\e, \sfp) = T(\e, \sfp) \cdot C(\sfp).
\end{equation}

\paragraph{Case 1: General Convex Functions}
For general convex functions, standard convergence results for SGD \citep{nemirovsky1983wiley, bubeck2015convex} state that with step size $\eta \propto 1/\sqrt{T}$, the expected suboptimality satisfies: $\E[f(\bar{x}_T) - f(x^*)] \le \frac{D \sqrt{S(\sfp)}}{\sqrt{T}}$,
where $S(\sfp) = \E_{i \sim \sfp}[\|\tilde{g}\|^2]$ is the second moment of the gradient estimator, and $D$ is the diameter of the parameter space. Expanding the second moment for importance sampling:
  $S(\sfp) = \sum_{i=1}^n p_i \norm*{ \frac{\nabla f_i(x)}{n p_i} }^2 \leq \frac{1}{n^2} \sum_{i=1}^n \frac{G_i^2}{p_i},$
where $G_i$ is the Lipschitz constant for component $f_i$.
To guarantee error $\e$, we require $T = \frac{D^2 S(\sfp)}{\e^2}$. Substituting this into the cost formula:
\begin{equation}
\label{eq:cost_to_epsilon}
  \cK(\e, \sfp) = \paren*{ \frac{D^2 S(\sfp)}{\e^2} } C(\sfp) \propto S(\sfp) C(\sfp).
\end{equation}

\paragraph{Case 2: Strongly Convex Functions}
For $f$ that is $\mu$-strongly convex, SGD with decaying step size $\eta_t = 1/(\mu t)$ achieves a rate of $\mathcal{O}(S(\sfp) / \mu T)$ \citep{rakhlin2011making}.
The expected total cost under distribution $\sfp$ to attain error $\e$ becomes:
\begin{equation}
\label{eq:strongly_convex_cost_to_e}
  \cK(\e, \sfp) = \paren*{ \frac{4 S(\sfp)}{\mu \e} } C(\sfp) \propto S(\sfp) C(\sfp).
\end{equation}
In both cases, minimizing the total cost is equivalent to minimizing the product $J(\sfp) = S(\sfp)C(\sfp)$.

\subsection{Derivation of the Optimal Distribution}

\begin{theorem}[Optimal Cost-Weighted Distribution]
\label{thm:p_star}
The sampling distribution $\sfp^*$ that minimizes $\cK(\epsilon,p)$ is: 
\begin{equation}
\label{eq:p_star_opt}
    p_i^* = \frac{G_i / \sqrt{c_i}}{\sum_{j=1}^n G_j / \sqrt{c_j}}.
\end{equation}
Under this distribution, the expected costs for general convex and $\mu$-strongly convex functions are:
\begin{equation}
\cK_{\text{cvx}}(\e, \sfp^*) = \frac{D^2}{\e^2} \paren*{ \frac1n \sum_{i=1}^n G_i \sqrt{c_i} }^2,
\quad \quad \quad
\cK_{\text{str-cvx}}(\e, \sfp^*) = \frac{4}{\mu \e} \paren*{ \frac1n \sum_{i=1}^n G_i \sqrt{c_i} }^2.
\end{equation}
\end{theorem}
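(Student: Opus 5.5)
By the reduction in Section~\ref{subsec:cost_analysis}, in both the convex and strongly convex cases $\cK(\e,\sfp)$ is a $\sfp$-independent constant times $J(\sfp)=S(\sfp)C(\sfp)$. The constant is $D^2/\e^2$ in the convex case and $4/(\mu\e)$ in the strongly convex case. I would therefore replace $S(\sfp)$ by its upper bound $\frac{1}{n^2}\sum_i G_i^2/p_i$, which is the quantity that actually drives the iteration count, and minimize
\[
J(\sfp)=\frac{1}{n^2}\paren*{\sum_{i=1}^n \frac{G_i^2}{p_i}}\paren*{\sum_{i=1}^n p_i c_i}
\]
over the simplex $\Delta_n$.

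\textbf{Lower bound on $J$.} The key step is a single application of Cauchy--Schwarz. Write $G_i\sqrt{c_i}=\frac{G_i}{\sqrt{p_i}}\cdot\sqrt{p_i c_i}$ and apply Cauchy--Schwarz to the vectors $\bigl(G_i/\sqrt{p_i}\bigr)_i$ and $\bigl(\sqrt{p_ic_i}\bigr)_i$. This gives
\[
\paren*{\sum_i G_i\sqrt{c_i}}^2\le \paren*{\sum_i \frac{G_i^2}{p_i}}\paren*{\sum_i p_ic_i},
\]
so that for every $\sfp\in\Delta_n$,
\[
J(\sfp)\ge \paren*{\frac1n\sum_i G_i\sqrt{c_i}}^2 .
\]
This bound holds uniformly over $\sfp$, so no Lagrangian computation is needed.

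\textbf{Equality at $\sfp^*$.} Cauchy--Schwarz is tight exactly when the two vectors are proportional, i.e.\ $G_i/\sqrt{p_i}\propto\sqrt{p_i c_i}$, equivalently $p_i\propto G_i/\sqrt{c_i}$. Normalizing gives \eqref{eq:p_star_opt}. As a direct check, substitute $p_i^*=\frac{G_i/\sqrt{c_i}}{Z}$ with $Z=\sum_j G_j/\sqrt{c_j}$. Then $\sum_i G_i^2/p_i^*=Z\sum_i G_i\sqrt{c_i}$ and $\sum_i p_i^*c_i=\frac1Z\sum_i G_i\sqrt{c_i}$. Their product is $\paren*{\sum_i G_i\sqrt{c_i}}^2$, so the bound is attained. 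Multiplying by the prefactors from \eqref{eq:cost_to_epsilon} and \eqref{eq:strongly_convex_cost_to_e} yields the stated expressions for $\cK_{\text{cvx}}$ and $\cK_{\text{str-cvx}}$.

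\textbf{Main obstacle.} The only real difficulty is degenerate cases that make $\sfp^*$ ill-defined. If some $c_i=0$ with $G_i>0$, the weight $G_i/\sqrt{c_i}$ is infinite. I would assume $c_i>0$, or treat the zero-cost case as a limit in which the infimum $\paren*{\frac1n\sum_i G_i\sqrt{c_i}}^2$ is approached but not attained. If some $G_i=0$, that component contributes nothing to either $\sum_i G_i^2/p_i$ or $\sum_i G_i\sqrt{c_i}$. Setting $p_i^*=0$ is then consistent, provided we use the convention $0/0=0$ and restrict the Cauchy--Schwarz argument to the support $\{i:G_i>0\}$.

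I would also state explicitly that optimality is with respect to the upper-bound surrogate for $S(\sfp)$. This is the sense in which $\cK(\e,\sfp)$ is defined in Section~\ref{subsec:cost_analysis}.
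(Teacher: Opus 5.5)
Your proof is correct and follows the paper's argument: Cauchy--Schwarz on the vectors $(\sqrt{p_i c_i})_i$ and $(G_i/\sqrt{p_i})_i$, with equality iff $p_i \propto G_i/\sqrt{c_i}$, then substituting $J(\sfp^*)=\bigl(\frac1n\sum_i G_i\sqrt{c_i}\bigr)^2$ into the convex and strongly convex cost formulas. Your direct check at $\sfp^*$ and your handling of the degenerate cases $c_i=0$ or $G_i=0$ (which the paper skips despite allowing $c_i\ge 0$) are useful additions but do not change the route.
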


\begin{proof}
We  minimize the product term $J(\sfp) = S(\sfp)C(\sfp)$.
By the Cauchy-Schwarz inequality, for vectors $u,v\in\mathbb{R}^n$ defined by $u_i = \sqrt{p_i c_i}$ and $v_i = G_i / \sqrt{p_i}$:
\begin{equation}
  \paren*{\sum_{i=1}^n p_i c_i} \paren*{\sum_{i=1}^n \frac{G_i^2}{p_i}}
  = \|u\|^2 \|v\|^2 \geq \langle u, v \rangle^2
  = \paren*{ \sum_{i=1}^n G_i \sqrt{c_i} }^2.
\end{equation}
Equality holds if and only if $u \propto v$, which implies $\sqrt{p_i c_i} \propto G_i / \sqrt{p_i}$, or $p_i \propto G_i / \sqrt{c_i}$.
Substituting the minimum value of the product back into $J(\sfp)$, we obtain $J(\sfp^*) = \frac{1}{n^2} (\sum G_i \sqrt{c_i})^2$.
Substituting $J(\sfp^*)$ into Eq.~\ref{eq:cost_to_epsilon} and Eq.~\ref{eq:strongly_convex_cost_to_e} yields the stated costs.
\end{proof}

\subsection{Cost-Improvement over Baselines}

We compare $\sfp^*$ against two baselines. \textbf{Uniform sampling} sets $(p_\text{unif})_i = 1/n$. The \textbf{variance strategy} ignores costs and samples proportionally to gradient norms: $(p_\text{var})_i = G_i / \sum_j G_j$, which minimizes the variance of the gradient estimator \citep{alain2015variance, zhao2015stochastic, needell2014stochastic} (see Appendix~\ref{app:proof_costblind} for a derivation).

Their expected costs to achieve $\epsilon$ error under general convex losses are:
$\cK(\e, p_\text{unif}) = \frac{D^2}{\e^2n^2} \paren*{\sum_{i = 1}^n G_i^2} \paren*{\sum_{i = 1}^n c_i}$ and $\cK(\e, p_\text{var}) = \frac{D^2}{\e^2n^2} \paren*{\sum_{i = 1}^n G_i} \paren*{\sum_{i = 1}^n G_i c_i}.$   The cost ratios relative to optimal sampling are:
\begin{equation}
    \frac{\cK(\e, \sfp^*)}{\cK_(\e, p_\text{unif})} = \frac{(\sum G_i \sqrt{c_i})^2}{(\sum G_i^2)(\sum c_i)} \le 1, \quad \quad \quad 
    \frac{\cK(\e, \sfp^*)}{\cK_{\text{var}}(\e,p_\text{var} )} = \frac{(\sum G_i \sqrt{c_i})^2}{(\sum G_i) \paren*{\sum_i G_i c_i}} \le 1.
\end{equation}
When all costs are equal, $\cK_\text{opt} = \cK_\text{var}$, but $\cK_\text{opt}$ still improves over $\cK_\text{unif}$ by minimizing variance (see Appendix~\ref{app:ada_unif_comp} for a detailed comparison).  When costs and gradients are negatively correlated, the gains can be dramatic: setting $G_i = 2^{-i/4}$, $c_i = 2^i$ yields $\cK_\text{opt}/\cK_\text{unif} = \Theta(2^{-n/2})$ and $\cK_\text{opt}/\cK_\text{var} = \Theta(2^{-n/4})$. 

\subsection{Synthetic Validation}
\label{subsec:synthetic_unbiased}
We evaluate the sampling strategies on a linear least squares task with $N=3000$ points in dimension $d=50$. The data vectors $a_i$ are scaled such that their norms are bounded by $L=10$. This geometry ensures that the Lipschitz constants $G_i$ are bounded but heterogeneous across the dataset. 
\begin{figure}
    \centering
    \includegraphics[width=0.9 \linewidth]{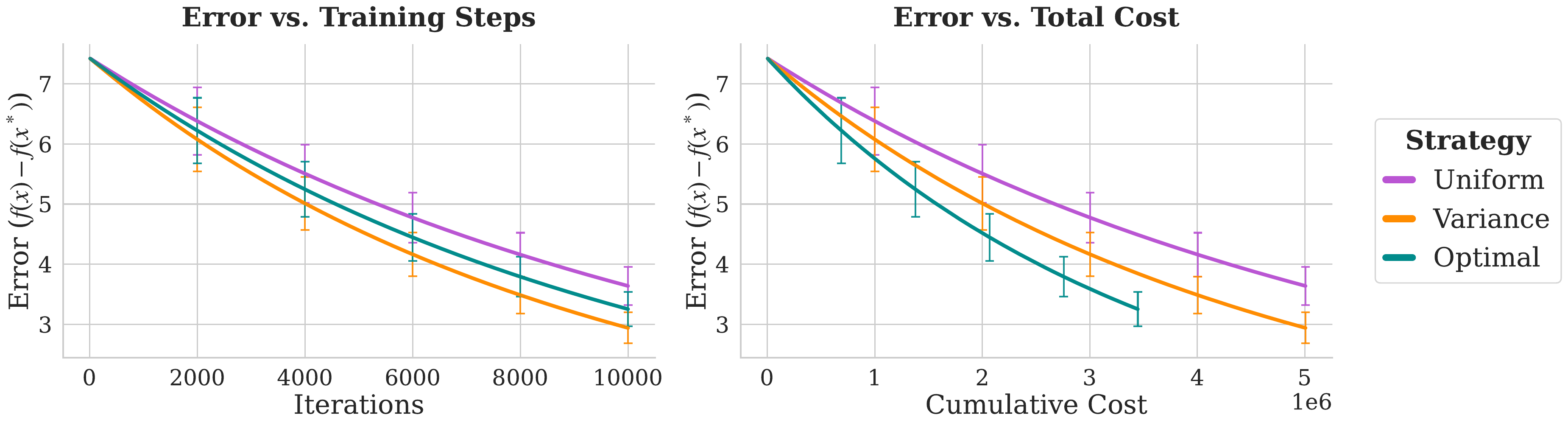}
    \caption{\textbf{Synthetic validation. } We compare the error with the total training steps and the total cost incurred by each sampling strategy. }
\label{fig:comparison_synthetic}
\end{figure}
Evaluation costs $c_i$ are assigned randomly from a uniform distribution between $1$ and $1000$.  We compare three sampling approaches: \textit{Uniform}, \textit{Variance} ($p_i \propto G_i$), and \textit{Optimal} ($p_i \propto G_i / \sqrt{c_i}$). The results are shown in Figure~\ref{fig:comparison_synthetic}. We average over $1000$ trials and report standard deviation error bars.

\textbf{Cost Efficiency:} For a fixed error target, the \textit{Optimal} strategy incurs the least total cost. By weighting samples by the ratio of information to cost, it effectively identifies samples that provide significant variance reduction with a low cost. \textbf{Convergence Rate:} The \textit{Variance} strategy converges the fastest in terms of iteration count. This is expected, as it minimizes the gradient estimator's variance without regard for computational budget.


\subsection{Proxy Sub-Optimality}
\label{subsec:subopt}
In practice, it can be prohibitively costly to compute $G_i$.  As such, one may not have access to the perfect sampling distribution $p^*$; in fact, this is the case for later experiments in this paper. In this subsection, we consider the sub-optimality of using a proxy. We recall that $J(\sfp) = S(\sfp)C(\sfp)$, and that $\cK(\e, \sfp)$ is the expected cost to attain error $\e$ under distribution $\sfp$. In Section~\ref{subsec:cost_analysis}, we showed that for any $\e$,  $\cK(\e, \sfp) \propto J(\sfp)$. $J(p')$ represents how cost scales with error for $p'$. Hence, the quantity $J(p')/J(p^*)$ represents the ratio between the cost incurred by $p'$ and that of $p^*$.

The $\chi^2$-divergence is defined as follows for two distributions $P, Q$: $D_{\chi^2}(P || Q)=\sum \frac{P^2}{Q} -1 $. In Theorem~\ref{thm:distribution_subopt}, we exactly characterize the sub-optimality due to using a different sampling distribution $p'$ in place of $p^*$.

\begin{theorem}[Sub-optimality gap for distribution proxy] 
\label{thm:distribution_subopt}
Let $p'$ be any empirical sampling distribution. We define the cost-biased distribution $\tilde{p}$ corresponding to $p$ as:
$\tilde{p}_i = (p_i c_i)/C(p).$
The sub-optimality gap is:
$J(p')/J(p^*) = 1 + D_{\chi^2}(\tilde{p}^* || \tilde{p}').$
\end{theorem}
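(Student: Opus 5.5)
The plan is a direct computation: write both sides in terms of the quantity $Z := \sum_{j=1}^n G_j\sqrt{c_j}$ and compare. Throughout I assume $c_i>0$ and $p'_i>0$ for every $i$ with $G_i>0$; otherwise $J(p')=\infty$ and the identity holds trivially, with the $\chi^2$-divergence also infinite. I would use the bound form of $S(\sfp)$ from Section~\ref{subsec:cost_analysis}, which gives
\[
J(\sfp)=\frac{1}{n^2}\Big(\sum_i \frac{G_i^2}{p_i}\Big)C(\sfp).
\]
By Theorem~\ref{thm:p_star}, $J(\sfp^*)=Z^2/n^2$.

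\textbf{Step 1: identify $\tilde p^*$.} Since $p^*_i\propto G_i/\sqrt{c_i}$, we have $p^*_i c_i\propto G_i\sqrt{c_i}$. Normalizing gives
\[
\tilde p^*_i=\frac{G_i\sqrt{c_i}}{Z}.
\]
The normalization by $C(p^*)$ cancels automatically, so no explicit value of $C(p^*)$ is needed.

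\textbf{Step 2: compute the $\chi^2$ sum.} Using $\tilde p'_i=p'_i c_i/C(p')$, each term is
\[
\frac{(\tilde p^*_i)^2}{\tilde p'_i}=\frac{G_i^2 c_i}{Z^2}\cdot\frac{C(p')}{p'_i c_i}=\frac{C(p')}{Z^2}\cdot\frac{G_i^2}{p'_i}.
\]
The $c_i$ factors cancel here. Summing over $i$ then gives
\[
\sum_i\frac{(\tilde p^*_i)^2}{\tilde p'_i}=\frac{C(p')\sum_i G_i^2/p'_i}{Z^2}=\frac{n^2 J(p')}{Z^2}=\frac{J(p')}{J(p^*)}.
\]

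\textbf{Step 3: conclude.} By the definition $D_{\chi^2}(P\|Q)=\sum P^2/Q-1$, the sum in Step 2 equals $1+D_{\chi^2}(\tilde p^*\|\tilde p')$, which is the claimed identity.

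There is no real obstacle; the argument is a computation. The only points needing care are the degenerate cases ($c_i=0$ or $p'_i=0$) and using the same upper-bound form of $S$ for both $p'$ and $p^*$, so that the ratio is exact under the paper's definition of $J$. As a sanity check, the identity recovers Theorem~\ref{thm:p_star}: the divergence is nonnegative and vanishes exactly when $\tilde p'=\tilde p^*$, i.e.\ when $p'=p^*$.
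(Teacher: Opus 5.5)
Your proof is correct and is essentially the paper's argument: a direct algebraic computation. The only difference is direction. You start from the $\chi^2$ sum, compute $\tilde p^*_i = G_i\sqrt{c_i}/Z$ explicitly, and then invoke $J(p^*) = Z^2/n^2$ from Theorem~\ref{thm:p_star}, whereas the paper substitutes $G_i = Z_{den}\,p^*_i\sqrt{c_i}$ and $p_i = \tilde p_i C(p)/c_i$ into the ratio $J(p')/J(p^*)$ and simplifies toward the $\chi^2$ sum.
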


To provide further intuition as to when one can use some proxy $G_i'$ in place of $G_i$ for a proxy sampling distribution, we provide Theorem~\ref{thm:proxy_subopt} (Appendix~\ref{app:proofs_subopt}). It implies that, when $G_i$ and $G_i'$ have near-$1$ Pearson correlation, $\E[J(p')] \approx J(p^*)$. The Pearson correlation between random variables $X$ and $Y$ with expected values $\mu_X$, $\mu_Y$, and standard deviations $\sigma_X$, $\sigma_Y$, respectively, is defined as:
$\rho_{X,Y} = \frac{\text{Cov}(X,Y)}{\sigma_X \sigma_Y}.$ Both proofs are deferred to Appendix~\ref{app:proofs_subopt}.

\subsection{Lower Bound}
\label{sec:lower_bounds}

Having established the upper bounds for Cost-Aware SGD, a
natural question is whether it is possible to design a sampling strategy
that achieves a lower total cost. In this section, we provide a lower
bound for the class of convex functions.  


\begin{theorem} \label{thm:lb}
Fix $\epsilon, G > 0$. For any number of components $n \ge G^2/\epsilon^2$, any set of nonnegative query costs $\{c_i\}_{i=1}^n$, and any (possibly randomized) algorithm with an expected error guarantee $\mathbb{E}[F(\hat{x}) - \min F(x)] \le \epsilon$, there exists a convex, $G$-Lipschitz (i.e., with $G_i=G$ for all $i$) finite-sum problem instance over $\mathcal{X} = [-1,1]$ such that the algorithm's expected total query cost is at least:
\begin{align*}
    \Omega\left( \frac{G^2}{\epsilon^2} \left( \frac{1}{n} \sum_{i \in S^*} \sqrt{c_i} \right)^2 \right),
\end{align*}
where $S^* \subseteq [n]$ is any set of components such that following cost-uniformity condition holds:
$(\max_{i \in S^*} \sqrt{c_i})/(\tfrac1n \sum_{j \in S^*} \sqrt{c_j}) \le \frac{G}{40\epsilon}.$
\end{theorem}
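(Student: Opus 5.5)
We prove the bound with a hidden-sign construction plus an information-theoretic argument. Write $m=|S^*|$ and $\bar c = \frac{1}{n}\sum_{j\in S^*}\sqrt{c_j}$.

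\textbf{Construction.} Draw a hidden sign vector $\sigma\in\{\pm1\}^n$ with independent coordinates, $\Pr[\sigma_i=+1]=\tfrac12+\delta$ under hypothesis $H_+$ and $\tfrac12-\delta$ under $H_-$. Take $f_i(x)=G\sigma_i x$ on $\mathcal{X}=[-1,1]$ for $i\in S^*$. For $i\notin S^*$, take $f_i\equiv 0$: these components carry no information, so we may assume without loss of generality that the algorithm never queries them. Then $F(x)=\frac{G}{n}\big(\sum_{i\in S^*}\sigma_i\big)x$, which is convex and $G$-Lipschitz.

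Choose $\delta\asymp \epsilon n/(Gm)$. Then $\E\big[\frac{1}{n}\sum_{i\in S^*}\sigma_i\big]=\pm 2\delta m/n\asymp \pm\epsilon/G$. Any $\hat{x}$ with expected error at most $\epsilon$ must therefore have the correct sign with probability bounded away from $1/2$. The small fluctuation of $\sum_{i\in S^*}\sigma_i$ around its mean only needs to be controlled on average, which is done by standard concentration. This reduction turns optimization into testing $H_+$ versus $H_-$.

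\textbf{Information bound.} Querying component $i$ reveals $\sigma_i$ and costs $c_i$. Repeat queries of the same $i$ reveal nothing new, so the relevant quantity is the set $Q$ of distinct components queried. By a KL / Le Cam argument, distinguishing Bernoulli$(\tfrac12\pm\delta)$ requires $\E|Q|\gtrsim 1/\delta^2\asymp G^2m^2/(\epsilon^2 n^2)$. The budget condition $n\ge G^2/\epsilon^2$ together with the uniformity condition is what makes this required count at most $m$, so the test is feasible in principle and the bound is nontrivial.

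\textbf{Cost bound.} Rather than treating the algorithm's query strategy in general, symmetrize: assign the sign variables to positions via a uniformly random permutation of $S^*$, so the algorithm cannot know which index holds which sign. Then each queried index behaves like a uniform draw, in expectation, and the cost satisfies
\[
\E[\text{cost}] \gtrsim \E|Q|\cdot\frac{1}{m}\sum_{i\in S^*}c_i \;\ge\; \E|Q|\cdot\Big(\frac{1}{m}\sum_{i\in S^*}\sqrt{c_i}\Big)^2 ,
\]
where the last step is Jensen. Since $\frac{1}{m}\sum_{i\in S^*}\sqrt{c_i}=\frac{n}{m}\bar c$, multiplying by $\E|Q|\gtrsim G^2m^2/(\epsilon^2n^2)$ gives
\[
\E[\text{cost}]\gtrsim \frac{G^2 m^2}{\epsilon^2 n^2}\cdot\frac{n^2}{m^2}\,\bar c^{\,2} = \frac{G^2}{\epsilon^2}\,\bar c^{\,2},
\]
which is the claimed bound.

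\textbf{Main obstacle.} The symmetrization step is delicate, because the algorithm knows $c_i$ and is free to query cheap indices first. A uniform permutation does not stop it from querying cheap positions. The fix is to make the signal density proportional to cost: set the per-component bias $\delta_i\propto \sqrt{c_i}$, i.e.\ $\delta_i=\delta\sqrt{c_i}/\bar c$. Component $i$ then contributes KL of order $\delta_i^2$ at cost $c_i$, so information per unit cost is uniform across components and cheap indices confer no advantage. Each bias $\delta_i$ must stay at most $\tfrac12$. Since $\delta_i=\delta\sqrt{c_i}/\bar c$ and $\delta\asymp\epsilon n/(Gm)$, this is exactly the cost-uniformity condition $\max_i\sqrt{c_i}/\bar c\le G/(40\epsilon)$.

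With this choice, the total KL across all queries is $\lesssim \delta^2\,\text{cost}/\bar c^{\,2}$. The mean gap stays proportional to $\epsilon/G$, because $\sum_i\delta_i=\delta\, n$ by the definition of $\bar c$. Requiring the KL to be $\Omega(1)$ then yields $\text{cost}\gtrsim G^2\bar c^{\,2}/\epsilon^2$ directly, bypassing the permutation argument entirely.
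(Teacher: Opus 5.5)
Your final argument is essentially the paper's proof. It sets per-component signal $\delta_i\propto\sqrt{c_i}$ on $S^*$ so that KL per unit cost is uniform, and the admissibility constraint on $\delta_i$ is then exactly the cost-uniformity condition; this is combined with concentration, a distinct-queries chain rule for KL, and Le Cam/Pinsker. The differences are that you use biased random signs with slopes $\pm G$ where the paper uses clipped Gaussian slopes with means $\pm\Delta_i$, $\Delta_i=\alpha\sqrt{c_i}$, and you use $f_i\equiv 0$ off $S^*$; the permutation/Jensen step fails (as you note) and should be dropped. One bookkeeping fix: after switching to $\delta_i=\delta\sqrt{c_i}/\bar c$ you need $\delta\asymp\epsilon/G$ rather than $\epsilon n/(Gm)$, so that $\frac1n\sum_{i\in S^*}\mathbb{E}\sigma_i=2\delta\asymp\epsilon/G$, and you should cap $\delta_i\le\tfrac14$ rather than $\tfrac12$ so the Bernoulli KL is $O(\delta_i^2)$; e.g.\ $\delta=10\epsilon/G$ works with the constant $40$.
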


We defer the proof to Appendix~\ref{app:lb}. The lower bound matches the upper bound from \cref{thm:p_star} in its $\frac{1}{n^2\epsilon^2}$ dependence, but differs in how it aggregates costs: the upper bound scales with $(\sum_{i=1}^n \sqrt{c_i})^2$, while the lower bound scales with $(\sum_{i \in S^*} \sqrt{c_i})^2$. For uniform costs, $S^* = [n]$ and the bounds match up to constants. When costs are highly non-uniform, the gap arises because expensive components are excluded from $S^*$. This motivates combining importance-weighted sampling with \emph{subset selection}: first choosing which components to include, then optimizing sampling probabilities over the selected subset. We develop this approach in Appendix~\ref{app:subset_selection}, where we show that the subset selection problem reduces to a min-cost knapsack problem, for which a polynomial-time 2-approximation exists that simply selects the cheapest components.


%% file: neurips/sections/experiments.tex
\section{Cost-Aware GRPO}
\label{sec:ca_grpo}
We apply our theoretical analysis to the empirical setting of GRPO \citep{shao2024deepseekmath}. In particular, we focus on \emph{reducing the number of FLOPs used by policy optimization}. 

\begin{figure}[t]
    \centering
    
    \begin{subfigure}[b]{0.48\linewidth}
        \centering
        \includegraphics[width=\linewidth]{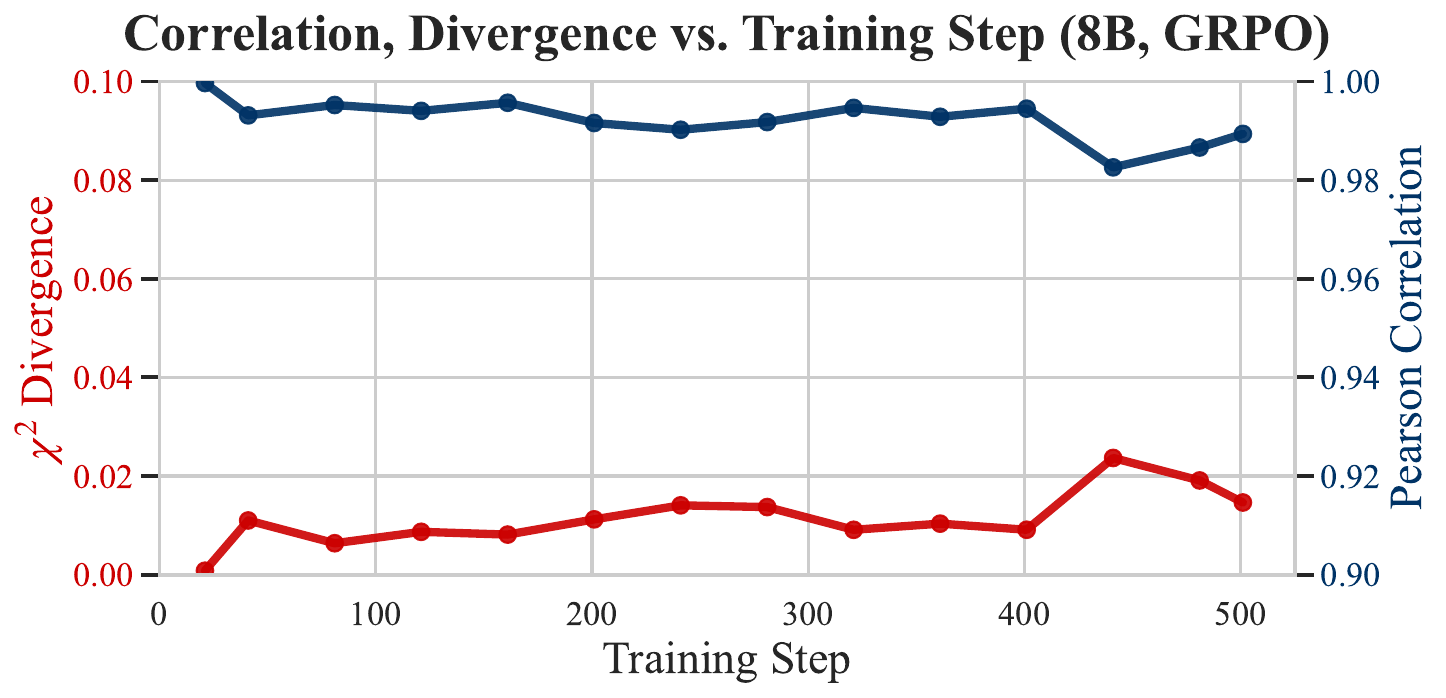}
        \label{fig:qwen3-8b_subopt}
    \end{subfigure}
    \hfill 
    \begin{subfigure}[b]{0.48\linewidth}
        \centering
        \includegraphics[width=\linewidth]{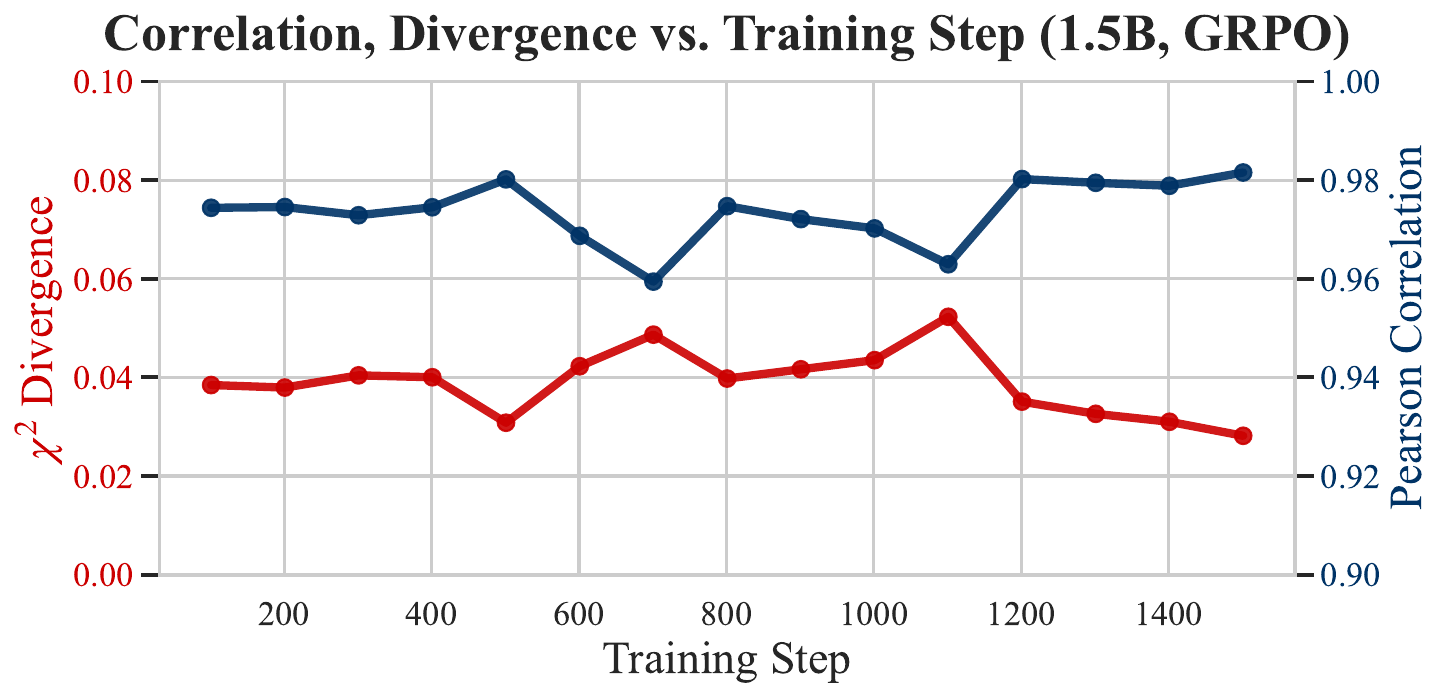}
        \label{fig:qwen1_5_grpo_subopt}
    \end{subfigure}
    \caption{\textbf{Sub-optimality metrics} for using $\abs{A_i}$ in place of $G_i$ for GRPO experiments across two model sizes. The Pearson correlations are near $1$. The cost-biased $\chi^2$-divergence between the true $p^*$ and the distribution defined by this proxy is near $0$. }
    \label{fig:subopt_main}
\end{figure}
\subsection{Datasets and Models}
We perform experiments with the Qwen2.5-Math-1.5B-Instruct \citep{yang2024qwen2}, Qwen3-4B and Qwen3-8B~\citep{yang2025qwen3} models. We use the DAPO dataset \citep{yu2025dapo} for training, designed for improving accuracy on AIME problems. We evaluate on the following benchmarks: AIME1983-2024 \citep{aime_1983_2024},  AMC \citep{aops_amc}, MATH500 \citep{hendrycks2021measuring}, and GSM8K \citep{cobbe2021training}. All experiments are performed with the Verl framework \citep{sheng2025hybridflow}, and we make only slight modifications for the purpose of our algorithm. Hyperparameters for all experiments in Appendix~\ref{app:experiment_details}.

\subsection{Cost-Aware Learning with GRPO}
\textbf{Background on GRPO. }In GRPO, training alternates between two stages. The first stage generates many responses for each prompt in order to build a dataset for the second stage.
The second stage is where policy gradient updates are made. The gradient updates are made on the current set of prompts and responses, which can be considered as the current dataset. 

We assume the training dataset has ``verifiable rewards,'' such as a math dataset where the answers are known. The reward is $1$ if the response is correct and $0$ otherwise. Given $M$ responses for a prompt, advantages are normalized: $A_i = (r_i - \frac{1}{M}\sum_{k=1}^M r_k)/(\mathrm{std}(\{r_k\}_{k=1}^M)).$ One of the key benefits of GRPO-style algorithms is that the advantage is very cheap to obtain. 

\textbf{Defining $c_i$. }
The notion of cost fits seamlessly in this setting of GRPO because different prompt and response pairs have different lengths. The amount of FLOPs required in the policy gradient step depends linearly on the total number of tokens in the prompt and response. Hence, for each prompt and response pair, we define $c_i$ to be the sum of the length of the prompt and the length of the response.

\textbf{Proxy for $G_i$. } It remains to find a strong proxy for $G_i$; 
recall that $G_i$ represents the norm of the gradient of a sample. In language model training, it would not be logical to compute the norm of the gradient of each sample in order to obtain an optimal sampling distribution, as this has a very high computational burden and defeats the purpose. Crucially, \textit{we estimate $G_i$ with $\abs{A_i}$}, the magnitude of the advantage of sample $i$. This is motivated by the expression for the policy gradient, in which the magnitude of the advantage appears to play a large role.

We justify this choice by observing low cost-biased  $\chi^2$-divergence between the sampling proxy obtained this way and the true $p^*$ distribution (invoking Theorem~\ref{thm:distribution_subopt}), as well as high Pearson correlation between $G_i$ and $\abs{A_i}$ (invoking Theorem~\ref{thm:proxy_subopt}). In Figure~\ref{fig:subopt_main}, we report the Pearson correlation between $\abs{A_i}$ and $G_i$ across training for GRPO with the 8B and 1.5B models. In order to obtain $G_i$, we compute true sequence-level gradients. Upon computing the true gradient norms, we compute the cost-biased $\chi^2$ divergence between the distributions obtained using $\abs{A_i}$ and $G_i$ respectively. We report sub-optimality metrics for all training settings in Appendix~\ref{app:suboptimality}.
\label{subsec:cal+grpo}
\paragraph{Algorithm. } Equipped with notions of cost and a gradient proxy, we combine cost-aware learning with GRPO in the policy-gradient stage of training. When we perform the policy gradient steps, we fill mini-batches according to the $p^*$ distribution rather than simply using all samples once. The $p^*$ distribution is obtained by letting $p^*_i \propto \abs{A_i}/ \sqrt{c_i}$. Thus, computing $p^*$ simply requires obtaining the advantage for all samples.  We use importance sampling to mitigate bias. Aside from this, we make no changes to the GRPO pipeline. The exact procedure is detailed in Algorithm~\ref{alg:ca_grpo}.

\begin{figure*}[t]
    \centering
    \begin{subfigure}[b]{0.49\textwidth}
        \centering
        \includegraphics[width=\textwidth]{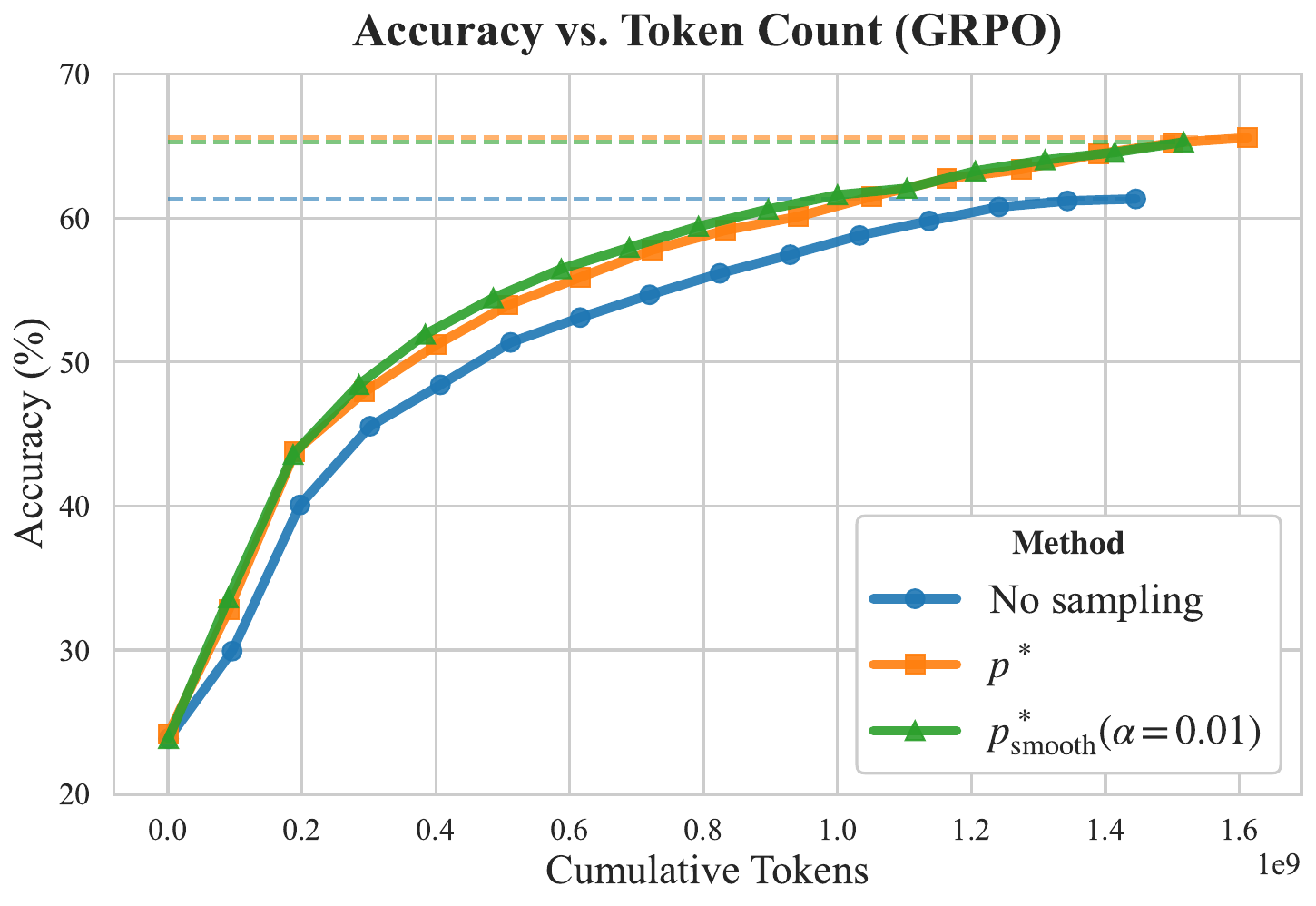}
        \label{fig:plot1}
    \end{subfigure}
    \hfill
    \begin{subfigure}[b]{0.49\textwidth}
        \centering
        \includegraphics[width=\textwidth]{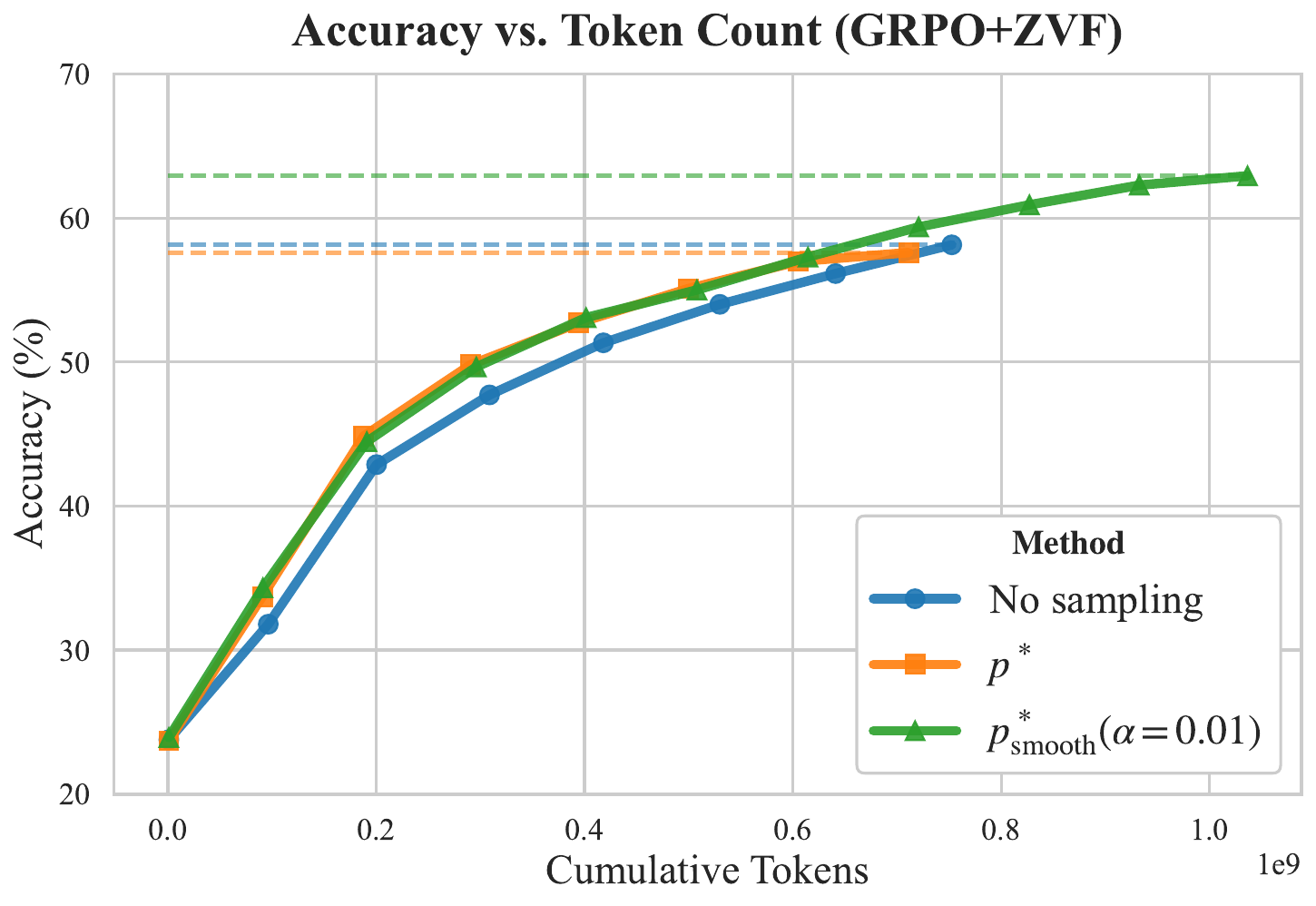}
        \label{fig:plot2}
    \end{subfigure}
    \caption{\textbf{Accuracy vs. token count} on AIME throughout training for both GRPO and GRPO+ZVF settings for the 1.5B model. We plot the accuracy on the $y$-axis and the cumulative number of tokens used in policy optimization on the $x$-axis, to compare the number of tokens used for a fixed accuracy. We evaluate every $100$ steps. }
    \label{fig:aime_results}
\end{figure*}


\subsection{Settings and Methods}

We evaluate combinations of two \emph{training settings} and three \emph{sampling methods}.

\noindent \textbf{Training settings. } \textbf{(1) GRPO: } Standard GRPO training as described in Section~\ref{subsec:cal+grpo}. \textbf{(2) GRPO + ZVF:} GRPO with zero-variance filtering (ZVF) \citep{yu2025dapo}, which removes zero-advantage prompts. To maintain a fixed batch size, rollouts are resampled until sufficient non-zero-advantage samples are obtained.

\begin{table*}[t]
\centering
\small
\begin{tabular}{lllccccc}
\toprule
Model & Setting & Method & AIME & AMC & MATH & GSM8K & Avg. Accuracy \\
\midrule
\multirow{6}{*}{Math-1.5B-IT}
& GRPO & No sampling & 61.3 & 64.1 & \textbf{73.2} & \textbf{86.2} & 71.2 \\
& GRPO & $p^*$ & \textbf{65.6} & \textbf{71.2} & \textbf{73.2} & 86.0 & \textbf{74.0} \\
& GRPO & $p^*_{\text{smooth}}(\alpha = 0.01)$ & 65.3 & 68.1 & 72.3 & 86.0 & 72.9 \\
\cmidrule{2-8}
& ZVF & No sampling & 58.1 & 64.7 & \textbf{73.2} & \textbf{86.1} & 70.6 \\
& ZVF & $p^*$ & 57.6 & 63.0 & 72.9 & 85.9 & 69.8 \\
& ZVF & $p^*_{\text{smooth}}(\alpha = 0.01)$ & \textbf{62.9} & \textbf{68.0} & \textbf{73.3} & 85.8 & \textbf{72.5} \\
 \midrule 
\multirow{3}{*}{4B-Base}
 &GRPO & No sampling & 56.4 & 72.1 & 86.5 & \textbf{94.5} & 77.4 \\
&GRPO & $p^*$ & 56.7 & \textbf{74.5} & 86.3 & 94.2 & \textbf{77.9} \\
& GRPO & $p^*_{\text{smooth}}(\alpha = 0.01)$ & \textbf{56.9} & 72.0 & \textbf{86.8} & 94.2 & 77.4 \\
\midrule
\multirow{3}{*}{8B-Base}
& GRPO & No sampling & 58.3 & 72.8 & \textbf{87.0} & \textbf{96.0} & 78.5 \\
& GRPO & $p^*$ & \textbf{58.5} & \textbf{74.7} & 86.7 & 95.2 & \textbf{78.8} \\
& GRPO & $p^*_{\text{smooth}}(\alpha = 0.01)$ & \textbf{58.5} & 74.0 & 86.7 & 94.9 & 78.5 \\
\bottomrule
\end{tabular}
\caption{\textbf{Cost-aware sampling preserves model performance.} For each setting and method, we report the best averaged checkpoint accuracy across four benchmarks.  $p^*$ sampling and variants consistently preserve overall model performance. }
\label{tab:perf_merged}
\end{table*}

\noindent \textbf{Sampling methods.} \textbf{(1) No sampling:} Standard training on all generated rollouts without resampling or importance weighting. \textbf{(2) $p^*$ sampling:} We sample rollouts according to $p^*$ (Section~\ref{subsec:cal+grpo}, Algorithm~\ref{alg:ca_grpo}) and apply importance weighting to correct for bias. \textbf{(3) Smoothed $p^*$ sampling:} We also consider $p^*_{\text{smooth}}(\alpha) = (1 - \alpha)p^* + \alpha \mathcal{U}$, where $\mathcal{U}$ is the uniform distribution.

For the 1.5B model, we evaluate all settings and sampling methods. For the 4B and 8B models, we restrict our attention to the GRPO setting because it outperforms GRPO+ZVF in the smaller scale setting. Implementation and hyperparameter details are provided in Appendix~\ref{app:experiment_details}.

\subsection{Results}
We evaluate all methods across two primary axes: benchmark accuracy and the cumulative tokens used in policy optimization. Since token count is directly proportional to FLOPs in the policy gradient step, this measures the cost-accuracy tradeoff of each strategy. All reported accuracies denote \texttt{pass@1/mean@32}, the average correctness of querying the model $32$ times. In all figures, we plot up to peak accuracy. We focus on AIME accuracy, as this is the intended downstream benchmark for the DAPO training set.

\textbf{Qwen2.5-Math-1.5B-Instruct Results. }  Figure~\ref{fig:aime_results} plots the cumulative policy optimization tokens against AIME accuracy.  Under standard GRPO, both $p^*$ and $p^*_{\text{smooth}}(\alpha = 0.01)$ match the baseline's peak accuracy using 28\% and 32\% fewer tokens respectively and ultimately surpass the baseline's maximum accuracy by 5 percentage points (pp). 

Under GRPO+ZVF, $p^*_{\text{smooth}}(\alpha = 0.01)$ maintains an advantage, achieving the baseline's peak accuracy using 13\% fewer tokens before ultimately outperforming it by 5pp. 

Results on AMC benchmark (visualized in Figure~\ref{fig:qwen_1_5_amc_results}) show a similar pattern: (1) $p^*$ sampling with GPRO requires 34\% fewer tokens than the baseline to achieve the same accuracy and improves peak accuracy by 7pp, and (2) $p^*_\text{smooth}(0.01)$ sampling with GRPO+ZVF requires 5\% fewer tokens than the baseline to achieve the same accuracy and improves peak accuracy by 3pp.



\textbf{Qwen3-4B Results. }On AIME (Figure~\ref{fig:qwen3_4b}), 35\% fewer tokens are needed for $p^*$ sampling to reach its absolute peak than no sampling requires to come within 1\% of this accuracy. On AMC (Figure~\ref{fig:qwen3-4b-amc}), $p^*$ sampling exceeds the baseline by 2pp, and requires \textasciitilde24\% fewer tokens to reach the peak accuracy. 

\textbf{Qwen3-8B Results.} Figure~\ref{fig:qwen3-8b-results} shows AIME accuracy. $p^*$ sampling achieves the baseline's peak accuracy using $51\%$ fewer tokens. Because the baseline's late-stage improvement is marginal, we also report an asymmetric comparison: matching $p^*$ when it reaches $99\%$ of its peak accuracy. Even under this relaxed threshold, $p^*$ requires $17\%$ fewer tokens than the baseline. This efficiency trend extends to the AMC benchmark (Figure~\ref{fig:qwen3-8b-amc}), where $p^*$ sampling reaches the baseline's maximum accuracy using $47\%$ fewer tokens. 
 We also explored smoothed variants (Figure~\ref{fig:qwen3-8b-aime-all}). These improve upon the baseline, but $p^*$ performs best, possibly due to the high fidelity of $\abs{A_i}$ as a proxy (Figure~\ref{fig:subopt_main}).

Lastly, Table~\ref{tab:perf_merged} reports the best checkpoint accuracy for all models and methods. Crucially, these results demonstrate that cost-aware sampling preserves model performance with respect to standard GRPO training; in fact, it is even able to improve upon baseline performance. 


Across the board, cost-aware sampling preserves, and often improves upon, the accuracy of standard GRPO, while substantially reducing the token cost of policy optimization, as seen in the results above. Across model scales, we showed that $\abs{A_i}$ is a reliable proxy for $G_i$, and that cost-aware sampling reduces the cost of policy optimization. While our primary focus is on the cost of policy optimization, results also indicate that fewer overall training steps are needed for cost-aware sampling.

\subsection{Ablations}
\begin{figure}[t]
    \centering
    
    \begin{subfigure}[b]{0.48\linewidth}
        \centering
        \includegraphics[width=\linewidth]{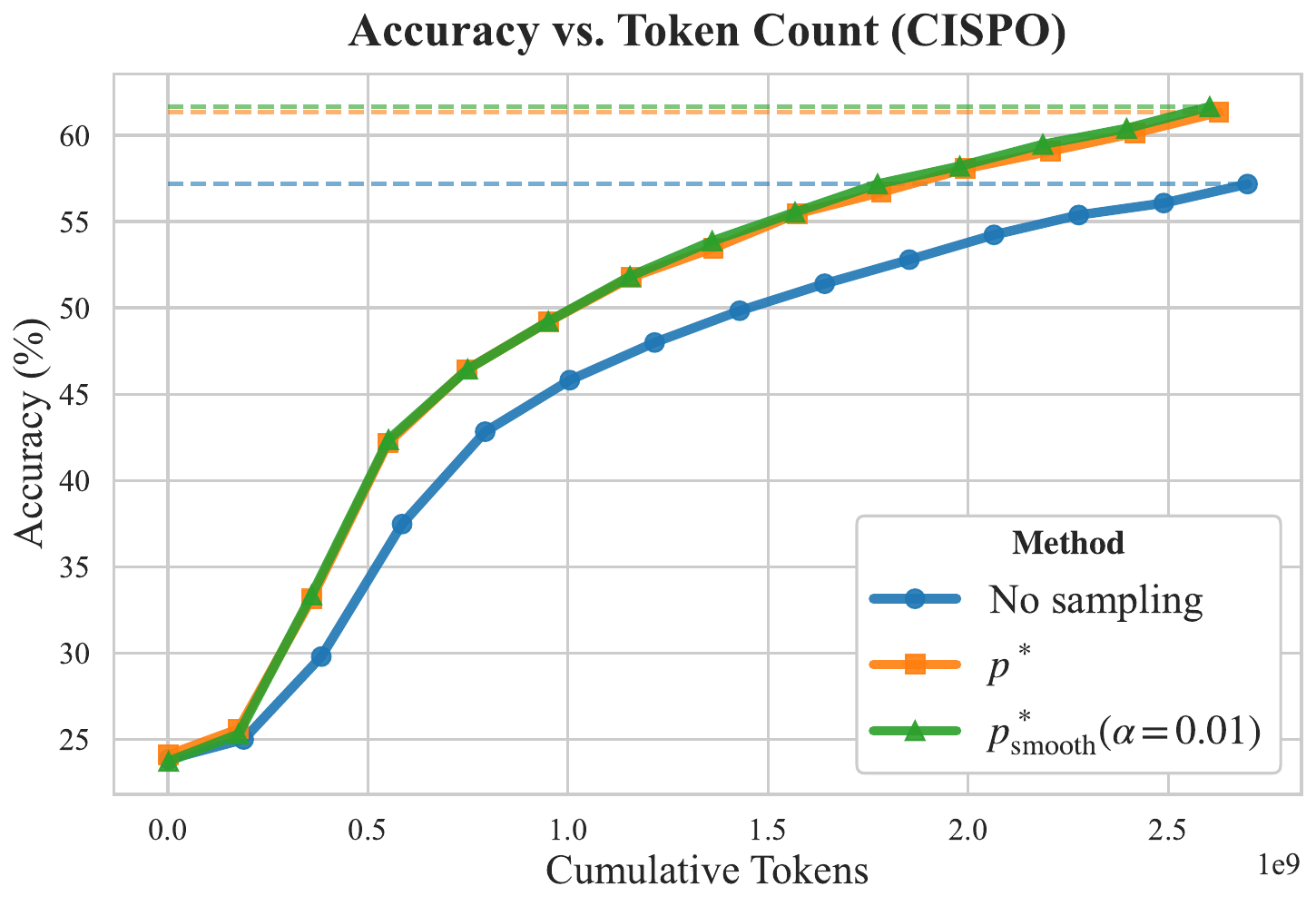}
        \caption{\textbf{CISPO. } AIME accuracy (\texttt{pass@1/mean@32}) for training Qwen2.5-Math-1.5B-Instruct with CISPO. Cost-aware learning is robust to training objectives. }
        \label{fig:scale_rl}
    \end{subfigure}
    \hfill 
    \begin{subfigure}[b]{0.48\linewidth}
        \centering
        \includegraphics[width=\linewidth]{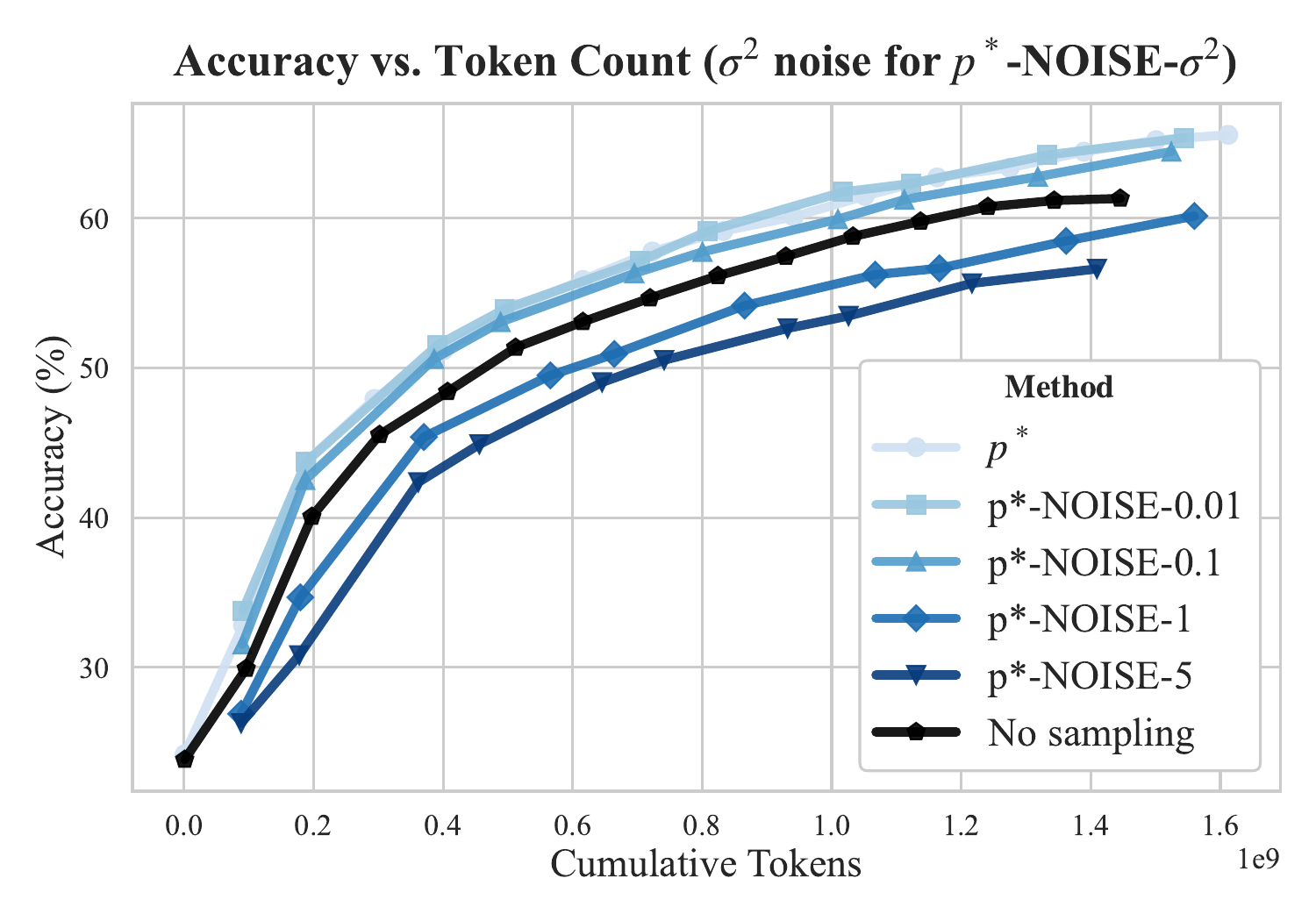}
        \caption{\textbf{Noisy proxies. } Math-1.5B with GRPO when adding zero-mean, $\sigma^2$-variance noise to the proxy $\abs{A_i}$ prior to computing the sampling distribution.}
        \label{fig:noisy_grpo}
    \end{subfigure}
    \label{fig:both_plots}
\end{figure}

\noindent \textbf{Training objective.} We consider a variant of the GRPO objective, CISPO \citet{chen2025minimax}. CISPO applies a stop-gradient to the importance weight of the objective. We base our hyperparameters on findings from~\citet{khatri2025art}. Results, reported in Figure~\ref{fig:scale_rl}, show that our method is not specific to the GRPO objective; rather, it generalizes across training objectives. $p^*$ sampling and $p^*_\text{smooth}(0.01)$ sampling require 31\% and 34\% fewer tokens respectively to attain the baseline's peak accuracy, and ultimately outperform by about 4pp. This finding is further supported by strong sub-optimality metrics, performance on additional benchmarks, and smoothed variants in Appendix~\ref{app:cispo_results}. 

\noindent \textbf{Proxy sub-optimality.} We ablate the noise level associated with our proxy $\abs{A_i}$. In particular, we add zero-mean noise with variance $\sigma^2$ to $\abs{A_i}$ prior to computing the sampling distribution. Results are shown in Figure~\ref{fig:noisy_grpo}. We observe that Cost-Aware GRPO is robust to noise up until $\sigma^2=0.1$, but as noise approaches $\sigma^2=1.0$, it deteriorates with respect to the baseline. 

\noindent \textbf{Choice of $\alpha$ and proxy.} In Appendix~\ref{app:p_star_ablation}, we ablate our choice of $\alpha$ and report results for $p^*_\text{smooth}$ with $\alpha=0.05$ and $\alpha=0.1$ as well. We also ablate our choice of approximating $G_i$ with $\abs{A_i}$ and consider using $p^*_i = 1 / \sqrt{c_i}$, which corresponds to substituting $G_i$ with $1$ for all samples. We observe that it does not perform as well, which implies a need for correctly approximating the gradient norm of a sample. In Appendix~\ref{app:8b_complete}, we report smoothed variants for the 4B and 8B model as well.

\section{Conclusion}
In this paper, we formalize the problem of cost-aware learning and propose Cost-Aware SGD. Our analysis derives the optimal sampling distribution for convex and strongly convex objectives, providing theoretical guarantees on the expected cost to attain an error of $\epsilon$. We extensively analyze the properties of this algorithm. 

We apply our theoretical insights to GRPO and introduce Cost-Aware GRPO. Our use of $\abs{A_i}$ as a proxy for $G_i$ in the optimal sampling distribution is supported by strong sub-optimality metrics across scales. Our experiments demonstrate the broad effectiveness of this approach; it significantly reduces the tokens used in policy optimization, while maintaining or exceeding the peak performance of standard baselines.

\paragraph{Future Directions.}
In Cost-Aware GRPO, we primarily rely on the magnitude of the advantage as a proxy for the gradient norm; identifying superior proxies remains an interesting direction for future research. Furthermore, having demonstrated that cost-aware sampling generalizes successfully from standard GRPO to the CISPO objective, a natural next step is to explore its integration into a broader class of RLVR and RLHF algorithms. It would also be of interest to dynamically set $\alpha$ in smoothed variants of Cost-Aware GRPO in order to obtain superior performance. Lastly, closing the gap between the upper and lower bounds in Section~\ref{sec:importance_sampling} would be of theoretical significance.

%% file: neurips/appendix/related_work.tex
\newpage
\section{Related Work}
\label{app:related_work}
We describe further related work.

\paragraph{Training Data Pruning/ Selection.} 
There is a large literature focused on pruning a training dataset. Typically, the objective is to obtain a better predictor or maintain performance with fewer samples. Related works include \citep{sorscher2022beyond,paul2021deep, xie2023data, chen2023alpagasus}. Our work differs from these as it explicitly considers a variable cost per sample. 

\paragraph{Active learning.}
There is a significant body of work studying the problem of active learning, which considers when to request more information about training samples. We refer the reader to \citep{dasgupta2011two} for an in-depth overview of the literature. Most related to our work, \citet{beygelzimer2009importance, pmlr-v119-cortes20a, cortes2019region} use importance weighting to probabilistically select samples for which to query a label. Generally, these algorithms fix a hypothesis set a priori, in which hypotheses are eliminated as more data is acquired. Our work shares the similarity of deciding when to request more information, however, in contrast, our work requests the \emph{gradient} of a sample. Furthermore, we do not use a current hypothesis set in order to make these decisions.

%% file: neurips/appendix/algorithms.tex
\newpage
\section{Algorithms}
\label{app:algorithms}

\subsection{Cost-Aware SGD}

\begin{algorithm}[htbp]
\caption{Cost-Aware SGD}
\label{alg:cost_weighted_sgd}
\begin{algorithmic}[1]
   \STATE {\bfseries Input:} Initial point $x_1$, iterations $T$, step size $\eta$, sampling distribution $\sfp^*$.
   \FOR{$t = 1$ {\bfseries to} $T$}
       \STATE Sample index $i_t \sim \sfp^*$
       \STATE Query $\nabla f_{i_t}(x_t)$, incur cost $c_{i_t}$.
       \STATE Compute importance-weighted stochastic gradient:
       \STATE \quad $\tilde{g}_t = \frac{1}{n p_{i_t}} \nabla f_{i_t}(x_t)$
       \STATE Update parameter: $x_{t+1} = \Pi_{\mathcal{X}} (x_t - \eta \tilde{g}_t)$
   \ENDFOR
   \STATE {\bfseries Output:} $\bar{x}_T = \frac{1}{T} \sum_{t=1}^T x_t$ (or suffix averaging solution for strongly convex case \citep{rakhlin2011making})
\end{algorithmic}
\end{algorithm}

\subsection{Cost-Aware GRPO}

\begin{algorithm}[htbp]
   \caption{Cost-Aware GRPO}
   \label{alg:ca_grpo}
\begin{algorithmic}
   \STATE {\bfseries Input:} Initial policy $\pi_\theta$, Dataset $\cD$, Prompt batch size $n$, Rollouts per prompt $M$, Mini-batch size $B$, Iterations $K$
   \FOR{iteration $k = 1$ {\bfseries to} $K$}
       \STATE \textcolor{darkgray}{\textit{// Phase 1: Data Collection}}
       \STATE Sample prompt batch $X = \{x_1, \dots, x_n\} \sim \cD$
       \STATE Generate $M$ outputs per prompt: $y_{i,j} \sim \pi_{\theta_{\text{old}}}(\cdot|x_i)$
       \STATE Form rollout dataset $\cD_{\text{step}}$ of size $N = n \times M$ 
       \STATE Compute advantages for all samples in $\cD_{\text{step}}$
       \STATE \textcolor{darkgray}{\textit{// Phase 2: Compute Sampling Distribution}}
       \STATE Estimate weights $s_u \leftarrow G_u / \sqrt{c_u}$ for all $u \in \cD_{\text{step}}$
        \STATE Compute probabilities $p^*_u \leftarrow s_u / \sum_{v \in \cD_{\text{step}}} s_v$
       \STATE \textcolor{darkgray}{\textit{// Phase 3: Importance Weighted Updates}}
       \STATE Set number of updates $T \leftarrow \lceil N / B \rceil$
       \FOR{update step $t = 1$ {\bfseries to} $T$}
           \STATE \textcolor{darkgray}{\textit{// Sample mini-batch and optimize policy}}
           \STATE Sample mini-batch $ \sim \text{Multinomial}(\text{support}=\cD_{\text{step}}, \text{probs}=p^*, \text{size}=B)$
           \STATE Update $\pi_\theta$ by maximizing  $\hat{\mathcal{J}}^{(p^*)}_{\mathrm{GRPO}}(\theta)$ over mini-batch
       \ENDFOR
   \ENDFOR
\end{algorithmic}
\end{algorithm}

\newpage

%% file: neurips/appendix/subset_selection.tex
\section{Subset Selection}
\label{app:subset_selection}
\begin{figure}
    \centering
\includegraphics[width=0.9 \linewidth]{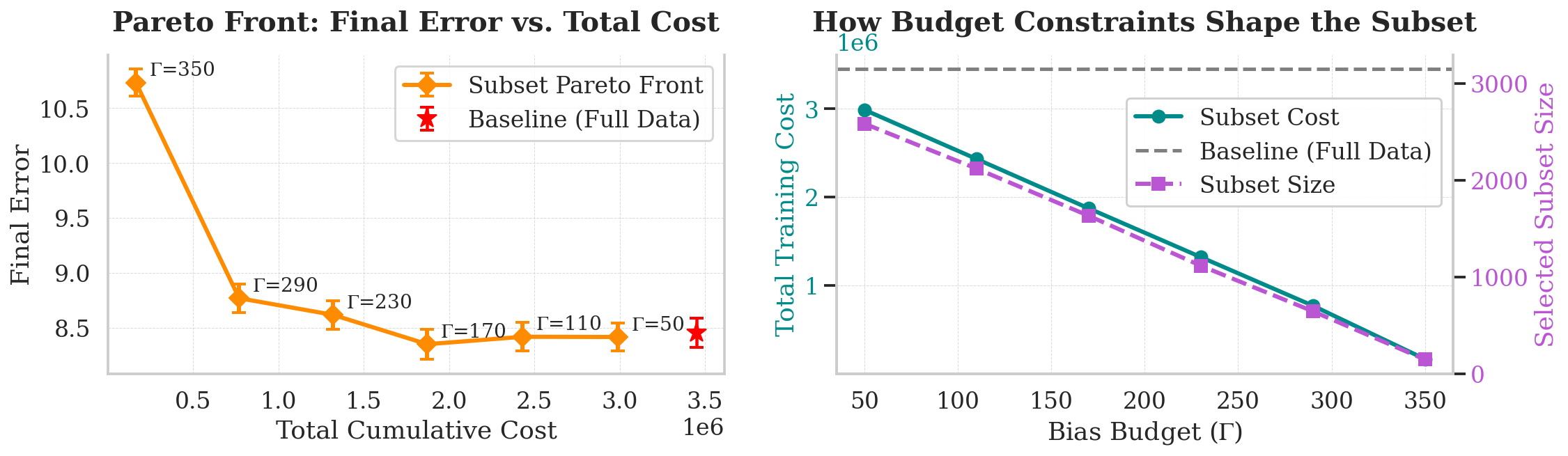}
    \caption{Synthetic validation of the greedy subset selection algorithm. $\Gamma$ is a tunable parameter representing the allowable bias budget. As it is increased, the greedy algorithm chooses fewer (and cheaper) samples to train on. }
    \label{fig:synthetic_subset}
\end{figure}
The importance sampling strategy $\sfp^*$ achieves cost $(\sum_i G_i \sqrt{c_i})^2$, but may remain expensive when high-cost samples have large gradients. To reduce cost, we consider optimizing over a subset $\pi \subseteq \{1,\dots,n\}$, defining $f_\pi(x) = \frac{1}{n}\sum_{i\in\pi} f_i(x)$. This introduces bias, as optimization converges to $x_\pi^*$ instead of $x^*$, but reduces computational cost.

\paragraph{Biased SGD on a subset.}
Let $\|\nabla f_i(x)\|\le G_i$. For a subset $\pi$ and sampling distribution $\sfq$ supported on $\pi$, define the bias $\beta_\pi = \frac{1}{n}\sum_{i\notin\pi} G_i$ and note $\|\nabla f(x)-\nabla f_\pi(x)\|\le \beta_\pi$. Standard SGD results \citep{devolder2014first} imply that with $\eta_t\propto 1/\sqrt{T}$, $\E[f(\bar{x}_T)-f(x^*)] \;\le\; \frac{D \sqrt{\sigma_\pi^2(\sfq)}}{\sqrt{T}} + D\beta_\pi,$ where $\sigma_\pi^2(\sfq)\le \frac{1}{n^2}\sum_{i\in\pi}\frac{G_i^2}{q_i}$. Thus, the cost to achieve error $\e > D\beta_\pi$ is:
$\cK_\pi(\e) = \frac{D^2}{n^2(\e - D\beta_\pi)^2} \left(\sum_{i\in\pi} G_i \sqrt{c_i}\right)^2,$
where we used the optimal sampling distribution restricted to $\pi$. 

\textbf{Optimal subset selection.}
We minimize $\cK_\pi(\e)$ subject to a bias budget $D\beta_\pi \le \Gamma$.

\begin{theorem}[Knapsack equivalence]
\label{thm:knapsack_equivalence}
This problem is equivalent to a min-cost knapsack:
\[
\min_{z_i\in\{0,1\}} \sum_i z_i G_i\sqrt{c_i}
\quad \text{s.t.} \quad
\sum_i z_i G_i \ge \sum_i G_i - \frac{n\Gamma}{D}.
\]
\end{theorem}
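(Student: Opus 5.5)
The proof has two parts: rewrite the constraint, then the objective, and check that the feasible sets and optimizers coincide. Identify each subset $\pi \subseteq [n]$ with its indicator vector $z \in \{0,1\}^n$, where $z_i = 1$ iff $i \in \pi$. This is a bijection, so subset selection and a $0/1$ program have the same feasible points once the constraints match.

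\emph{The constraint.} By definition $\beta_\pi = \frac1n \sum_{i\notin\pi} G_i = \frac1n\big(\sum_i G_i - \sum_i z_i G_i\big)$. Since $D>0$, the budget $D\beta_\pi \le \Gamma$ is equivalent to
\[
\sum_i G_i - \sum_i z_i G_i \le \frac{n\Gamma}{D}, \qquad\text{i.e.}\qquad \sum_i z_i G_i \ge \sum_i G_i - \frac{n\Gamma}{D}.
\]
This is exactly the knapsack covering constraint.

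\emph{The objective.} For any feasible $\pi$, $\cK_\pi(\e) = \frac{D^2}{n^2(\e - D\beta_\pi)^2}\big(\sum_{i\in\pi} G_i\sqrt{c_i}\big)^2$. This dependence on $\beta_\pi$ in the prefactor is the main obstacle, because the prefactor varies with $\pi$. I would handle it as follows. Under the budget, $\e - D\beta_\pi \ge \e - \Gamma$, and we assume $\e > \Gamma$ so that every feasible subset reaches error $\e$. Hence
\[
\cK_\pi(\e) \le \frac{D^2}{n^2(\e-\Gamma)^2}\Big(\sum_i z_i G_i\sqrt{c_i}\Big)^2 .
\]
Accordingly, I would read the optimization problem, as the paper's formulation intends, as minimizing the cost guarantee with the bias accounted for at its budgeted level $\Gamma$. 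That is, the error target is split as $\e = \Gamma + (\e - \Gamma)$, with a fixed optimization-error allowance $\e - \Gamma$. The prefactor $D^2/(n^2(\e-\Gamma)^2)$ is then a positive constant independent of $z$. Since $t\mapsto t^2$ is strictly increasing on $t\ge 0$, minimizing $\big(\sum_i z_iG_i\sqrt{c_i}\big)^2$ is the same as minimizing the linear objective $\sum_i z_i G_i\sqrt{c_i}$. I would state this reduction explicitly in the writeup as the point where the interpretation enters.

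\emph{Conclusion.} Combining the two steps, the feasible sets coincide under the bijection $\pi\leftrightarrow z$. The objectives are related by a strictly monotone transformation times a positive constant. Therefore the two problems have identical optimal solutions, and optimal values correspond via $\cK^* = \frac{D^2}{n^2(\e-\Gamma)^2}(\mathrm{OPT}_{\text{knap}})^2$. This is a min-cost knapsack (covering) problem with item weights $G_i$, item costs $G_i\sqrt{c_i}$, and demand $\sum_i G_i - n\Gamma/D$.
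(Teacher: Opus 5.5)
Your proposal is correct and follows the paper's proof. Like the paper, you rewrite the budget $D\beta_\pi \le \Gamma$ as the covering constraint $\sum_i z_i G_i \ge \sum_i G_i - n\Gamma/D$ and minimize $J(\pi) = (\sum_{i\in\pi} G_i\sqrt{c_i})^2$, equivalently its linear square root. Your explicit treatment of the $\beta_\pi$-dependent prefactor through the worst case $\epsilon - D\beta_\pi \ge \epsilon - \Gamma$ is the interpretation the paper uses implicitly: it states the worst-case cost $\min_\pi D^2 J(\pi)/(n^2(\epsilon-\Gamma)^2)$ right after its proof. So your version is, if anything, more careful about what "equivalent" means here.
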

We defer the proof to Appendix~\ref{app:min_cost_knapsack}. A similar result for strongly convex objectives is in Appendix~\ref{app:strongly_convex_knapsack}.

\paragraph{Greedy approximation.}
A standard greedy algorithm selects items by decreasing value-to-weight density $\rho_i = v_i / \omega_i$. In our setting:
\[
\rho_i = \frac{v_i}{\omega_i} = \frac{G_i}{G_i\sqrt{c_i}} = \frac{1}{\sqrt{c_i}},
\]
so the $G_i$ terms cancel and the greedy order is determined entirely by cost $c_i$. The algorithm simply selects the cheapest components until the constraint is met, yielding a 2-approximation \citep{csirik1991heuristics}. Remarkably, the optimal greedy strategy does not depend on the Lipschitz constants $G_i$ at all. Under a similar setting to Section~\ref{subsec:synthetic_unbiased}, we implement this greedy approximation and report results in Figure~\ref{fig:synthetic_subset}. Within a smaller bias budget, subset selection is able to attain similar error at a significantly reduced cost. As the selected subset becomes much smaller, the error increases due to bias.

%% file: neurips/appendix/proofs.tex
\newpage
\section{Proofs}
\label{app:proofs}
In this section we provide the proofs and derivations for statements made in the paper.

\subsection{Variance reduction strategy derivation}
\label{app:proof_costblind}
In this case, we have:
\[
J(\sfp) = S(\sfp)C(\sfp) = \paren*{\frac{1}{n^2} \sum_{j=1}^n
\frac{G_j^2}{p_j}} \paren*{\sum_{i=1}^n p_i},
\]
where $\sfp \in \Delta_n$.
 From Cauchy-Schwarz, we can lower bound $J(p)$:
\[
J(\sfp) \geq \frac{1}{n^2} \paren*{\sum_{i = 1}^n G_i}^2.
\]
This lower bound  is attained when $G_i / \sqrt{p_i} \propto \sqrt{p_i}$, or equivalently, $p_i \propto G_i$.
Hence, when costs are uniform, we focus on minimizing the second moment. This gives:
\begin{equation}
    (p_\text{var})_i = \frac{G_i}{\sum_j G_j}.
\end{equation}

When we apply $p_\text{var}$ to the non-uniform cost setting, the total expected cost to attain error $\e$ becomes:
\begin{align*}
K_{\text{var}}(\e) &= \frac{D^2}{\e^2n^2} \paren*{\sum_{i = 1}^n G_i}^2 \paren*{\sum_{i = 1}^n (p_\text{var})_i c_i} \\
&= \frac{D^2}{\e^2n^2} \paren*{\sum_{i = 1}^n G_i}^2 \paren*{\sum_{i = 1}^n \frac{G_i}{\sum_j G_j} c_i} \\
&= \frac{D^2}{\e^2n^2} \paren*{\sum_{i = 1}^n G_i} \paren*{\sum_{i = 1}^n G_i c_i}.
\end{align*}

\subsection{Cost-Improvement \& Comparison Analysis}

\begin{theorem}[Comparison of Adaptive vs. Uniform] 
\label{app:ada_unif_comp}
Unlike the optimal strategy, the adaptive strategy $p_{\text{var}}$ is not guaranteed to outperform uniform sampling. The ratio of their costs is given by:
\begin{equation}
    \frac{K_{\text{var}}(\e)}{K_{\text{unif}}(\e)} = \frac{(\sum_{i=1}^n G_i)(\sum_{i=1}^n G_i c_i)}{(\sum_{i=1}^n G_i^2)(\sum_{i=1}^n c_i)}.
\end{equation}
Let $\E[\cdot]$ denote the expectation taken over a uniform distribution of indices $i \sim \text{Uniform}(1, \dots, n)$. Then $K_{\text{var}} \le K_{\text{unif}}$ holds if and only if:
\begin{equation}
    \E[G]\Cov(G, c) \le \E[c]\Var(G).
\end{equation}
Consequently, if gradients and costs are negatively correlated or uncorrelated, adaptive sampling is superior. However, if gradients and costs are strongly positively correlated (i.e., samples with large gradients are disproportionately expensive), uniform sampling may yield a lower total cost.
\end{theorem}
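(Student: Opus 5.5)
The plan is to take the closed forms already computed for the two strategies and reduce the comparison to a single inequality between uniform-index moments. For $K_{\text{unif}}$, use the expression stated in the cost-improvement subsection. For $K_{\text{var}}$, use the expression derived in Appendix~\ref{app:proof_costblind}. Both carry the same prefactor $\frac{D^2}{\e^2 n^2}$, so it cancels in the quotient and the displayed ratio follows at once. The same cancellation holds in the strongly convex case, where only the constant in front changes, so the comparison does not depend on the convexity regime.

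Next, rewrite each sum as $n$ times a uniform expectation: $\sum_i G_i = n\E[G]$, $\sum_i G_i c_i = n\E[Gc]$, $\sum_i G_i^2 = n\E[G^2]$, and $\sum_i c_i = n\E[c]$. The factors of $n^2$ cancel, and $K_{\text{var}} \le K_{\text{unif}}$ becomes
\[
\E[G]\,\E[Gc] \le \E[G^2]\,\E[c].
\]
This step divides by the positive quantity $(\sum_i G_i^2)(\sum_i c_i)$, so we assume it is nonzero to avoid a degenerate instance.

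Now substitute $\E[Gc] = \Cov(G,c) + \E[G]\E[c]$ and $\E[G^2] = \Var(G) + \E[G]^2$. The term $\E[G]^2\E[c]$ appears on both sides and cancels, which leaves exactly
\[
\E[G]\Cov(G,c) \le \E[c]\Var(G).
\]
Every step is an equivalence, so the ``if and only if'' holds.

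For the consequences, note that $G_i \ge 0$ and $c_i \ge 0$, so $\E[G] \ge 0$, $\E[c] \ge 0$ and $\Var(G) \ge 0$. If $\Cov(G,c) \le 0$, the left side is nonpositive and the right side is nonnegative, so the inequality holds and adaptive sampling is superior.

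To show that positive correlation can reverse the comparison, exhibit a small instance. Take $n=2$ with $G = (1,\delta)$ and $c = (M,1)$. For large $M$ and small $\delta$, $\Cov(G,c)$ is of order $M$ while $\E[c]\Var(G)$ is of order $M/4$, and checking the ratio shows that $\E[G]\Cov(G,c) > \E[c]\Var(G)$, so uniform sampling wins.

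I do not expect a real obstacle. The only places needing care are the sign and nonnegativity bookkeeping behind the consequence claims, and choosing a clean counterexample so that the word ``may'' is justified concretely.
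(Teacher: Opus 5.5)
Your derivation of the ratio, the rewriting of the sums as uniform-index moments, and the cancellation of $\E[G]^2\E[c]$ that yields $K_{\text{var}}\le K_{\text{unif}} \iff \E[G]\Cov(G,c)\le \E[c]\Var(G)$ are exactly the paper's proof. The sign argument for $\Cov(G,c)\le 0$ also matches. The paper gives no instance for the ``may'' clause, so your example is an addition beyond the paper's proof, but as written its justification is wrong.

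For $G=(1,\delta)$ and $c=(M,1)$ you have $\Cov(G,c)=\tfrac14(M-1)(1-\delta)\approx M/4$ and $\E[c]\Var(G)=\tfrac18(M+1)(1-\delta)^2\approx M/8$. Once the factor $\E[G]=(1+\delta)/2\approx 1/2$ multiplies the covariance, the two sides of the criterion agree at leading order. Your ``order $M$ versus order $M/4$'' comparison drops this factor. The exact difference is
\[
\E[G]\Cov(G,c)-\E[c]\Var(G)=\tfrac14(1-\delta)(\delta M-1).
\]
Equivalently, $K_{\text{var}}/K_{\text{unif}}=\frac{(1+\delta)(M+\delta)}{(1+\delta^2)(M+1)}$, which exceeds $1$ if and only if $\delta M>1$ (for $0\le\delta<1$).

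So taking $\delta$ small works against you. At $\delta=0$ the ratio is $M/(M+1)<1$ and adaptive sampling wins, and the same holds whenever $\delta\le 1/M$, however large $M$ is. To repair the example, require $\delta M>1$ explicitly. For instance, $\delta=1/2$ and $M=3$ give ratio $21/20>1$.
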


\begin{proof}
Recall that $K_{\text{unif}} \propto (\sum G_i^2)(\sum c_i)$ and $K_{\text{var}} \propto (\sum G_i)(\sum G_i c_i)$. We can rewrite these sums in terms of expectations over uniformly sampled indices. For any vector $x$, $\sum x_i = n \E[x]$. Thus:
\[
\frac{K_{\text{var}}}{K_{\text{unif}}} = \frac{(n \E[G]) (n \E[Gc])}{(n \E[G^2]) (n \E[c])} = \frac{\E[G] \E[Gc]}{\E[G^2] \E[c]}.
\]
We seek the condition under which this ratio is $\le 1$, which implies $\E[G] \E[Gc] \le \E[G^2] \E[c]$. We apply the definitions $\E[Gc] = \Cov(G, c) + \E[G]\E[c]$ and $\E[G^2] = \Var(G) + \E[G]^2$. Substituting these into the inequality:
\[
\E[G] \paren*{\Cov(G, c) + \E[G]\E[c]} \le \paren*{\Var(G) + \E[G]^2} \E[c].
\]
Expanding both sides:
\[
\E[G]\Cov(G, c) + \E[G]^2\E[c] \le \E[c]\Var(G) + \E[G]^2\E[c].
\]
Subtracting the common term $\E[G]^2\E[c]$ from both sides yields the final condition:
\[
\E[G]\Cov(G, c) \le \E[c]\Var(G).
\]
Since $\E[G]$, $\E[c]$, and $\Var(G)$ are non-negative, this inequality always holds if $\Cov(G, c) \le 0$ (negative or zero correlation). It is violated only if $\Cov(G, c)$ is sufficiently large and positive.
\end{proof}

\subsection{Min Cost Knapsack}
\begin{theorem}[Equivalence to Min-Knapsack Problem]
\label{app:min_cost_knapsack}
Let $\Gamma$ be the maximum allowable bias budget such that convergence is possible.  Finding the optimal subset $\pi^*$ that minimizes the expected total cost subject to this budget is equivalent to solving the \textbf{Min-Cost Knapsack Problem}.

Specifically, let $z_i \in \{0, 1\}$ be the indicator variable where $z_i=1$ if $i \in \pi$. The problem is:
\begin{equation}
\label{eq:knapsack}
    \min_{z} \sum_{i=1}^n z_i \omega_i \quad \text{subject to} \quad \sum_{i=1}^n z_i v_i \geq V_{\text{req}},
\end{equation}
where the ``item cost'' is $\omega_i = G_i \sqrt{c_i}$, the ``item value'' is $v_i = G_i$, and the required value coverage is $V_{\text{req}} = \sum_{j=1}^n G_j - \frac{n \Gamma}{D}$.
\end{theorem}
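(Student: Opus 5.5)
We reduce the subset selection problem to the stated knapsack by rewriting both the objective and the constraint in terms of the indicators $z_i$. Recall from Appendix~\ref{app:subset_selection} that for a subset $\pi$, using the optimal restricted distribution $q_i \propto G_i/\sqrt{c_i}$ on $\pi$, the cost to reach error $\e > D\beta_\pi$ is
\[
\cK_\pi(\e) = \frac{D^2}{n^2(\e - D\beta_\pi)^2}\Bigl(\sum_{i\in\pi} G_i\sqrt{c_i}\Bigr)^2 .
\]
This follows from the restricted analogue of Theorem~\ref{thm:p_star}, obtained by applying the same Cauchy--Schwarz argument to indices in $\pi$ only. The problem is then to minimize $\cK_\pi(\e)$ over subsets $\pi$ subject to the bias budget $D\beta_\pi \le \Gamma$, where $\beta_\pi = \frac1n\sum_{i\notin\pi} G_i$.

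\textbf{Step 1: the constraint.} Since $\sum_{i\notin\pi} G_i = \sum_j G_j - \sum_i z_i G_i$, the constraint $D\beta_\pi\le\Gamma$ is equivalent to $\frac{D}{n}\bigl(\sum_j G_j - \sum_i z_i G_i\bigr)\le \Gamma$. This rearranges exactly to $\sum_i z_i v_i \ge V_{\text{req}}$ with $v_i = G_i$ and $V_{\text{req}} = \sum_j G_j - n\Gamma/D$. The feasible sets of the two problems therefore coincide.

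\textbf{Step 2: the objective.} Here lies the main subtlety. $\cK_\pi(\e)$ depends on $\pi$ both through $\sum_{i\in\pi} G_i\sqrt{c_i}$ and through the bias term in the denominator. I would handle this as the framing of Appendix~\ref{app:subset_selection} suggests: the budget $\Gamma$ fixes the bias allowance, and we use the worst-case (budget-saturating) bound $\e - D\beta_\pi \ge \e - \Gamma$ for every feasible $\pi$. Equivalently, we target the residual optimization error $\e-\Gamma$. The objective then becomes $\frac{D^2}{n^2(\e-\Gamma)^2}\bigl(\sum_i z_i\,\omega_i\bigr)^2$ with $\omega_i = G_i\sqrt{c_i}$. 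The prefactor is a positive constant independent of $\pi$, which requires $\Gamma<\e$; this is the meaning of ``maximum allowable bias budget such that convergence is possible.'' Moreover, $t\mapsto t^2$ is strictly increasing on $t\ge 0$ and every $\omega_i\ge 0$. Hence minimizing $\cK_\pi$ is equivalent to minimizing $\sum_i z_i\omega_i$.

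\textbf{Step 3: conclusion.} Combining Steps 1 and 2, the two problems share the same feasible set and order-equivalent objectives, so they have the same minimizers. This yields~\eqref{eq:knapsack}.

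The main obstacle is justifying Step 2: one must argue that the denominator may be treated as fixed by $\Gamma$, rather than as varying with $\beta_\pi$. I would state this explicitly as the modeling convention, namely that the error split is $\Gamma$ for bias and $\e-\Gamma$ for optimization. I would also note that without this convention the problem becomes a fractional knapsack-type objective, which is not literally the min-cost knapsack.
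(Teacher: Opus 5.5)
Your proposal is correct and follows the paper's proof. Both rewrite the bias constraint $D\beta_\pi \le \Gamma$ as $\sum_i z_i G_i \ge \sum_j G_j - n\Gamma/D$, and both take the objective to be $\bigl(\sum_{i\in\pi} G_i\sqrt{c_i}\bigr)^2$, which is equivalent to minimizing $\sum_i z_i \omega_i$. Your Step 2 makes explicit what the paper leaves implicit: that the denominator is fixed at the worst-case value $(\e-\Gamma)^2$, which the paper states only in the remark after its proof.
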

\begin{proof}
    We aim to minimize the cost complexity factor derived for the optimal distribution restricted to $\pi$: $J(\pi) = (\sum_{i \in \pi} G_i \sqrt{c_i})^2$.
    The constraint is on the bias error contribution $D \beta_\pi \le \Gamma < \e$. Recalling the definition of bias:
    \[
    \beta_\pi \le \frac{1}{n} \sum_{j \notin \pi} G_j \implies D \paren*{\frac{1}{n} \sum_{j \notin \pi} G_j} \le \Gamma.
    \]
    Minimizing the sum of gradients \textit{excluded} from $\pi$ is equivalent to maximizing the sum of gradients \textit{included} in $\pi$. We rewrite the constraint:
    \[
    \sum_{j \notin \pi} G_j \le \frac{n \Gamma}{D} \implies \sum_{j=1}^n G_j - \sum_{i \in \pi} G_i \le \frac{n \Gamma}{D} \implies \sum_{i \in \pi} G_i \ge \sum_{j=1}^n G_j - \frac{n \Gamma}{D}.
    \]
    Therefore, if we let $V_\text{req} = \sum_{j=1}^n G_j - \frac{n \Gamma}{D}$, let $\omega_i = G_i \sqrt{c_i}$ as the penalty for including item $i$, and let $v_i = G_i$ as the gain toward satisfying the constraint, we recover the formulation in Equation~(\ref{eq:knapsack}).
\end{proof}
\noindent Using this algorithm, the worst case $\E[K(\e)]$, where $\e > \Gamma$, for $D\beta_\pi \leq \Gamma$ is:
\[
\min_{\{\pi: \beta_\pi \leq \Gamma / D\}}  \frac{D^2 J(\pi)}{n^2(\e - \Gamma)^2}.
\]

The bias constraint $D\beta_\pi \le \Gamma$ determines the required gradient coverage 
$V_{\mathrm{req}} = \sum_i G_i - n\Gamma/D$, and thus defines a specific Min-Cost Knapsack instance as in Theorem~\ref{thm:knapsack_equivalence}. By varying $\Gamma$ over its feasible range and solving the associated knapsack problem for each choice, we obtain an optimal subset $\pi(\Gamma)$ minimizing the total cost subject to that bias budget. Since the total expected complexity 
\[
\E[K_\pi(\e)]
    = \frac{D^2 J(\pi)}{n^2(\e - D\beta_\pi)^2}
\]
depends on both the numerator (the sampling cost) and the denominator (the bias floor), enumerating over $\Gamma$ allows us to evaluate the achievable error--cost tradeoff for every subset. Consequently, we can compute the optimal target accuracy $\e$ by selecting the bias level $\Gamma$ that yields the smallest overall expected cost.

\subsection{Strongly Convex Subset Selection}
\label{app:strongly_convex_knapsack}

\begin{theorem}[Subset Selection for Strongly Convex Functions]
    Assume $f$ is $\mu$-strongly convex. Let $z_i \in \{0, 1\}$ be the indicator variable for including data point $i$ in subset $\pi$. We seek to minimize the cost complexity $\sum z_i G_i \sqrt{c_i}$ subject to a budget $\Gamma$ on the function value suboptimality $f(x_\pi^*) - f(x^*) \le \Gamma$.
    
    This problem is structurally equivalent to the Min-Cost Knapsack formulation in Theorem~\ref{thm:knapsack_equivalence}, with a modified requirement $V_{\text{req}}$. The optimal solution to the LP relaxation is obtained by selecting indices $i$  strictly based on increasing  cost $c_i$.
\end{theorem}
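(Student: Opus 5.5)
The plan mirrors the general convex case (Theorem~\ref{thm:knapsack_equivalence}). We reduce the function-value constraint $f(x_\pi^*)-f(x^*)\le\Gamma$ to a linear constraint on the excluded gradient mass, and then reuse the knapsack structure. For the restricted problem, the cost to reach error $\e$ with the optimal restricted distribution is still proportional to $(\sum_{i\in\pi}G_i\sqrt{c_i})^2$. The strongly convex analogue of the cost bound in Eq.~\ref{eq:strongly_convex_cost_to_e} only changes the prefactor, to $4/(\mu(\e-\Gamma)n^2)$. So, exactly as before, the objective is monotone in $\sum_i z_iG_i\sqrt{c_i}$.

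\textbf{Step 1: bound the bias in function value.} Write $f=f_\pi+r_\pi$ with $r_\pi=\frac1n\sum_{i\notin\pi}f_i$, so that $\|\nabla r_\pi\|\le\beta_\pi=\frac1n\sum_{i\notin\pi}G_i$. Since $f$ is $\mu$-strongly convex, we have
\[
f(x_\pi^*)-f(x^*)\le \tfrac{1}{2\mu}\|\nabla f(x_\pi^*)\|^2 .
\]
At $x_\pi^*$ we have $\nabla f_\pi(x_\pi^*)=0$ in the unconstrained case, or the corresponding first-order condition in the constrained case. Hence $\|\nabla f(x_\pi^*)\|=\|\nabla r_\pi(x_\pi^*)\|\le\beta_\pi$, which gives $f(x_\pi^*)-f(x^*)\le \beta_\pi^2/(2\mu)$. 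A sufficient condition for the budget is therefore $\beta_\pi\le\sqrt{2\mu\Gamma}$, that is,
\[
\sum_{i\in\pi}G_i\;\ge\;\sum_{j=1}^nG_j-n\sqrt{2\mu\Gamma}=:V_{\text{req}} .
\]
This has the same form as before, with $\Gamma/D$ replaced by $\sqrt{2\mu\Gamma}$.

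\textbf{Step 2: identify the knapsack.} Minimizing $\sum_i z_iG_i\sqrt{c_i}$ subject to $\sum_i z_iG_i\ge V_{\text{req}}$ with $z_i\in\{0,1\}$ is precisely Eq.~\ref{eq:knapsack}, with item weights $\omega_i=G_i\sqrt{c_i}$ and values $v_i=G_i$. The reduction argument of Theorem~\ref{app:min_cost_knapsack} then applies verbatim.

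\textbf{Step 3: solve the LP relaxation.} Relax to $z_i\in[0,1]$. This is a fractional covering knapsack, and the standard exchange argument applies. Suppose an optimal $z$ has $z_j>0$ and $z_i<1$ with $\omega_i/v_i<\omega_j/v_j$. Moving value $\delta$ from $j$ to $i$ keeps $\sum_k z_kv_k$ fixed. It changes the cost by $\delta(\omega_i/v_i-\omega_j/v_j)<0$, a contradiction. So the optimum fills items in increasing order of the ratio $\omega_i/v_i=\sqrt{c_i}$, with at most one fractional item at the end. Since $G_i$ cancels, the order is by increasing $c_i$, which proves the claim. Items with $G_i=0$ contribute neither value nor cost and can be ignored.

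The main obstacle is Step 1. The equivalence holds for the sufficient condition derived from the gradient-mass bound, not for the exact constraint $f(x_\pi^*)-f(x^*)\le\Gamma$, which is not linear in $z$. I would state the theorem as operating on this certified surrogate, mirroring how the convex case used the upper bound $\beta_\pi$. A secondary difficulty is the constrained-domain case, where $\nabla f_\pi(x_\pi^*)\neq0$. There I would instead use the strong convexity inequality
\[
f(x_\pi^*)-f(x^*)\le\langle\nabla f(x_\pi^*),x_\pi^*-x^*\rangle-\tfrac\mu2\|x_\pi^*-x^*\|^2 .
\]
Optimality of $x_\pi^*$ for $f_\pi$ gives $\langle\nabla f_\pi(x_\pi^*),x_\pi^*-x^*\rangle\le 0$, so the first term is at most $\langle\nabla r_\pi(x_\pi^*),x_\pi^*-x^*\rangle\le\beta_\pi\|x_\pi^*-x^*\|$. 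Maximizing over $\|x_\pi^*-x^*\|$ recovers the same $\beta_\pi^2/(2\mu)$ bound.
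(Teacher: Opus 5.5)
Your proof is correct and is essentially the paper's. The PL bound at $x_\pi^*$, together with $\nabla f_\pi(x_\pi^*)=0$ and the triangle inequality, gives the same requirement $V_{\text{req}}=\sum_j G_j-n\sqrt{2\mu\Gamma}$; the LP relaxation is then the fractional covering knapsack ordered by $\omega_i/v_i=\sqrt{c_i}$. Beyond the paper, you give an exchange argument for the LP, note explicitly that the equivalence holds only for the surrogate (sufficient) constraint, and handle the constrained case via the variational inequality, which the paper's use of $\nabla f_\pi(x_\pi^*)=0$ silently excludes even though Lipschitz components plus strong convexity essentially force a bounded domain. The one loose spot is your opening prefactor claim, which is only heuristic (e.g.\ $f_\pi$ need not be strongly convex), but nothing depends on it because the theorem takes $\sum_i z_iG_i\sqrt{c_i}$ as the objective.
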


\begin{proof}
    Using the Polyak-Lojasiewicz (PL) inequality and the fact that $\nabla f_S(x_S^*) = 0$, the gradient at the proxy minimizer is determined solely by the excluded gradients:
    \[
        \nabla f(x_\pi^*) = \frac{1}{n} \sum_{j \notin \pi} \nabla f_j(x_\pi^*).
    \]
    We bound the norm of this gradient using the triangle inequality ($\|\sum v\| \le \sum \|v\|$) and the gradient upper bound definition ($\|\nabla f_j(x)\| \le G_j$):
    \begin{align*}
        \|\nabla f(x_\pi^*)\| &= \norm*{ \frac{1}{n} \sum_{j \notin \pi} \nabla f_j(x_\pi^*) } \\
        &\le \frac{1}{n} \sum_{j \notin \pi} \norm{ \nabla f_j(x_\pi^*) } \\
        &\le \frac{1}{n} \sum_{j \notin \pi} G_j.
    \end{align*}
    Substituting this back into the PL inequality ($f(x_\pi^*) - f(x^*) \le \frac{1}{2\mu} \|\nabla f(x_\pi^*)\|^2$) yields:
    \[
        f(x_\pi^*) - f(x^*) \le \frac{1}{2\mu n^2} \paren*{ \sum_{j \notin \pi} G_j }^2 = \frac{1}{2\mu n^2} \paren*{ \sum_{i=1}^n (1-z_i) G_i }^2.
    \]
    We enforce the constraint $f(x_\pi^*) - f(x^*) \le \Gamma$. Because the gradient norms $G_i$ are non-negative, we can take the square root of the inequality to linearize the constraint without altering the optimization region:
    \[
        \sum_{i=1}^n (1-z_i) G_i \le n\sqrt{2\mu\Gamma}.
    \]
    This is now a linear constraint on the excluded mass. Let $B = n\sqrt{2\mu\Gamma}$ be the effective budget for the excluded gradients. We rewrite this in terms of the included variables $z_i$:
    \[
        \sum_{j=1}^n G_j - \sum_{i=1}^n z_i G_i \le B \implies \sum_{i=1}^n z_i G_i \ge \sum_{j=1}^n G_j - B.
    \]
    The optimization problem is to minimize $\sum z_i (G_i \sqrt{c_i})$ subject to $\sum z_i G_i \ge V_{\text{req}}$. This is identical to Equation~\ref{eq:knapsack}. 
    This problem is structurally equivalent to the Min-Cost Knapsack formulation in Theorem~\ref{thm:knapsack_equivalence}, with a modified requirement $V_{\text{req}}$. Its LP relaxation is a fractional knapsack problem whose optimal solution is obtained by ordering indices by increasing evaluation cost $c_i$ (equivalently, decreasing density $v_i/\omega_i$).
\end{proof}

\subsection{Sub-Optimality Proofs}
\label{app:proofs_subopt}
\begin{theorem}[Sub-optimality Gap via Cost-Biased Divergence] Let $p^*$ be the optimal cost-aware sampling distribution  and $p'$ be any empirical sampling distribution. Let the expected cost per iteration for a distribution $p$ be $C(p) = \sum_{i=1}^n p_i c_i$.We define the cost-biased distribution $\tilde{p}$ corresponding to $p$ as:
$$\tilde{p}_i = \frac{p_i c_i}{C(p)}.$$
The optimality gap is exactly determined by the Pearson $\chi^2$-divergence between the cost-biased optimal distribution and the cost-biased empirical distribution:
$$\frac{J(p')}{J(p^*)} = 1 + D_{\chi^2}(\tilde{p}^* || \tilde{p}').$$
\end{theorem}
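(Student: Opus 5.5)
The proof is a direct computation. I would write both sides of the identity in closed form and check that they agree, using the explicit formula for $\sfp^*$ from Theorem~\ref{thm:p_star}. Throughout I would work with the bound $J(\sfp) = S(\sfp)C(\sfp) = \frac{1}{n^2}\paren*{\sum_i G_i^2/p_i}\paren*{\sum_i p_i c_i}$ used in Section~\ref{subsec:cost_analysis}. I would also assume $c_i > 0$ and $p'_i > 0$ for every $i$, so that $\tilde{p}'$ has full support and the divergence is finite. If some $G_i = 0$, the corresponding terms simply vanish on both sides.

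\textbf{Step 1 (left-hand side).} Set $W = \sum_j G_j\sqrt{c_j}$. By the Cauchy--Schwarz computation in the proof of Theorem~\ref{thm:p_star}, $J(\sfp^*) = W^2/n^2$. Hence
\[
\frac{J(p')}{J(p^*)} = \frac{C(p')\sum_i G_i^2/p'_i}{W^2}.
\]

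\textbf{Step 2 (the cost-biased optimum).} With $Z = \sum_j G_j/\sqrt{c_j}$ we have $p^*_i = G_i/(\sqrt{c_i}\,Z)$. Then
\[
C(p^*) = \sum_i p^*_i c_i = \frac{W}{Z},
\]
so the normalizer $Z$ cancels and $\tilde{p}^*_i = p^*_i c_i / C(p^*) = G_i\sqrt{c_i}/W$. The point is that the cost-biased optimum is simply proportional to $G_i\sqrt{c_i}$, which is exactly the vector appearing in the Cauchy--Schwarz equality case.

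\textbf{Step 3 (right-hand side).} Since $D_{\chi^2}(P\|Q) = \sum_i P_i^2/Q_i - 1$, it suffices to compute $\sum_i (\tilde{p}^*_i)^2/\tilde{p}'_i$. Substituting $\tilde{p}'_i = p'_i c_i / C(p')$ gives
\[
\sum_i \frac{(\tilde{p}^*_i)^2}{\tilde{p}'_i}
= \sum_i \frac{G_i^2 c_i}{W^2}\cdot\frac{C(p')}{p'_i c_i}
= \frac{C(p')\sum_i G_i^2/p'_i}{W^2}.
\]
The factors $c_i$ cancel termwise, and the result is exactly the expression from Step 1. Therefore
\[
1 + D_{\chi^2}(\tilde{p}^*\|\tilde{p}') = \frac{J(p')}{J(p^*)}.
\]

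There is no real obstacle here. The only step needing care is Step 2: recognizing that cost-biasing $\sfp^*$ removes $Z$ and yields weights proportional to $G_i\sqrt{c_i}$. After that, everything is algebraic cancellation. As a remark, nonnegativity of the $\chi^2$-divergence recovers the optimality of $\sfp^*$ in Theorem~\ref{thm:p_star}.
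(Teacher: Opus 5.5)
Your proposal is correct and follows essentially the paper's argument: a direct algebraic check, using the closed form of $p^*$ and the definition of the cost-biased distribution, that $\sum_i (\tilde{p}^*_i)^2/\tilde{p}'_i$ coincides with $J(p')/J(p^*)$. The only difference is cosmetic. You first compute $\tilde{p}^*_i = G_i\sqrt{c_i}/\sum_j G_j\sqrt{c_j}$ explicitly, whereas the paper substitutes $G_i = Z\,p^*_i\sqrt{c_i}$ and then inverts the cost-biasing map $p_i = \tilde{p}_i\,C(p)/c_i$ for both distributions.
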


\begin{proof*} 

Recall that minimizing the total expected cost is equivalent to minimizing the product $J(p) = S(p)C(p)$. The optimal distribution is defined as $p^*_i = \frac{G_i / \sqrt{c_i}}{Z_{den}}$, where $Z_{den} = \sum_{j=1}^n \frac{G_j}{\sqrt{c_j}}$. We can rearrange this to express the true gradient norms in terms of $p^*_i$:
$$G_i = Z_{den} \, p^*_i \sqrt{c_i}.$$
Substituting this expression for $G_i$ into the variance term $S(p') = \frac{1}{n^2} \sum_{i=1}^n \frac{G_i^2}{p'_i}$  yields the objective for the empirical distribution:
$$J(p') = \frac{Z_{den}^2}{n^2} C(p') \left( \sum_{i=1}^n \frac{(p^*_i)^2 c_i}{p'_i} \right).$$
Evaluating this same expression for the optimal distribution $p^*$ (where $p'_i = p^*_i$) gives $J(p^*) = \frac{Z_{den}^2}{n^2} C(p^*)^2$. Taking the ratio of the two costs isolates the gap:
$$\frac{J(p')}{J(p^*)} = \frac{ C(p') \sum_{i=1}^n c_i \frac{(p^*_i)^2}{p'_i} }{ C(p^*)^2 }$$

By the definition of the cost-biased distribution, we have $p_i = \tilde{p}_i \frac{C(p)}{c_i}$. Substituting this expression for both $p^*$ and $p'$ into the summation term:
$$\sum_{i=1}^n c_i \frac{(p^*_i)^2}{p'_i} = \sum_{i=1}^n c_i \frac{\left( \tilde{p}^*_i \frac{C(p^*)}{c_i} \right)^2}{\tilde{p}'_i \frac{C(p')}{c_i}}.$$
Simplifying the terms inside the sum:$$= \sum_{i=1}^n c_i \frac{ (\tilde{p}^*_i)^2 \frac{C(p^*)^2}{c_i^2} }{ \tilde{p}'_i \frac{C(p')}{c_i} } = \frac{C(p^*)^2}{C(p')} \sum_{i=1}^n \frac{(\tilde{p}^*_i)^2}{\tilde{p}'_i}.$$

 Substituting this simplified summation back into our original cost ratio:
$$\frac{J(p')}{J(p^*)} = \frac{C(p')}{C(p^*)^2} \left( \frac{C(p^*)^2}{C(p')} \sum_{i=1}^n \frac{(\tilde{p}^*_i)^2}{\tilde{p}'_i} \right)=\sum_{i=1}^n \frac{(\tilde{p}^*_i)^2}{\tilde{p}'_i}.$$
Using the definition of the Pearson $\chi^2$-divergence, $D_{\chi^2}(P || Q)=\sum \frac{P^2}{Q} -1$, we arrive at the identity.
\end{proof*}

\begin{theorem}
\label{thm:proxy_subopt}
Let $p^*$ be the optimal cost-aware sampling distribution using the true gradient norms $G_i$, and let $p'$ be the empirical sampling distribution using estimated gradient norms $G'_i$, such that $p'_i \propto G'_i / \sqrt{c_i}$.Assume the estimates follow an additive noise model $G'_i = G_i + \epsilon_i$, where $\epsilon_i$ are independent, zero-mean noise terms with variance $\sigma_\epsilon^2$, independent of $c_i$. Let $\rho$ be the Pearson correlation coefficient between the true norms $G_i$ and the estimates $G'_i$. Assuming the noise $\epsilon_i$ is relatively small compared to $G_i$, the expected optimality gap is approximately bounded by:$$\frac{\mathbb{E}[J(p')]}{J(p^*)} \approx 1 + \left( \frac{1 - \rho^2}{\rho^2} \right) \sigma_G^2 \frac{\sum_{i=1}^n \frac{\sqrt{c_i}}{G_i}}{\sum_{i=1}^n G_i \sqrt{c_i}}$$where $\sigma_G^2$ is the variance of the true gradient norms.
\end{theorem}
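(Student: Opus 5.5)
The plan is to write $J(p')$ in closed form, expand it to second order in the noise $\epsilon_i$, take expectations, and then replace the noise variance by an expression in $\rho$.

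\emph{Closed form for $J(p')$.} Write $p'_i = (G'_i/\sqrt{c_i})/Z'$ with $Z' = \sum_j G'_j/\sqrt{c_j}$. Then
\[
C(p') = \frac{1}{Z'}\sum_i G'_i\sqrt{c_i},
\qquad
S(p') = \frac{1}{n^2}\sum_i \frac{G_i^2}{p'_i} = \frac{Z'}{n^2}\sum_i \frac{G_i^2\sqrt{c_i}}{G'_i}.
\]
The normalizer $Z'$ cancels in the product, so
\[
J(p') = \frac{1}{n^2}\Big(\sum_i G'_i\sqrt{c_i}\Big)\Big(\sum_i \frac{G_i^2\sqrt{c_i}}{G'_i}\Big).
\]
Setting $A = \sum_i G_i\sqrt{c_i}$, Theorem~\ref{thm:p_star} gives $J(p^*) = A^2/n^2$. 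So the target reduces to computing $\E[\,\cdot\,]/A^2$ for the product above.

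\emph{Small-noise expansion.} Substitute $G'_i = G_i+\epsilon_i$. The first factor is exactly $A + \sum_i \epsilon_i\sqrt{c_i}$. For the second factor I would use the second-order Taylor expansion $1/(G_i+\epsilon_i) \approx G_i^{-1} - \epsilon_i G_i^{-2} + \epsilon_i^2 G_i^{-3}$; this is where the hypothesis that $\epsilon_i$ is small relative to $G_i$ enters. It gives
\[
\sum_i \frac{G_i^2\sqrt{c_i}}{G'_i} \approx A - \sum_i\epsilon_i\sqrt{c_i} + \sum_i \frac{\epsilon_i^2\sqrt{c_i}}{G_i}.
\]
Multiply the two factors, keep terms up to second order in $\epsilon$, and take expectations using independence and zero mean. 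The linear terms vanish. The quadratic contributions are $A\,\sigma_\epsilon^2\sum_i \sqrt{c_i}/G_i$ and the cross term $-\E\big[(\sum_i\epsilon_i\sqrt{c_i})^2\big] = -\sigma_\epsilon^2\sum_i c_i$. Dividing by $A^2$,
\[
\frac{\E[J(p')]}{J(p^*)} \approx 1 + \sigma_\epsilon^2\,\frac{\sum_i \sqrt{c_i}/G_i}{\sum_i G_i\sqrt{c_i}} - \sigma_\epsilon^2\,\frac{\sum_i c_i}{A^2}.
\]

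\emph{Main obstacle: the extra negative term.} The last term does not appear in the statement. It is nonpositive, so dropping it leaves an approximate upper bound, which matches the phrase ``approximately bounded by.'' Theorem~\ref{thm:distribution_subopt} independently guarantees the exact ratio is at least $1$, so the kept positive term is the dominant effect. I would justify the drop this way rather than claim the two sides are equal.

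\emph{Converting to $\rho$.} Since $\epsilon$ is independent of $G$ (viewing $i$ as uniform over indices), $\Cov(G,G') = \sigma_G^2$ and $\Var(G') = \sigma_G^2 + \sigma_\epsilon^2$. Hence $\rho^2 = \sigma_G^2/(\sigma_G^2+\sigma_\epsilon^2)$, so $\sigma_\epsilon^2 = \sigma_G^2(1-\rho^2)/\rho^2$. Substituting this yields the stated expression.
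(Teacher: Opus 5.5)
Your proposal is correct and follows the paper's route: the same closed form for $J(p')$, the same second-order Taylor expansion of $1/G'_i$, and the same substitution $\sigma_\epsilon^2=\sigma_G^2(1-\rho^2)/\rho^2$. The one difference is the cross term $-\sigma_\epsilon^2\sum_i c_i/A^2$ (with your $A=\sum_i G_i\sqrt{c_i}$). The paper drops it by treating the two sums as weakly dependent for large $n$, i.e.\ $\E[XY]\approx\E[X]\E[Y]$. You compute it and drop it because it is nonpositive, which is slightly more careful and makes the ``approximately bounded by'' wording precise. The two justifications agree: since $\sum_i c_i\le A\sum_i\sqrt{c_i}/G_i$, the dropped term is at most the kept one and is typically only an $O(1/n)$ fraction of it.
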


\begin{proof*} The cost objective for a distribution $p$ is $J(p) = S(p)C(p)$. For the estimated distribution $p'$, we have $p'_i = \frac{G'_i / \sqrt{c_i}}{\sum_{j=1}^n G'_j / \sqrt{c_j}}$. Substituting this into the cost objective yields:
$$J(p') = \frac{1}{n^2} \left( \sum_{i=1}^n \frac{G_i^2 \sqrt{c_i}}{G'_i} \right) \left( \sum_{i=1}^n G'_i \sqrt{c_i} \right).$$

Under the model $G'_i = G_i + \epsilon_i$, the covariance between $G$ and $G'$ is strictly the variance of $G$, namely $Cov(G, G') = \sigma_G^2$. The variance of $G'$ is $Var(G') = \sigma_G^2 + \sigma_\epsilon^2$. The Pearson correlation coefficient $\rho$ is defined as:
$$\rho = \frac{Cov(G, G')}{\sigma_G \sigma_{G'}} = \frac{\sigma_G^2}{\sigma_G \sqrt{\sigma_G^2 + \sigma_\epsilon^2}} = \frac{\sigma_G}{\sqrt{\sigma_G^2 + \sigma_\epsilon^2}}$$
Solving for the noise variance $\sigma_\epsilon^2$ in terms of $\rho$:
$$\sigma_\epsilon^2 = \sigma_G^2 \left( \frac{1 - \rho^2}{\rho^2} \right).$$

Because $G'_i$ appears in the denominator of the second moment, we apply a second-order Taylor expansion of $1/G'_i$ centered around $G_i$:$$\frac{1}{G_i + \epsilon_i} \approx \frac{1}{G_i} - \frac{\epsilon_i}{G_i^2} + \frac{\epsilon_i^2}{G_i^3}$$Substituting this into the first term of $J(p')$ and taking the expectation with respect to the noise $\epsilon$:

$$\mathbb{E}\left[ \frac{G_i^2 \sqrt{c_i}}{G'_i} \right] \approx G_i^2 \sqrt{c_i} \left( \frac{1}{G_i} - \frac{\mathbb{E}[\epsilon_i]}{G_i^2} + \frac{\mathbb{E}[\epsilon_i^2]}{G_i^3} \right) = G_i \sqrt{c_i} + \sigma_\epsilon^2 \frac{\sqrt{c_i}}{G_i}.$$

Taking the expectation of the full cost $J(p')$, noting that the two sums are weakly dependent for large $n$ and $\mathbb{E}[G'_i] = G_i$:$$\mathbb{E}[J(p')] \approx \frac{1}{n^2} \left( \sum_{i=1}^n \left( G_i \sqrt{c_i} + \sigma_\epsilon^2 \frac{\sqrt{c_i}}{G_i} \right) \right) \left( \sum_{i=1}^n G_i \sqrt{c_i} \right)$$$$\mathbb{E}[J(p')] \approx \frac{1}{n^2} \left( \sum_{i=1}^n G_i \sqrt{c_i} \right)^2 + \frac{\sigma_\epsilon^2}{n^2} \left( \sum_{i=1}^n \frac{\sqrt{c_i}}{G_i} \right) \left( \sum_{i=1}^n G_i \sqrt{c_i} \right)$$Recognizing that $J(p^*) = \frac{1}{n^2} \left( \sum_{i=1}^n G_i \sqrt{c_i} \right)^2$, we divide by $J(p^*)$:$$\frac{\mathbb{E}[J(p')]}{J(p^*)} \approx 1 + \sigma_\epsilon^2 \frac{\sum_{i=1}^n \frac{\sqrt{c_i}}{G_i}}{\sum_{i=1}^n G_i \sqrt{c_i}}.$$
Finally, substituting $\sigma_\epsilon^2 = \sigma_G^2 \left( \frac{1 - \rho^2}{\rho^2} \right)$ completes the proof.
\end{proof*}

%% file: neurips/appendix/lower_bound.tex
\newpage
\section{Lower bound}
\label{app:lb}

We consider the minimization of a finite-sum objective:
\begin{align*}
    \min_{x \in \mathcal{X}} F(x) = \frac{1}{n} \sum_{i=1}^n f_i(x).
\end{align*}
where each $f_i: \mathcal{X} \to \mathbb{R}$ is convex and $G$-Lipschitz continuous. We assume access to an exact first-order oracle: when queried at index $i$ and point $x$, the oracle returns the exact gradient $\nabla f_i(x)$.
Crucially, we assume there is a variable cost $c_i > 0$ to querying the $i$-th component (at any point $x$). 

The goal is to minimize the total expected accumulated cost required to output a point $\hat{x}$ such that $\mathbb{E}[F(\hat{x}) - F(x^*)] \le \epsilon$, where the expectation is taken over any randomization in the algorithm producing $\hat x$.

\begin{theorem} 
Fix $\epsilon, G > 0$. For any number of components $n \ge G^2/\epsilon^2$, any set of nonnegative query costs $\{c_i\}_{i=1}^n$, and any (possibly randomized) algorithm with an expected error guarantee $\mathbb{E}[F(\hat{x}) - \min F(x)] \le \epsilon$, there exists a convex, $G$-Lipschitz finite-sum problem instance over $\mathcal{X} = [-1,1]$ such that the algorithm's expected total query cost is at least:
\begin{align*}
    \Omega\left( \frac{G^2}{\epsilon^2} \left( \frac{1}{n} \sum_{i \in S^*} \sqrt{c_i} \right)^2 \right),
\end{align*}
where $S^* \subseteq [n]$ is any set of components such that following cost-uniformity condition holds:
\begin{align} 
 \label{eq:lb-cost-uniform}
    \frac{\max_{i \in S^*} \sqrt{c_i}}{\tfrac1n \sum_{j \in S^*} \sqrt{c_j}} \le \frac{G}{40\epsilon}.
\end{align}
\end{theorem}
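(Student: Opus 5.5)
We reduce to a hidden-bias identification problem in one dimension and combine an information-theoretic count of queries with a cost-minimizing allocation argument. Fix $S^*$ satisfying \eqref{eq:lb-cost-uniform}, write $m=\abs{S^*}$, $s_i=\sqrt{c_i}$ and $\bar s=\frac1n\sum_{j\in S^*}s_j$. Components outside $S^*$ are set to $f_i\equiv 0$; they carry no information, so querying them only adds cost. For $i\in S^*$ draw a hidden sign $\sigma_i\in\{\pm1\}$ i.i.d.\ with $\Pr[\sigma_i=+1]=\tfrac12+\sigma\delta$, where $\sigma\in\{\pm1\}$ is a global hidden bit, and set $f_i(x)=G\,\sigma_i x$ on $[-1,1]$. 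Querying component $i$ anywhere reveals exactly $\sigma_i$. Then $F(x)=G\,x\,\frac1n\sum_{i\in S^*}\sigma_i$, whose mean slope is $\approx 2G\sigma\delta m/n$. Any $\hat x$ with the wrong sign relative to $\sigma$ incurs error of order $G\delta m/n$. Choosing $\delta\asymp \epsilon n/(Gm)$ (with a small constant) forces the algorithm to identify $\sigma$ with constant probability, after accounting for fluctuations of the realized average and for the requirement $\delta\le 1/4$. Here $n\ge G^2/\epsilon^2$ and the cost-uniformity condition keep $\delta$ in the valid range, since $m\ge n\bar s/\max s_i \ge 40\epsilon n/G$. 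A Yao-style averaging over $\sigma$ handles randomized algorithms.

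Next, for a fixed number of \emph{distinct} components queried, identifying a $\pm\delta$ bias of a coin requires $\Omega(1/\delta^2)$ independent samples, by a KL/Le Cam bound: the KL between the two product measures over the queried set $Q$ is $O(\delta^2\abs{Q})$. Repeated queries of the same component return nothing new. Hence any successful algorithm must have $\E\abs{Q}\gtrsim 1/\delta^2\asymp G^2m^2/(\epsilon^2 n^2)$. We will need to handle the adaptive, random stopping time, via Wald/KL chain rule with $\E\abs Q$.

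The hidden identities matter for cost. To make cost enter in a way that matches $\bar s^2$ rather than just $\min c_i$, we randomize which components of $S^*$ are informative. A refinement replaces the global bias by weighting: we make component $i$'s bias $\delta_i\propto s_i$ (expensive components are more informative), namely $\Pr[\sigma_i=+1]=\tfrac12+\sigma\delta_i$ with $\delta_i=\delta_0 s_i/\bar s\cdot(m/n)$-normalized. Then the KL collected is $\sum_{i\in Q}\delta_i^2$ while cost is $\sum_{i\in Q}s_i^2$. Each query thus yields information per unit cost $\delta_i^2/c_i\asymp \delta_0^2/\bar s^2$, uniformly, so every strategy needs cost $\gtrsim \bar s^2/\delta_0^2$. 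The mean slope is $\propto\frac1n\sum_{i\in S^*}\delta_i$, so $\delta_0\asymp\epsilon/G$ up to normalization, giving cost $\Omega(\frac{G^2}{\epsilon^2}\bar s^2)$, which is the claimed bound. The cost-uniformity condition \eqref{eq:lb-cost-uniform} is exactly what guarantees $\max_i\delta_i\le\tfrac14$, so all biases are legitimate probabilities and the KL is quadratic.

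The main obstacle is making this weighted-bias construction rigorous. Each $\delta_i$ must be a valid bias, and the error-to-testing reduction must hold despite the random realized slope. Concentration of $\frac1n\sum\sigma_i$ needs variance $\frac{m}{n^2}\lesssim(\epsilon/G)^2$, which follows from $n\ge G^2/\epsilon^2$. Finally, the KL bound must handle adaptive queries with a random total cost. I would first try to bound $\mathrm{KL}\le O(\sum_i \delta_i^2\Pr[i\in Q])$ and apply Cauchy–Schwarz against $\E[\text{cost}]=\sum_i c_i\Pr[i\in Q]$.
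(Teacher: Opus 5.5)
Your proposal is correct and is essentially the paper's argument. Both use a Le Cam two-point mixture over one-dimensional linear instances whose per-component slope means are shifted by an amount proportional to $\sqrt{c_i}$ on $S^*$ (zero off $S^*$), and both bound the KL of the adaptive transcript by $\sum_i \Pr[i \text{ queried}]\cdot O(\delta_i^2)$, using that repeated queries of a linear component reveal nothing new. The information per unit cost $\delta_i^2/c_i$ is then uniform, and the cost-uniformity condition is exactly what keeps the largest shift admissible.

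The only real difference is that you use biased $\pm G$ slopes instead of the paper's clipped Gaussians $\mathrm{clip}(\mathcal{N}(\pm\Delta_i, G^2/4))$, which removes the clipping and mean-value step. Also, no Cauchy--Schwarz is needed at the end: with $\delta_i = \delta_0\sqrt{c_i}/\bar s$ the identity $\sum_i \delta_i^2 \Pr[i\in Q] = (\delta_0/\bar s)^2 \sum_i c_i \Pr[i\in Q]$ is already linear in the expected cost.
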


\paragraph{Remark.} If we let the costs be sorted $c_1 \le c_2 \le \ldots \le c_n$, then the set $S^*$ that maximizes the lower bound (if it is nonempty) must be of the form $S = [k]$ for some $1 \leq k \leq n$, namely, the set of $k$ cheapest costs. Indeed, if $j \in S^*$ then also $i \in S^*$ for any $i < j$, since otherwise including $i$ into $S^*$ increases the lower bound, while still satisfying \cref{eq:lb-cost-uniform}: it can only increase the denominator on the LHS, while not affecting the numerator. 

For reasonably uniform costs, say perfectly uniform up to a small (constant) fraction of arbitrarily large outliers, the condition in \cref{eq:lb-cost-uniform} is satisfied by fixing $S^*$ to contain all components except for the outliers, as long as $G/\epsilon$ is larger than a constant.
It could be, however, that the maximal $S^*$ is empty (and the lower bound is zero) if the costs are highly non-uniform. For example, if the costs are exponentially increasing, say $c_i = 4^i$ for all $i$, then the ratio between the maximum and the sum of $\sqrt{c_i}$ over any set of the form $S^* = [k]$ is between $\tfrac12$ and $1$, and so the cost-uniformity condition implies that $\epsilon \leq \frac{G}{20 n}$. Thus, for any larger value of $\epsilon$ there is no set $S^*$ for which the condition is satisfied.

\begin{proof}[Proof of \cref{thm:lb}]
The proof proceeds by constructing a hard distribution of instances based on linear functions and applying information-theoretic lower bounds. We break the argument into four steps.

\paragraph{Step 1: Definition of hard instance.}
Consider the 1-dimensional domain $\mathcal{X} = [-1, 1]$. We define the component functions as linear functions:
\begin{align*}
    \forall ~ i \in [n] :
    \qquad
    f_i(x) = -b_i x,
\end{align*}
where $b_i \in \mathbb{R}$ is a slope parameter. 
The global objective is $F(x) = -x \cdot \bar b$, where $\bar b = \tfrac1n \sum_{i=1}^n b_i$. The minimizer $x^*$ depends entirely on the sign of the average slope $\bar b$; namely, $x^* = \mathrm{sign}(\bar b)$.
The value of the minimum is $F(x^*) = -|\bar{b}|$, and for any candidate $\hat{x} \in [-1, 1]$, the optimality gap is
$
    F(\hat{x}) - F(x^*) = \bar{b}(x^* - \hat{x}).
$
In particular, for any algorithm producing $\hat x$,
\begin{equation} \label{eq:lb-exp-err}
    \mathbb{E}[ F(\hat{x}) - F(x^*) ] 
    \geq 
    \bar{b} \cdot \mathbb{P}[\mathrm{sign}(\hat{x}) \neq \mathrm{sign}(\bar b)]
    .
\end{equation}
%
We define the hard distribution $\mathcal{D}$ as the uniform mixture of two priors, $\mathcal{D}_0$ and $\mathcal{D}_1$, parameterized by $\Delta_1,\ldots,\Delta_n \geq 0$:
\begin{itemize}
    \item Under $\mathcal{D}_0$: $b_i = \text{clip}(z_i)$, where $z_i \sim \mathcal{N}(+\Delta_i, \sigma^2)$ are i.i.d., $\sigma = G/2$;
    \item Under $\mathcal{D}_1$: $b_i = \text{clip}(z_i)$, where $z_i \sim \mathcal{N}(-\Delta_i, \sigma^2)$ are i.i.d., $\sigma = G/2$.
\end{itemize}
Here, $\text{clip}(z_i) = \max(-G, \min(G, z))$ is a truncation to $[-G,G]$ that strictly enforces the Lipschitz condition (deterministically). The $b_i$ are then distributed according to a truncated Gaussian distribution with variance $\sigma^2$ strictly smaller than $G^2$. 
For reasons that will become apparent in later steps, we will set the parameters $\Delta_i$ so that the following conditions are met:
\begin{align} \label{eq:lb-delta-conditions}
    \frac{1}{n} \sum_{i=1}^n \Delta_i = 20\epsilon,
    \qquad\text{and}\qquad
    \Delta_i \le \frac{G}{2} \quad \text{for all $i \in [n]$}.
\end{align} 

\paragraph{Step 2: Reduction to hypothesis testing.}

Define the hypothesis testing error probability as 
\begin{align*}
    \text{$p_{err} = \tfrac12 p_{err}^0 + \tfrac12 p_{err}^1$, \; where \; $p_{err}^0 = \mathbb{P}_{\mathcal{D}_0}(\hat{x} < 0)$ \; and \; $p_{err}^1 = \mathbb{P}_{\mathcal{D}_1}(\hat{x} > 0$).}
\end{align*}
We will next show that the expected error (\cref{eq:lb-exp-err}) is lower bounded in terms of $p_{err}$.

Conditioning on $\mathcal{D}_0$, a sufficient condition for the algorithm to fail is if $\hat{x} \le 0$ and $\bar b > 0$, in which case the optimization error is at least $|\bar{b}|$; thus
\begin{align} \label{eq:lb-exp-err-D0}
    \mathbb{E}_{\mathcal{D}_0}[F(\hat{x}) - F(x^*) ] 
    \ge 
    \mathbb{E}_{\mathcal{D}_0}[ |\bar{b}| \cdot \mathbb{I}(\hat{x} \le 0) \cdot \mathbb{I}(\bar{b} > 0) ]
    .
\end{align}
We next bound the expected value (under $\mathcal{D}_0$) of the mean of truncate variables.
Let $g(\delta) = \mathbb{E}[\text{clip}(Z+\delta, -G, G)]$ where $Z \sim \mathcal{N}(0, \sigma^2)$. The mean of the clipped variable is $\mathbb{E}[b_i] = g(\Delta_i)$. By the mean value theorem, $g(\Delta_i) = g(0) + g'(\xi)\Delta_i = g'(\xi)\Delta_i$ for some $\xi \in (0, \Delta_i)$. The derivative is $g'(t) = \mathbb{P}(-G \le Z+t \le G)$. With $\sigma = G/2$ and $\Delta_i \le G/2$, the interval $[-G-\Delta_i, G-\Delta_i]$ always contains $[-G/2, G/2] = [-\sigma, \sigma]$. The Gaussian mass in this interval is $> 0.5$, thus $\mathbb{E}[b_i] \ge 0.5 \Delta_i$.
The aggregate mean then satisfies $\mathbb{E}[\bar{b}] \ge 0.5 (\frac{1}{n} \sum_{i=1}^n \Delta_i) = 10\epsilon$.

The truncated variables $b_i$ are independent and bounded in $[-G, G]$. By Hoeffding's inequality, $\bar{b}$ concentrates around its expectation: defining the event $A = \{ \bar{b} \ge 5\epsilon \}$, the probability of the complement $\delta' = \mathbb{P}_{\mathcal{D}_0}(A^c)$ is bounded by
\begin{align*}
    \delta' 
    \le \mathbb{P}_{\mathcal{D}_0}\left(\bar{b} 
    \le \mathbb{E}_{\mathcal{D}_0}[\bar{b}] - 5\epsilon\right) 
    \le \exp\left(-\frac{2 n^2 (5\epsilon)^2}{n (2L)^2}\right) 
    \le \exp\left(-\frac{3 n \epsilon^2}{G^2}\right)
    \leq 0.05
    ,
\end{align*}
where the final inequality is true since we assume $n \ge G^2/\epsilon^2$.
Conditioning on the event $A$ (where $\bar{b} \ge 5\epsilon > 0$) and recalling \cref{eq:lb-exp-err-D0}, we have
\begin{align*}
    \mathbb{E}_{\mathcal{D}_0}[F(\hat{x}) - F(x^*) ] 
    &\ge \mathbb{E}_{\mathcal{D}_0}[ |\bar{b}| \cdot \mathbb{I}(\hat{x} \le 0) \cdot \mathbb{I}(A) ] \\
    &\ge 5\epsilon \, \mathbb{P}_{\mathcal{D}_0}(\{\hat{x} \le 0\} \cap A) \\
    &\ge 5\epsilon (\mathbb{P}_{\mathcal{D}_0}(\hat{x} \le 0) - \mathbb{P}_{\mathcal{D}_0}(A^c)) \\
    &= 5\epsilon (p_{err}^0 - \delta').
\end{align*}
Similarly, $\mathbb{E}_{\mathcal{D}_1}[F(\hat{x}) - F(x^*) ] \ge 5\epsilon (p_{err}^1 - \delta')$, thus $\mathbb{E}_{\mathcal{D}}[F(\hat{x}) - F(x^*) ] \ge 5\epsilon (p_{err} - \delta')$.
The guarantee $\mathbb{E}[F(\hat{x}) - F(x^*)] \le \epsilon$ implies $5\epsilon (p_{err} - 0.05) \le \epsilon$, which leads to $p_{err} \le 0.2 + 0.05 = 0.25$.
Thus, the algorithm must identify the correct hypothesis with probability at least $0.75$.

\paragraph{Step 3: Information-theoretic lower bound.}

Let $P$ and $Q$ denote the probability distributions of the entire sequence of oracle responses observed by the algorithm when the instance is drawn from $\mathcal{D}_0$ and $\mathcal{D}_1$, respectively. Let $P'$ and $Q'$ denote the distributions of the underlying unclipped Gaussian vectors $z = (z_1, \dots, z_n)$ under $\mathcal{D}_0$ and $\mathcal{D}_1$. Since the observed gradients $-b_i$ are deterministic functions of $z_i$ (specifically, $b_i = \text{clip}(z_i)$), the data processing inequality for KL divergences implies:
\begin{equation*}
    D_{KL}(P \;\|\; Q) \le D_{KL}(P' \;\|\; Q').
\end{equation*}
The variables $z_i$ are independent, and $z_i \sim \mathcal{N}(\Delta_i, \sigma^2)$ under $\mathcal{D}_0$, while $z_i \sim \mathcal{N}(-\Delta_i, \sigma^2)$ under $\mathcal{D}_1$. 
The KL divergence between two univariate Gaussians with means $\mu_1, \mu_2$ and common variance $\sigma^2$ is $\tfrac12{(\mu_1 - \mu_2)^2}/{\sigma^2}$. Here, the separation is $2\Delta_i$ and $\sigma^2 = G^2/4$, so the KL divergence for $z_i$ is upper bounded by ${8\Delta_i^2}/{G^2}$.
Since the gradient oracles are exact and the objectives are linear, repeated queries to the same component yield identical results. Thus, information is gained only from the set of distinct components queried. Let $q_i$ be the probability (under $\mathcal{D}_0$) that component $i$ is queried at least once by the algorithm. By the chain rule for KL divergence (and linearity of expectation), the total divergence is bounded by:
\begin{equation} \label{eq:lb-kl_bound}
    D_{KL}(P \;\|\; Q) 
    \le D_{KL}(P' \;\|\; Q')
    \le \sum_{i=1}^n q_i \frac{8\Delta_i^2}{G^2}.
\end{equation}
To relate the divergence to the error probability $p_{err}$, we use Le Cam's argument, which states that for any hypothesis testing problem with uniform prior, 
$
    p_{err} \ge \tfrac12 \left( 1 - TV(P, Q) \right),
$
where $TV(P, Q)$ is the total variation distance. Given our requirement that $p_{err} \le \tfrac14$, we must have:
\begin{equation*}
    \tfrac14 \ge \tfrac12 \left( 1 - TV(P, Q) \right) 
    \implies 
    TV(P, Q) \ge \tfrac12.
\end{equation*}
By Pinsker's inequality, 
$
    TV(P, Q) \le \sqrt{\tfrac12 D_{KL}(P \;\|\; Q)},
$
which yields the required lower bound on the KL divergence:
\begin{equation*}
    \tfrac12 \le \sqrt{\tfrac12 D_{KL}(P \;\|\; Q)} 
    \implies D_{KL}(P \;\|\; Q) \ge \tfrac12
    .
\end{equation*}
Substituting the upper bound from \cref{eq:lb-kl_bound} yields the condition:
\begin{equation} \label{eq:lb-condition}
    \sum_{i=1}^n q_i \Delta_i^2 \ge \frac{G^2}{16}.
\end{equation}

\paragraph{Step 4: Optimizing the lower bound.}

From \cref{eq:lb-condition}, any algorithm solving the problem must satisfy the information constraint $\sum_{i=1}^n q_i \Delta_i^2 \ge {G^2}/{16}$. Since the total expected query cost $\mathcal{C}$ (under the mixture distribution $\mathcal{D}$) is lower bounded by half times the expected cost under $\mathcal{D}_0$, we also have $\mathcal{C} \ge \tfrac12 \sum_{i=1}^n q_i c_i$.
%
%
For a fixed set of parameters $\Delta_i$, we can lower bound the algorithm's expected cost in terms of the ``information density'' $\rho_i = \Delta_i^2/c_i$ of the components; indeed, 
\begin{align*}
    \frac{G^2}{16} \le \sum_{i=1}^n q_i \Delta_i^2 = \sum_{i=1}^n (q_i c_i) \rho_i \le \left(\sum_{i=1}^n q_i c_i\right) \max_i \rho_i
    \implies \mathcal{C} \ge \frac{G^2}{32 \max_i \rho_i}
    .
\end{align*}
To maximize this lower bound, the adversary minimizes $\max \rho_i$ subject to the constraints $\frac{1}{n}\sum_{i=1}^n \Delta_i = 20\epsilon$ and $\Delta_i \leq G/2$ for all $i$ (recall \cref{eq:lb-delta-conditions}). The minimum is achieved when the ratios $\rho_i$ are equalized across the active components.
Thus, we set $\Delta_i = \alpha \sqrt{c_i}$ for components $i$ in an ``active'' subset $S^* \subseteq [n]$, and $\Delta_i = 0$ otherwise.
%
The scaling factor $\alpha$ is determined using the constraint $\frac{1}{n}\sum_{i=1}^n \Delta_i = 20\epsilon$:
\begin{equation} \label{eq:alpha_val}
    \frac{1}{n} \sum_{i \in S^*} (\alpha \sqrt{c_i}) = 20\epsilon \implies \alpha = \frac{20 n \epsilon}{\sum_{i \in S^*} \sqrt{c_i}}.
\end{equation}
To certify the additional condition $\Delta_i \le G/2$ for all $i$, we require for all $i \in S^*$:
\begin{align*}
    \alpha \sqrt{c_i} \le \frac{G}{2} 
    \iff \frac{20 n \epsilon \sqrt{c_i}}{\sum_{j \in S^*} \sqrt{c_j}} \le \frac{G}{2} 
    \iff \frac{ \max_{i \in S^*} \sqrt{c_i}}{\frac1n \sum_{j \in S^*} \sqrt{c_j}} \le \frac{G}{40 \epsilon}.
\end{align*}
This is the precisely the uniformity condition given in the theorem statement. The subset $S^*$ is defined as the largest set of cheapest components satisfying this condition.
Finally, substituting $\alpha$ back into the cost lower bound, we conclude:
\begin{align*}
    \mathcal{C} 
    \ge \frac{G^2}{32} \left( \frac{\sum_{i \in S^*} \sqrt{c_i}}{20 n \epsilon} \right)^2 
    = \frac{G^2}{12800 \epsilon^2} \left( \frac{1}{n} \sum_{i \in S^*} \sqrt{c_i} \right)^2.
\end{align*}
\end{proof}

%% file: neurips/appendix/theory_vs_practice.tex
\newpage
\section{Extensions and Generality of the Theoretical Analysis}
\label{app:extensions}

In this section, we discuss the generality of the theory in Sections~\ref{sec:importance_sampling} and~\ref{app:subset_selection}. In particular, for the main paper we assume convexity in order to derive meaningful theoretical statements. Given that our empirical setting does not optimize a convex loss function, we address the differences here. Importantly, we emphasize that our theory does not strictly rely on convexity. 

\subsection{Differences between Theory and Practice}
In the GRPO experiments, our training objective is not convex. Additionally, we do not use the averaged iterate but instead the last iterate for reporting final error.  Furthermore, in GRPO, the objective changes every time a new set of rollouts is collected. As such, one can think of our implementation as many rounds of cost-aware learning. With every new set of rollouts, we define a new sampling distribution over the newly generated pool. The policy is then optimized by iterating over mini-batches sampled from this reweighted pool, ensuring the cost-aware distribution actively shapes the variance of the gradient updates at each step.

\subsection{Generality of the Cost Objective in Non-Convex Settings}
\label{app:nonconvex_generality}

The mathematical optimality of $p^*$ is not strictly limited to the convex setting. Its derivation depends solely on the relationship between iteration complexity ($T$) and estimator variance ($S(p)$). Across various smoothness conditions, the number of iterations $T$ required to reach an $\epsilon$-stationary point is fundamentally bottlenecked by the stochastic gradient variance. Specifically:

\begin{itemize}
    \item \textbf{Non-Convex SGD:} Reaching an $\epsilon$-stationary point requires an iteration complexity of $T \propto \frac{S(p)}{\epsilon^2}$ \citep{ghadimi2013stochastic}.
    \item \textbf{Adaptive Optimizers (Adam):} Recent theoretical analyses of Adam-type algorithms demonstrate a convergence rate of $\tilde{\mathcal{O}}\left( \frac{\sqrt{S(p)}}{\sqrt{T}} \right)$ \citep{chen2018convergence}, implying $T \propto \frac{S(p)}{\epsilon^2}$ (modulo sub-polynomial logarithmic factors).
    \item \textbf{Generalized Smoothness:} LLM loss landscapes exhibit generalized smoothness, where local curvature grows proportionally with the gradient norm. Optimization theory establishes that gradient clipping and adaptive methods guarantee convergence in this regime \citep{zhang2020gradientclippingacceleratestraining}. Crucially, this theory establishes that the dominant stochastic term in the iteration complexity explicitly retains the $\mathcal{O}\left(\frac{S(p)}{\epsilon^2}\right)$ dependence on the variance $S(p)$.
\end{itemize}
Because the total expected training cost $\mathcal{K}$ is the product of iterations and the expected per-step cost ($T \cdot C(p)$), the objective to minimize reduces to:
$$J(p) = S(p)C(p)$$
Since this objective remains structurally identical across these diverse optimization settings, minimizing the variance-cost product via $p^*$ continues to be the precise optimal strategy. Thus, the optimal distribution $p_i^* \propto G_i / \sqrt{c_i}$ derived in Theorem~\ref{thm:p_star} remains justified in non-convex regimes.

\subsection{Average Iterate ($\bar{x}_T$) vs. Last Iterate ($x_T$)}
While Algorithm~\ref{alg:cost_weighted_sgd} returns the average iterate $\bar{x}_T$ following standard conventions in the SGD literature, the properties of the last iterate $x_T$ have also been extensively studied \citep{shamir2013stochastic, pmlr-v99-jain19a, pmlr-v99-harvey19a}. As shown by \citet{pmlr-v99-harvey19a}, for a fixed step size, the last-iterate of SGD is a factor of $\log(T)$ worse than the average. While the average iterate generally achieves superior performance—sometimes strictly so—specific step-size modification schemes \citep{pmlr-v99-jain19a} allow the expected suboptimality of the last iterate to enjoy the same convergence guarantees. Consequently, sampling with the optimal distribution $p^*$ derived in Theorem~\ref{thm:p_star} provides similar theoretical guarantees for the last iterate as we establish for the averaged iterate.

%% file: neurips/appendix/experiment_details.tex
\newpage
\section{Experimental Details and Ablations}
\label{app:experiment_details}
\subsection{Experimental Details}
To ensure the reproducibility of our experiments, we provide a comprehensive overview of the hyperparameter configurations and the computational framework employed. Our implementation is built upon the Verl framework, utilizing vLLM \citep{kwon2023efficient} for efficient inference and Flash-Attention \citep{dao2022flashattention} for optimized memory throughput.

\subsection{Implementation}
The primary modifications to the Verl codebase involve the construction of mini-batches for the policy gradient update and the integration of importance sampling. To maintain training stability, we re-center the importance weights around $1$. This normalization allows us to maintain a consistent gradient norm across experiments. Crucially, this simply multiplies the training objective by a constant.

\subsection{Importance Weighting with GRPO}
\label{app:iw_grpo}

Let $P(Q)$ be the distribution over prompts and $\pi_{\theta_\text{old}}(q)$ be the distribution over responses from policy $\pi_{\theta_\text{old}}$ given prompt $q$. The objective contribution of a prompt $q$ and response $o$ is as follows:
$$
\ell(o;\theta, q) = \frac{1}{|o|}
\sum_{t=1}^{|o|}
\min\left(
\frac{\pi_\theta(o_{t} \mid q, o_{<t})}{\pi_{\theta_{\mathrm{old}}}(o_{t} \mid q, o_{<t})} A^{(t)},
\mathrm{clip}\left(\frac{\pi_\theta(o_{t} \mid q, o_{<t})}{\pi_{\theta_{\mathrm{old}}}(o_{t} \mid q, o_{<t})}, 1-\epsilon, 1+\epsilon\right) A^{(t)}
\right)
 - \beta\mathbb{D}_{\mathrm{KL}}(\pi_\theta \Vert \pi_\text{ref}).
$$
The GRPO objective for parameters $\theta$ is:
\begin{equation}\label{eq:grpo}
\begin{aligned}
\mathcal{J}_{\mathrm{GRPO}}(\theta)
= \;&
\mathbb{E}_{\substack{
q \sim P(Q) \\
\{o_i\}_{i=1}^M \sim \pi_{\theta_{\mathrm{old}}}(q)
}}
\Bigg[
\frac{1}{M}
\sum_{g=1}^M
\,\ell\!\left(o_{i};\theta\right)\Bigg].
\end{aligned}
\end{equation}
Given $n$ prompts and $M$ rollouts per prompt, and letting $o_{i, g}$ be the $g$'th rollout from the $i$'th prompt, the empirical objective is:
\begin{equation}
\hat{\mathcal{J}}_{\mathrm{GRPO}}(\theta)
=
\frac{1}{nM}
\sum_{i=1}^n
\sum_{g=1}^M
\ell\!\left(o_{i,g};\theta, q_i \right).
\end{equation}
In effect, the empirical GRPO objective is a finite sum objective over prompts and rollouts. If we sample according to another distribution $p \in \Delta^{nM}$, where we sample rollout pair $(q_i, o_{i, g})$ with probability $p_{(i,  g)}$, the importance-weighted objective is:
\begin{equation}
\hat{\mathcal{J}}^{(p)}_{\mathrm{GRPO}}(\theta)
=
\frac{1}{nM}
\sum_{i=1}^n
\sum_{g=1}^M
\frac{1}{p_{(i, g)}}
\,\ell\!\left(o_{i,g};\theta, q_i \right).
\end{equation}

\newpage
\subsection{Hyperparameters}
Table~\ref{tab:hyperparameters_qwen1_5} summarizes the key hyperparameters used for both the GRPO baseline and our GRPO + ZVF method for the 1.5B experiments. For the Qwen3-8B experiments, we detail hyperparameters in the Table~\ref{tab:hyperparameters_qwen3}.  We report Scale-RL hyperparameters in Table~\ref{tab:hyperparameters_scalerl}, largely influenced by the findings of~\citep{khatri2025art}.

\begin{table*}[htbp]
\centering
\renewcommand{\arraystretch}{1.2}
\begin{tabular}{l l}
\hline
\textbf{Hyperparameter} & \textbf{Value} \\
\hline
Base Model & \texttt{Qwen/Qwen2.5-Math-1.5B-Instruct} \\
Algorithm & GRPO \\
Learning Rate & $1 \times 10^{-6}$ \\
Weight Decay & 0.1 \\
Train Batch Size & 256 \\
Rollout Samples per Prompt ($n$) & 16 \\
Rollout Temperature & 1.0 \\
PPO Mini-Batch Size & 64 \\
KL Loss Coefficient & 0.001 \\
Clip Ratio Low & 0.2 \\
Clip Ratio High & 0.28 \\
\hline
\end{tabular}
\caption{Key Training Hyperparameters for 1.5B training, GRPO and GRPO + ZVF.}
\label{tab:hyperparameters_qwen1_5}
\end{table*}

\begin{table*}[htbp]
\centering
\renewcommand{\arraystretch}{1.2}
\begin{tabular}{l l}
\hline
\textbf{Hyperparameter} & \textbf{Value} \\
\hline
Base Model & \texttt{Qwen/Qwen3-4B} \\
Algorithm & GRPO \\
Learning Rate & $1 \times 10^{-6}$ \\
Total Epochs & 15 \\
Train Batch Size & 512 \\
PPO Mini-Batch Size & 256 \\
Rollout Samples per Prompt ($n$) & 8 \\
Rollout Temperature & 1.0 \\
Max Prompt Length & 1024 \\
Max Response Length & 4096 \\
KL Loss Coefficient & 0.001 \\
\hline
\end{tabular}
\caption{Key Training Hyperparameters for Qwen3-4B Training. }
\label{tab:hyperparameters_qwen3-4b}
\end{table*}

\begin{table*}[htbp]
\centering
\renewcommand{\arraystretch}{1.2}
\begin{tabular}{l l}
\hline
\textbf{Hyperparameter} & \textbf{Value} \\
\hline
Base Model & \texttt{Qwen/Qwen3-8B} \\
Algorithm & GRPO \\
Learning Rate & $1 \times 10^{-6}$ \\
Total Epochs & 15 \\
Train Batch Size & 512 \\
PPO Mini-Batch Size & 256 \\
Rollout Samples per Prompt ($n$) & 5 \\
Rollout Temperature & 1.0 \\
Max Prompt Length & 1024 \\
Max Response Length & 4096 \\
KL Loss Coefficient & 0.001 \\
\hline
\end{tabular}
\caption{Key Training Hyperparameters for Qwen3-8B Training. }
\label{tab:hyperparameters_qwen3}
\end{table*}

\begin{table*}[htbp]
\centering
\renewcommand{\arraystretch}{1.2}
\begin{tabular}{l l}
\hline
\textbf{Hyperparameter} & \textbf{Value} \\
\hline
Base Model & \texttt{Qwen/Qwen2.5-Math-1.5B-Instruct} \\
Advantage Estimator & GRPO \\
Policy Loss Type & CISPO \\
Learning Rate & $5 \times 10^{-7}$ \\
LR Warmup Steps & 100 \\
Weight Decay & 0.01 \\
Train Batch Size & 512 \\
PPO Mini-Batch Size & 128 \\
Gradient Clipping & 1.0 \\
KL Loss Coefficient & 0.001 \\
Entropy Coefficient & 0 \\
Clip Ratio Low & 0.0 \\
Clip Ratio High & 5.0 \\
Clip Ratio C & 3.0 \\
LM Head Precision & FP32 \\
\hline
\end{tabular}
\caption{Key Training Hyperparameters for ScaleRL/CISPO Optimized Script}
\label{tab:hyperparameters_scalerl}
\end{table*}

\newpage
\subsection{Computational Requirements}
All experiments were conducted using the hardware configurations standard to the Verl framework. The training and evaluation durations are as follows:
\begin{itemize}
    \item \textbf{GRPO:} Approximately 48 hours.
    \item \textbf{GRPO + ZVF:} Approximately 72 hours.
    \item \textbf{Scale RL: } Approximately 72 hours.
\end{itemize}
For the Qwen3-4B experiments, we used $8$ NVIDIA H100 GPUs per training run. The training runs required around $2$ days each. 

For the Qwen3-8B experiments, we used $8$ NVIDIA H100 GPUs per training run. The training runs required around $5$ days each. 

\newpage
\subsection{Ablation on variants of $p^*$}
\label{app:p_star_ablation}
In this section we ablate the different variants of $p^*$. In particular, we also report $p^*_\text{smooth}$ for $\alpha=0.05, \alpha=0.1$. Furthermore, we consider $p^*_\text{LEN}$, in which $(p^*_\text{LEN})_i = \frac{1}{\sqrt{c_i}}$. This sampling distribution only considers the length of sequences in its distribution. We report these variants both for GRPO and GRPO+ZVF. In particular with the GRPO setting, we see that $p^*_\text{LEN}$ does not attain very strong performance. We hypothesize that this is because many shorter responses have $0$ advantage, and hence there is little training signal. As in the main paper, we also report final performance for all methods. 

\begin{figure}[htbp]
    \centering
    \begin{subfigure}[b]{0.49\textwidth}
        \centering
        \includegraphics[width=\textwidth]{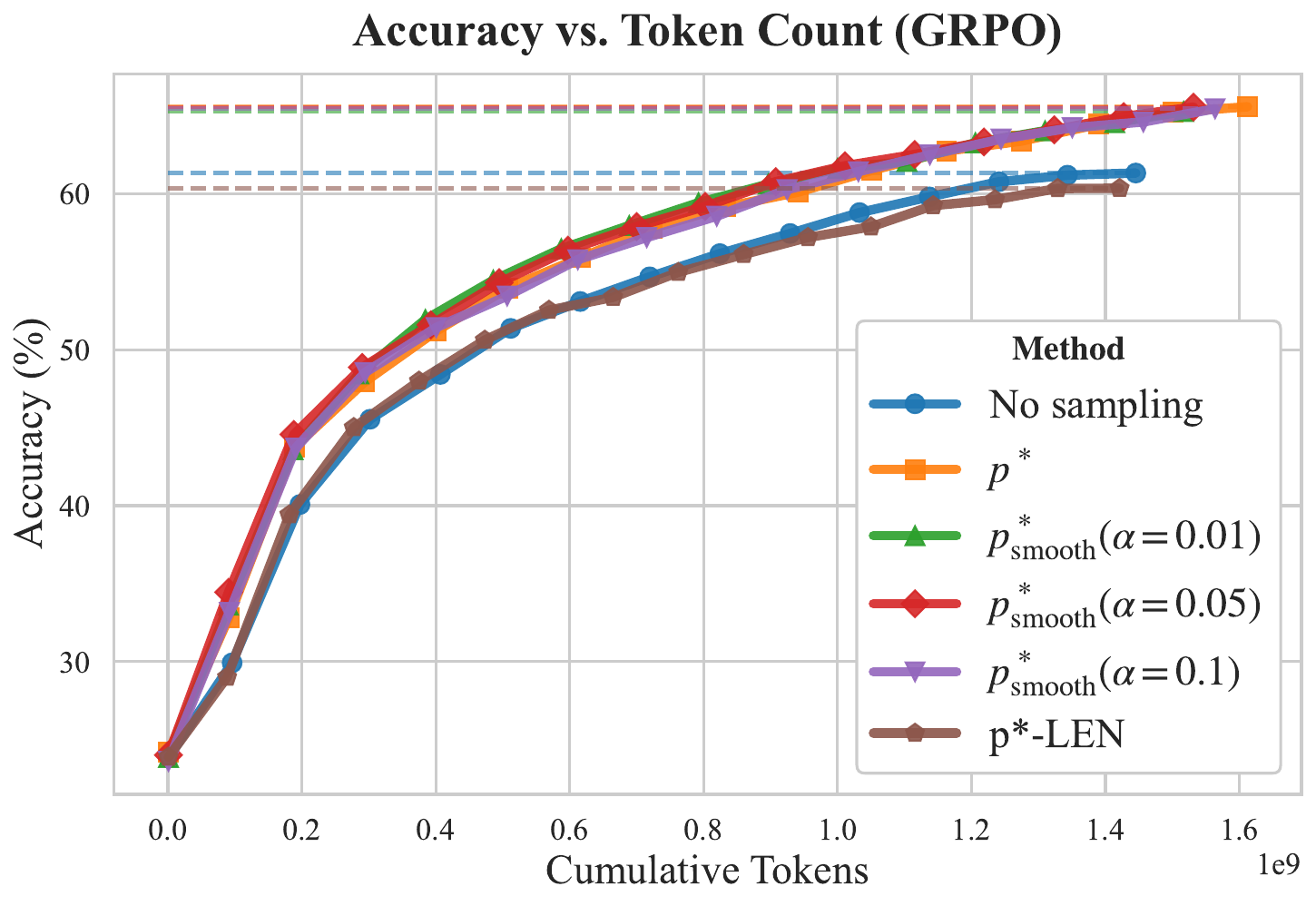}
    \end{subfigure}
    \hfill
    \begin{subfigure}[b]{0.49\textwidth}
        \centering
        \includegraphics[width=\textwidth]{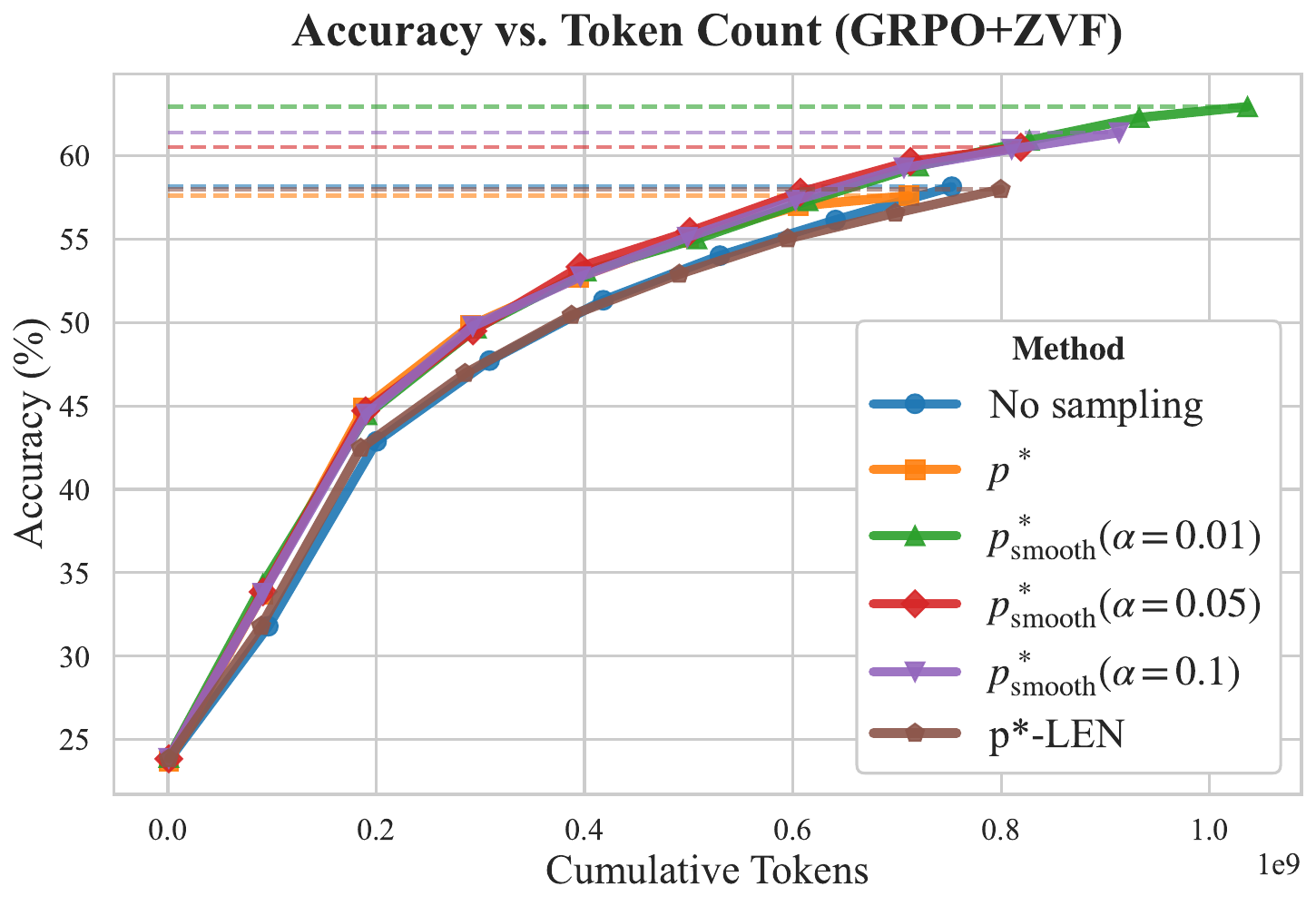}
    \end{subfigure}
    \caption{Cumulative tokens compared with AIME \texttt{pass@1/mean@32} accuracy throughout training for both GRPO and GRPO+ZVF methods.}
\end{figure}

\begin{table*}[htbp]
\centering
\begin{tabular}{llccccc}
\toprule
Setting & Method & AIME & AMC & MATH500 & GSM8K & Avg. Accuracy \\
\midrule
GRPO & No sampling & 61.3 & 64.1 & 73.2 & \textbf{86.2} & 71.2 \\
GRPO & $p^*$ & \textbf{65.6} & \textbf{71.2} & 73.2 & 86.0 & \textbf{74.0} \\
GRPO & $p^*_{\text{smooth}}(\alpha = 0.01)$ & 65.3 & 68.1 & \textbf{72.3} & 86.0 & 72.9 \\
GRPO & $p^*_{\text{smooth}}(\alpha = 0.05)$ & 65.6 & 68.3 & 72.8 & 86.1 & 73.2 \\
GRPO & $p^*_{\text{smooth}}(\alpha = 0.1)$ & 65.4 & 66.0 & 72.6 & 85.8 & 72.5 \\
GRPO & $p^*$-LEN & 60.3 & 64.8 & 73.0 & 86.0 & 71.0 \\
\midrule
ZVF & No sampling & 58.1 & 64.7 & 73.2 & 86.1 & 70.6 \\
ZVF & $p^*$ & 57.6 & 63.0 & 72.9 & 85.9 & 69.8 \\
ZVF & $p^*_{\text{smooth}}(\alpha = 0.01)$ & 62.9 & 68.0 & 73.3 & 85.8 & 72.5 \\
ZVF & $p^*_{\text{smooth}}(\alpha = 0.05)$ & 60.5 & 65.9 & 73.0 & 86.4 & 71.4 \\
ZVF & $p^*_{\text{smooth}}(\alpha = 0.1)$ & 61.4 & 68.3 & 72.9 & 86.0 & 72.2 \\
ZVF & $p^*$-LEN & 58.0 & 63.3 & 73.9 & 86.2 & 70.3 \\
\bottomrule
\end{tabular}
\caption{(Math-1.5B-Instruct, \texttt{pass@1/mean@32}): Complete results.}
\label{tab:perf_merged_complete}
\end{table*}

\begin{table*}[htbp]
\centering
\begin{tabular}{llccccc}
\toprule
Setting & Method & AIME & AMC & MATH500 & GSM8K & Avg. Accuracy \\
\midrule
GRPO & No sampling & 58.3 & 72.8 & 87.0 & 96.0 & 78.5 \\
GRPO & $p^*$ & 58.5 & 74.7 & 86.7 & 95.2 & 78.8 \\
GRPO & $p^*_{\text{smooth}}(\alpha = 0.01)$ & 58.5 & 74.0 & 86.7 & 94.9 & 78.5 \\
GRPO & $p^*_{\text{smooth}}(\alpha = 0.05)$ & 58.3 & 73.6 & 86.6 & 95.6 & 78.5 \\
\bottomrule
\end{tabular}
\caption{(8B-Base, \texttt{pass@1/mean@32}): Complete results.}
\label{tab:8b_complete}
\end{table*}

\newpage
\subsection{Full CISPO results}
\label{app:cispo_results}
We report performance across benchmarks. The CISPO curves are reported in Figure~\ref{fig:qwen1_5_scale_rl_all} and the best checkpoint performance is reported in Table~\ref{tab:perf_cispo}.

\begin{figure}[htbp]
    \centering
    \includegraphics[width=0.6\linewidth]{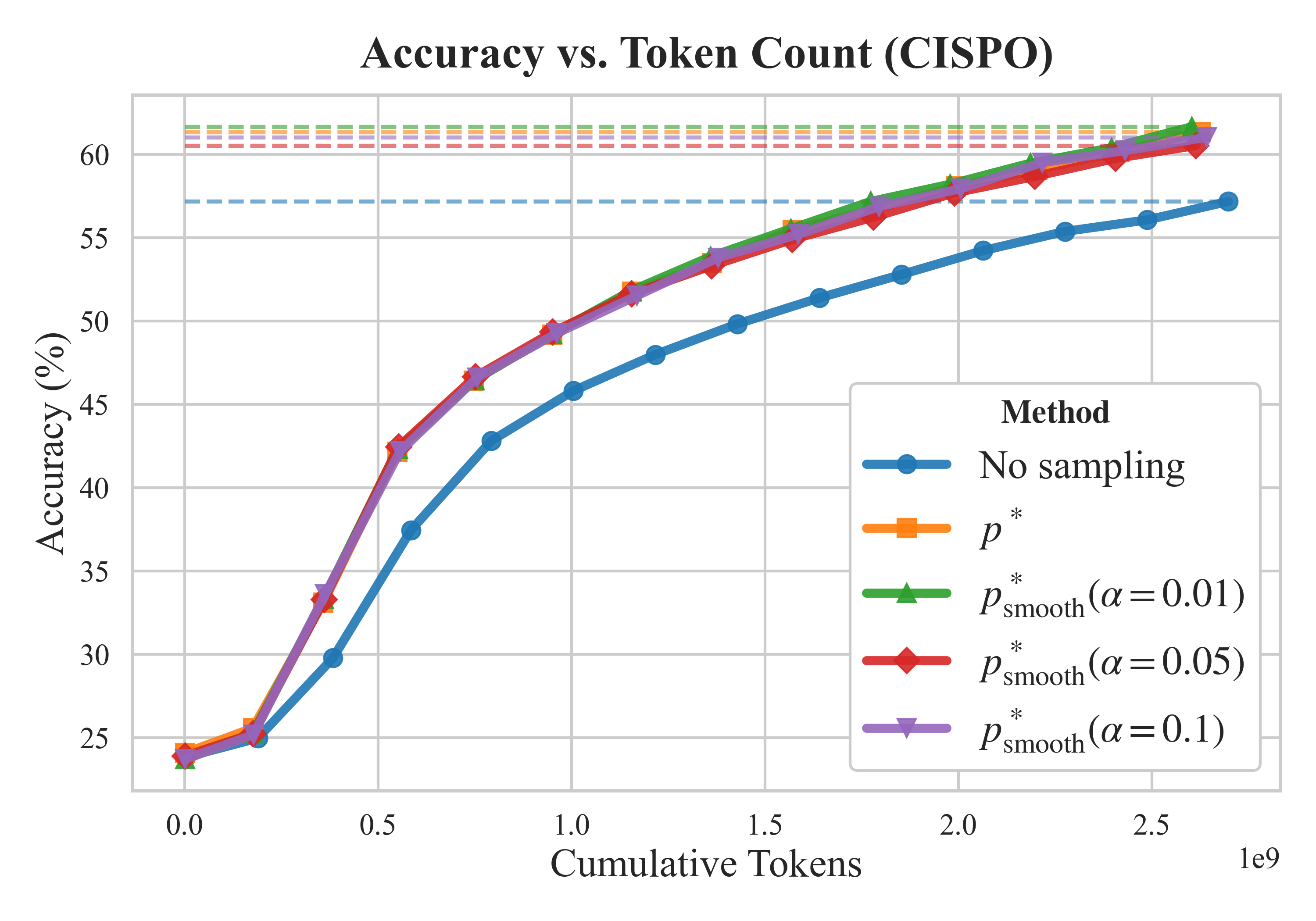}
    \caption{Full CISPO objective results for Qwen2.5-Math-1.5B-Instruct on AIME. We see robustness to smoothed cost-aware learning.}
    \label{fig:qwen1_5_scale_rl_all}
\end{figure}

\begin{table*}[htbp]
\centering
\begin{tabular}{llccccc}
\toprule
Setting & Method & AIME & AMC & MATH500 & GSM8K & Avg. Accuracy \\
\midrule
GRPO & No sampling & 57.2 & 61.8 & 73.0 & 85.9 & 69.5 \\
GRPO & $p^*$ & 61.3 & 65.7 & 73.6 & 86.2 & 71.7 \\
GRPO & $p^*_{\text{smooth}}(\alpha = 0.01)$ & 61.7 & 65.4 & 73.5 & 86.0 & 71.6 \\
GRPO & $p^*_{\text{smooth}}(\alpha = 0.05)$ & 60.5 & 66.0 & 73.0 & 86.1 & 71.4 \\
GRPO & $p^*_{\text{smooth}}(\alpha = 0.1)$ & 61.0 & 65.2 & 73.6 & 86.0 & 71.5 \\
\bottomrule
\end{tabular}
\caption{(Math-1.5B-Instruct, \texttt{pass@1/mean@32}): Complete results for CISPO.}
\label{tab:perf_cispo}
\end{table*}

\newpage
\subsection{Full sub-optimality results}
\label{app:suboptimality}

In the main paper, we report the sub-optimality metrics for the 8B and 1.5B models under the GRPO setting. Here, we also include the sub-optimality metrics for the GRPO+ZVF setting and the CISPO settings in Figures~\ref{fig:qwen1_5_zvf_subopt} and~\ref{fig:qwen1_5_scale_rl_subopt} respectively.

\begin{figure}[htbp]
    \centering
    \begin{minipage}{0.48\textwidth}
        \centering
        \includegraphics[width=\linewidth]{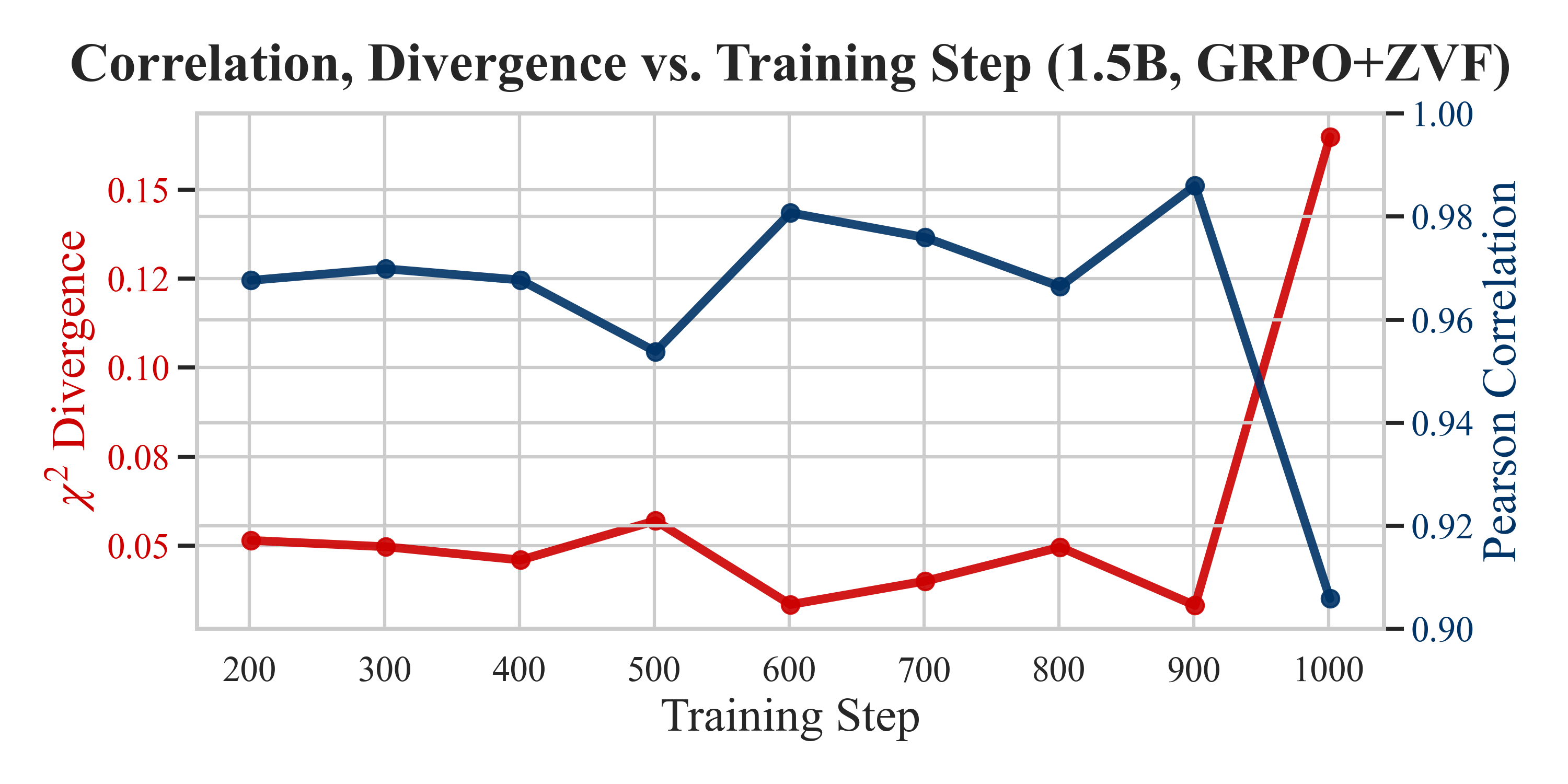}
        \caption{Sub-optimality metrics for 1.5B GRPO+ZVF training run.}
        \label{fig:qwen1_5_zvf_subopt}
    \end{minipage}
    \hfill 
    \begin{minipage}{0.48\textwidth}
        \centering
        \includegraphics[width=\linewidth]{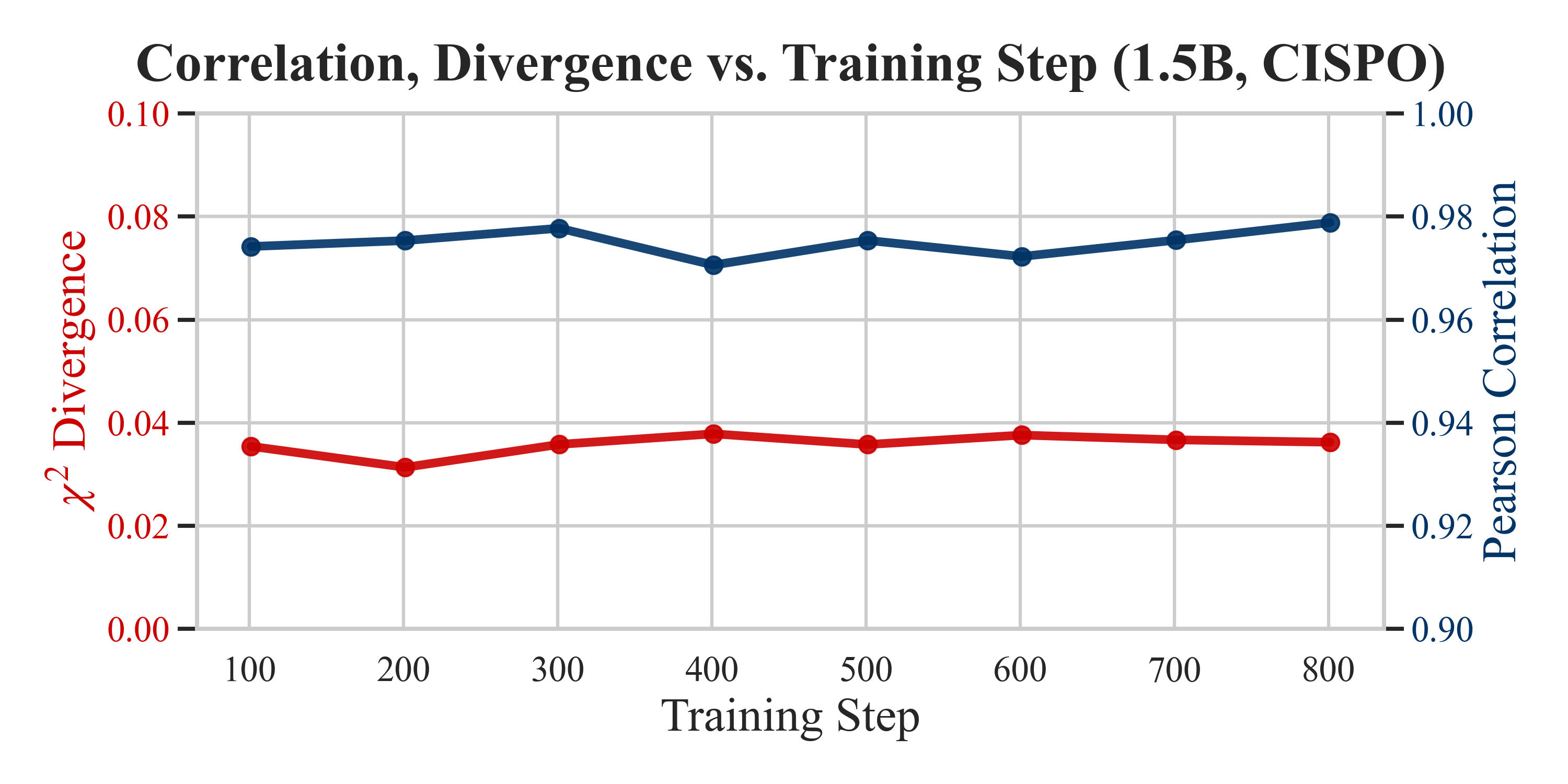}
        \caption{Sub-optimality metrics for 1.5B CISPO training objective run. }
        \label{fig:qwen1_5_scale_rl_subopt}
    \end{minipage}
\end{figure}

\begin{figure}[htbp]
    \centering
    \includegraphics[width=0.5\linewidth]{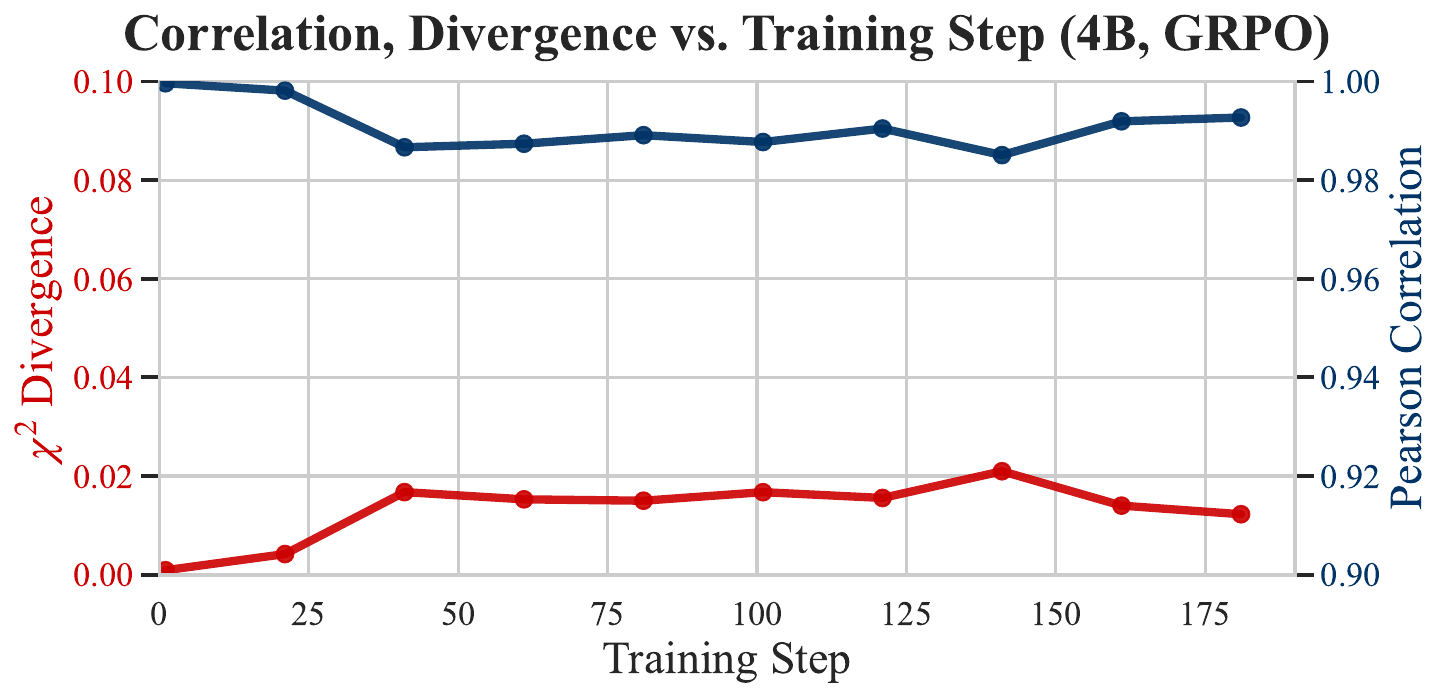}
    \caption{Sub-optimality metrics for 4B run.}
    \label{fig:4b_correlation}
\end{figure}

\newpage
\subsection{Additional results for Qwen2.5-Math-1.5B-Instruct}
We report AMC performance as a function of cumulative tokens in Figure~\ref{fig:qwen_1_5_amc_results}. 
\begin{figure*}[htbp]
    \centering
    \begin{subfigure}[b]{0.49\textwidth}
        \centering
        \includegraphics[width=\textwidth]{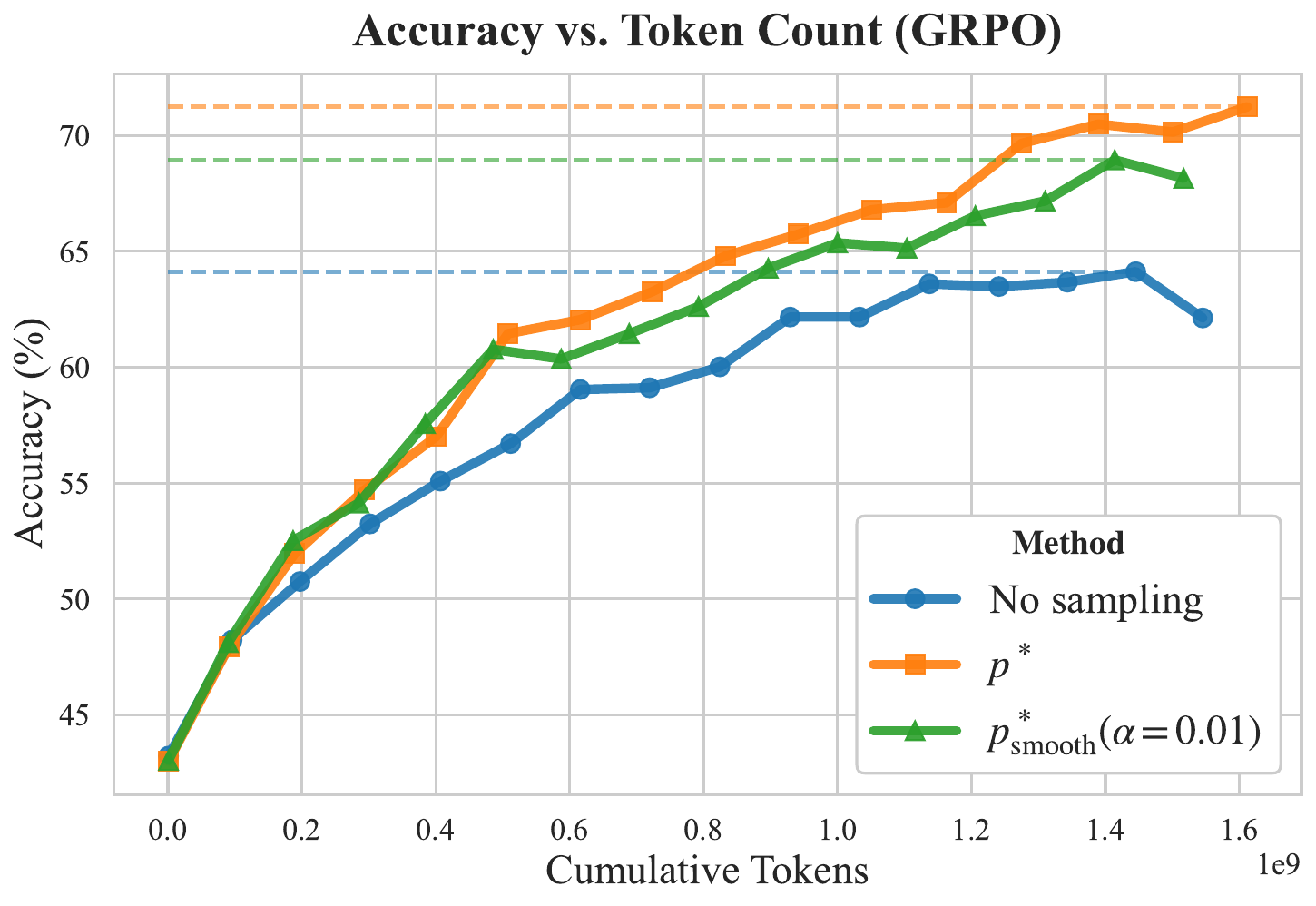}
    \end{subfigure}
    \hfill
    \begin{subfigure}[b]{0.49\textwidth}
        \centering
        \includegraphics[width=\textwidth]{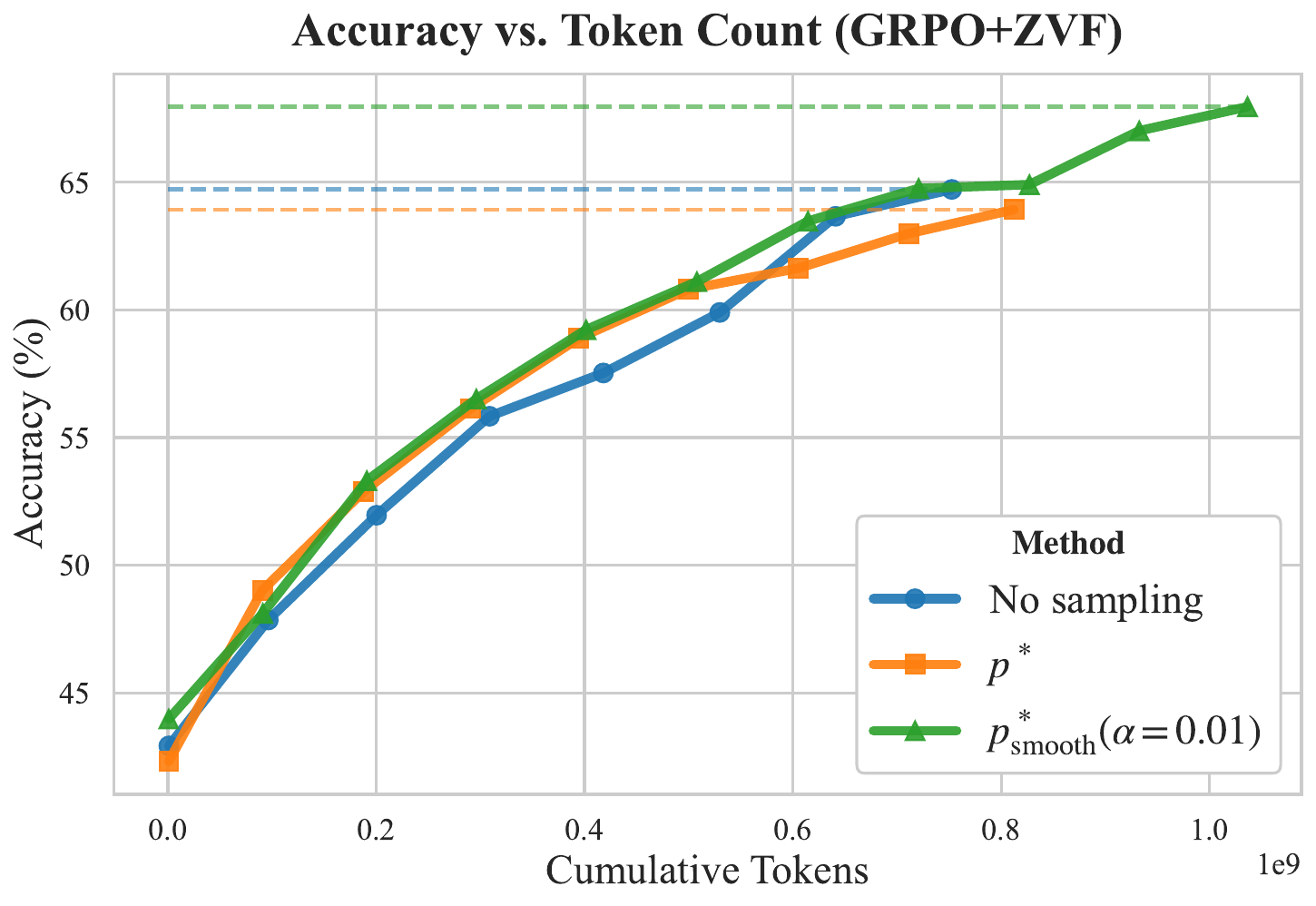}
    \end{subfigure}
    \caption{Cumulative tokens compared with AMC \texttt{pass@1/mean@32} accuracy throughout training for both GRPO and GRPO+ZVF settings for the 1.5B model. }
    \label{fig:qwen_1_5_amc_results}
\end{figure*}



\newpage 
\subsection{Additional results for Qwen3-4B}

We report results for Qwen3-4B on AIME and AMC below. We plot up until peak accuracy is attained.  We also report smoothed variants.

\begin{figure}[htbp]
    \centering
    \includegraphics[width=0.75 \linewidth]{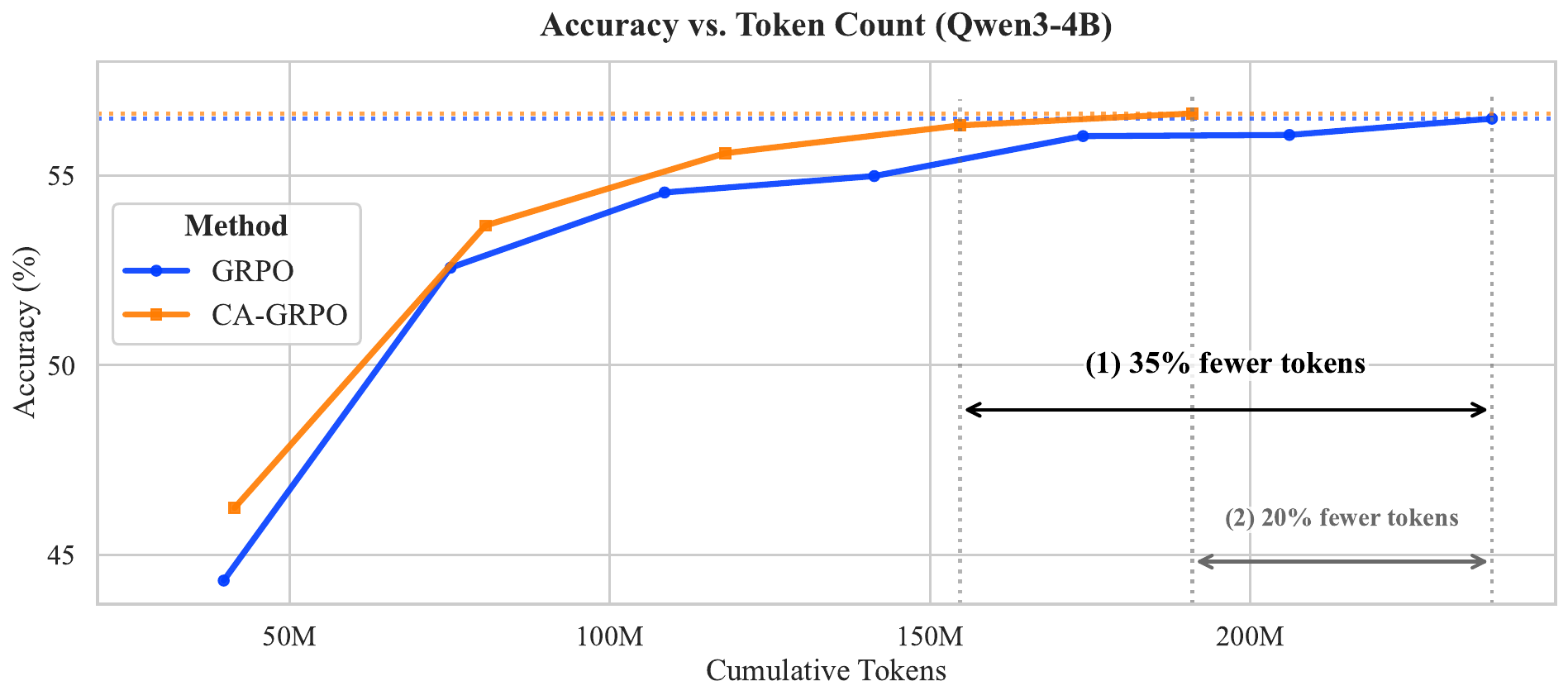}
    \caption{Qwen3-4B on AIME.}
    \label{fig:qwen3_4b}
\end{figure}

\begin{figure}[htbp]
    \centering
    \includegraphics[width=0.75\linewidth]{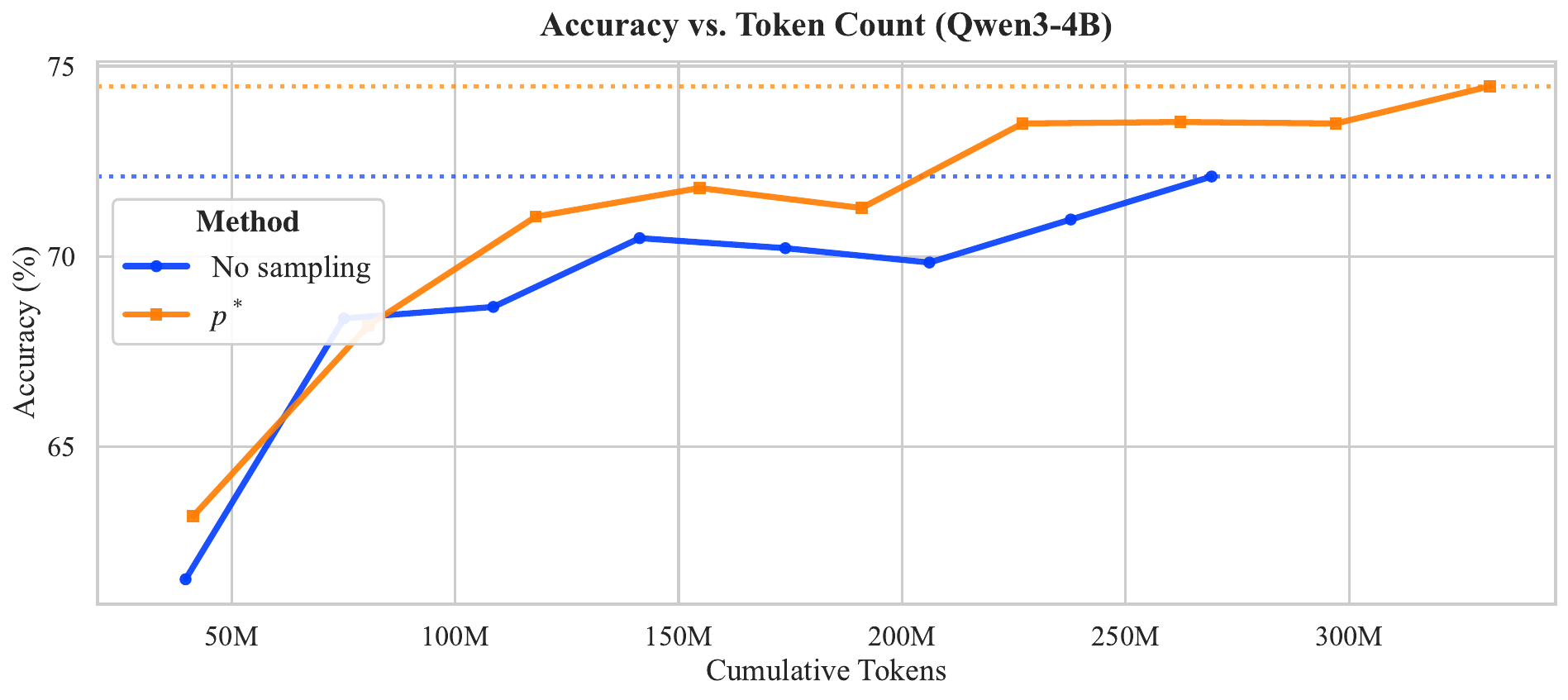}
    \caption{Qwen3-4B on AMC.}
    \label{fig:qwen3-4b-amc}
\end{figure}

\begin{figure}[htbp]
    \centering
    \includegraphics[width=0.75\linewidth]{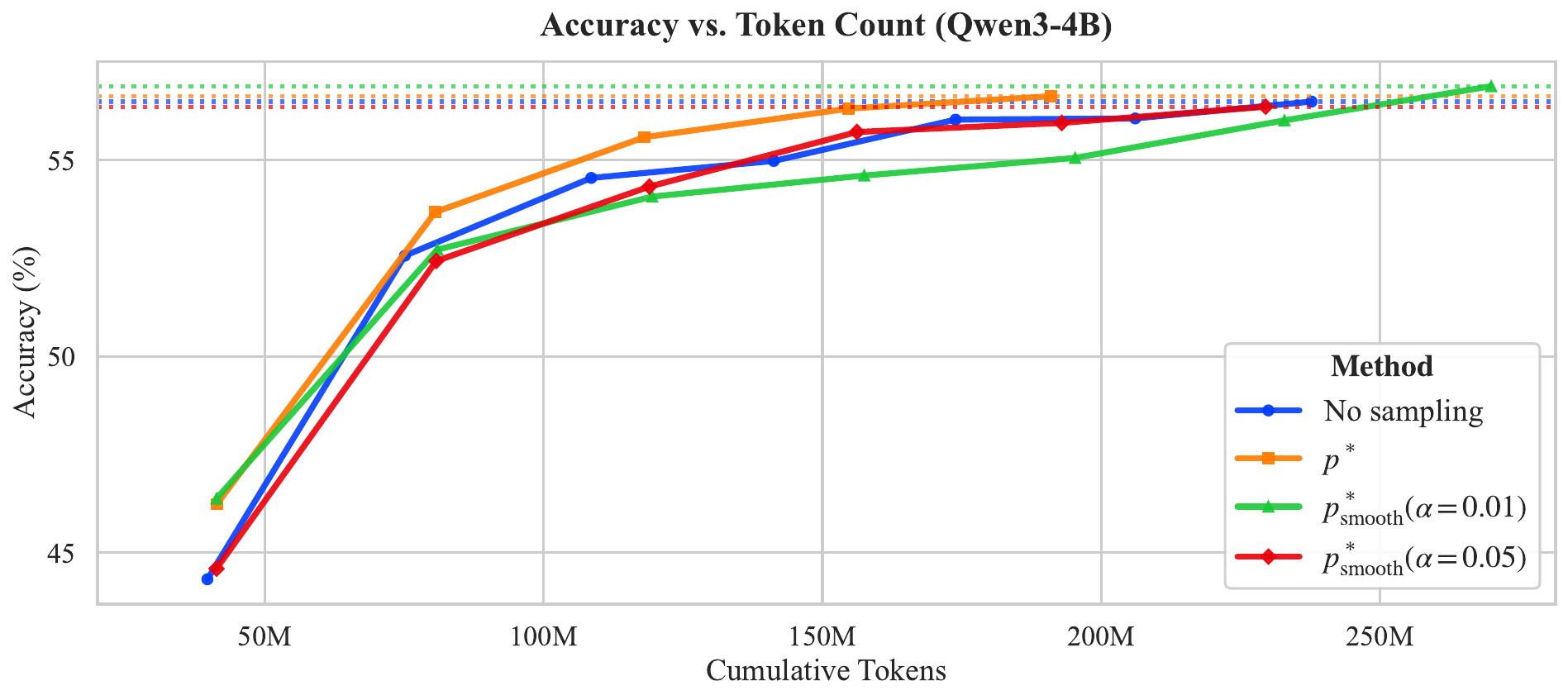}
    \caption{Qwen3-4B and smoothed variants on AIME.}
    \label{fig:qwen3-4b-all}
\end{figure}

\newpage
\subsection{Additional results for Qwen3-8B}
\label{app:8b_complete}
We present the results for the smoothed variants of the 8B model on AIME in Figure~\ref{fig:qwen3-8b-aime-all}. Additionally, we plot AMC performance as a function of cumulative tokens in Figure~\ref{fig:qwen3-8b-amc}. In Table~\ref{tab:8b_complete}, we report full results on best checkpoint accuracy for smoothed variants on four benchmarks. 

\begin{figure}[htbp]
    \centering
    \includegraphics[width=0.8\linewidth]{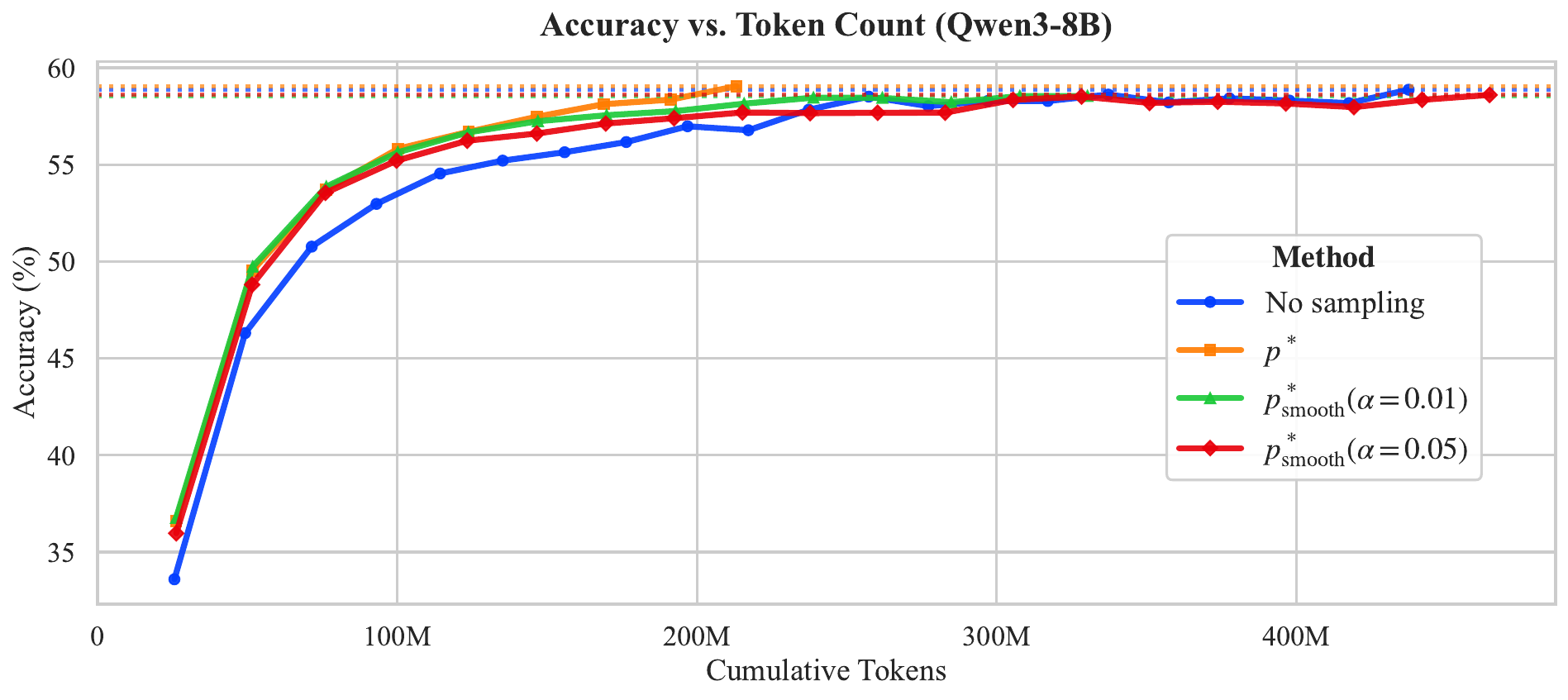}
    \caption{Qwen3-8B AIME results for all variants.}
    \label{fig:qwen3-8b-aime-all}
\end{figure}

\begin{figure}[htbp]
    \centering
    \includegraphics[width=0.8\linewidth]{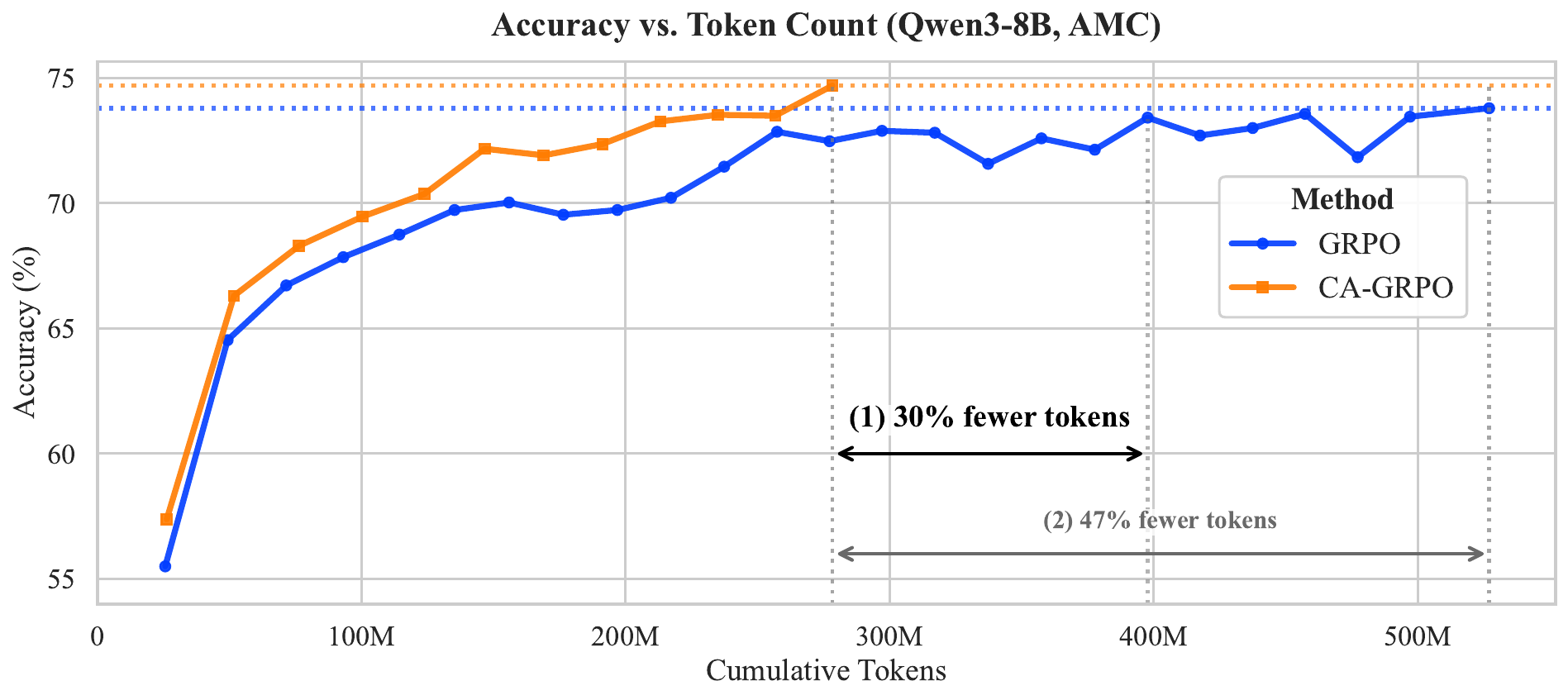}
    \caption{AMC results for Qwen3-8B Base.}
    \label{fig:qwen3-8b-amc}
\end{figure}

\newpage